\newif\ifsiam
\setlist[enumerate]{leftmargin=.5in}
\setlist[itemize]{leftmargin=.5in}
    \let\Cref\crtCref
    \let\cref\crtcref
\theoremstyle{nonumberplain}
\newtheorem{proofidea}{Proof idea}
\newtheorem{theorem}{Theorem}
\newtheorem{definition}[theorem]{Definition}
\newtheorem{lemma}[theorem]{Lemma}
\newtheorem{corollary}[theorem]{Corollary}
\newtheorem{assumption}[theorem]{Assumption}
\newtheorem{fact}[theorem]{Fact}
\newenvironment{proofidea}{\noindent{\textit{Proof idea.}}}{\hfill$\square$\medskip}
\numberwithin{theorem}{section}
\numberwithin{equation}{section}
\crefname{section}{Section}{Section}
\crefname{subsection}{Subsection}{Subsection}
\crefname{algocf}{alg.}{algs.}
\Crefname{algocf}{Algorithm}{Algorithms}
\crefname{algocfline}{alg.}{algs.}
\Crefname{algocfline}{Algorithm}{Algorithms}
\Crefname{equation}{}{}
\Crefname{ALC@unique}{Step}{Steps}
\DeclareMathOperator{\Real}{\mathbb{R}}
\DeclareMathOperator{\NatureNumber}{\mathbb{N}}
\newcommand{\abs}[1]{\lvert #1\rvert}
\newcommand{\Abs}[1]{\bigl\lvert #1 \bigr\rvert}
\newcommand{\bigabs}[1]{\bigl\lvert #1 \bigr\rvert}
\newcommand{\biggabs}[1]{\biggl\lvert #1 \biggr\rvert}
\newcommand{\Biggabs}[1]{\Biggl\lvert #1 \Biggr\rvert}
\newcommand{\Bigabs}[1]{\Bigl\lvert #1 \Bigr\rvert}
\newcommand{\inner}[2]{\langle #1, #2\rangle}
\newcommand{\norm}[1]{\lVert #1 \rVert}
\newcommand{\biggnorm}[1]{\biggl\lVert #1 \biggr\rVert}
\newcommand{\snorm}[1]{\lVert #1 \rVert}
\newcommand{\enorm}[1]{{\snorm{#1}}_2}
\DeclareMathOperator{\spanop}{span}
\DeclareMathOperator{\rank}{rank}
\DeclareMathOperator{\vecop}{vec}
\DeclareMathOperator{\diag}{diag}
\DeclareMathOperator{\prob}{\mathbb{P}}
\DeclareMathOperator{\pr}{\mathbb{P}}
\DeclareMathOperator{\expectation}{\mathbb{E}}
\DeclareMathOperator{\var}{Var}
\DeclareMathOperator{\cov}{cov}
\DeclareMathOperator{\dist}{dist}
\DeclareMathOperator{\event}{\mathcal{E}}
\DeclareMathOperator{\sphere}{\mathcal{S}}
\DeclareMathOperator{\proj}{proj}
\newcommand{\RR}{\mathbb{R}}
\newcommand{\eps}{\varepsilon}
\DeclareMathOperator{\poly}{poly}
\newcommand{\krank}{\operatorname{K-rank}}
\newcommand{\rkrank}[2]{\operatorname{K-rank}_{#1}(#2)}
\newcommand{\fnorm}[1]{{\norm{#1}}_F}
\newcommand{\sfnorm}[1]{{\snorm{#1}}_F}
\newcommand{\suchthat}{\mathrel{:}}
\newcommand{\RN}[1]{\textup{\uppercase\expandafter{\romannumeral#1}}}
\newcommand{\hfrac}[2]{{#1}/{#2}}
\newcommand{\giventhat}{\mid}
\newcommand{\ev}[1]{\mathcal{#1}}
\newcommand{\lowerr}{R_1}
\newcommand{\upperr}{R_2}
\newcommand{\eventA}{\ev{E}}
\def\final{1}  
\newcommand{\lnote}[1]{[{\small Luis: \bf #1}]}
\newcommand{\hnote}[1]{[{\small Haolin: \bf #1}]}
\newcommand{\anonnote}[1]{[{\small anon: \bf #1}]}
\newcommand{\sidecomment}[1]{\marginpar{\tiny #1}}
\newcommand{\details}[1]{{\color{blue}\ [[#1]] }}
\newcommand{\lnote}[1]{}
\newcommand{\hnote}[1]{}
\newcommand{\anonnote}[1]{}
\newcommand{\sidecomment}[1]{}
\newcommand{\details}[1]{}
\title{Overcomplete order-3 tensor decomposition, blind deconvolution and Gaussian mixture models\thanks{Submitted to the editors DATE\funding{National Science Foundation Grants CCF-1657939, CCF-1422830, CCF-2006994 and CCF-1934568}}}
\author{Haolin Chen\thanks{Department of Mathematics, University of California, Davis (\email{hlnchen@ucdavis.edu}), (\email{lrademac@ucdavis.edu}).}\and Luis Rademacher\footnotemark[2]}
\title{Overcomplete order-3 tensor decomposition, blind deconvolution and Gaussian mixture models}
\date{}
\author{Haolin Chen\\ University of California, Davis \\hlnchen@ucdavis.edu\and Luis Rademacher \\University of California, Davis\\lrademac@ucdavis.edu}
\begin{document}

\maketitle

\begin{abstract}
We propose a new algorithm for tensor decomposition, based on Jennrich's algorithm, and apply our new algorithmic ideas to blind deconvolution and Gaussian mixture models.
Our first contribution is a simple and efficient algorithm to decompose certain symmetric overcomplete order-3 tensors, that is, three dimensional arrays of the form $T = \sum_{i=1}^n a_i \otimes a_i \otimes a_i$ where the $a_i$s are not linearly independent.
Our algorithm comes with a detailed robustness analysis.
Our second contribution builds on top of our tensor decomposition algorithm to expand the family of Gaussian mixture models whose parameters can be estimated efficiently. 
These ideas are also presented in a more general framework of blind deconvolution that makes them applicable to mixture models of identical but very general distributions, including all centrally symmetric distributions with finite 6th moment.
\end{abstract}

\ifsiam
\begin{keywords}
    Tensor decomposition, blind deconvolution, Gaussian mixture models
\end{keywords}

\begin{AMS}
15A69, 62H30, 68T09, 68W20
\end{AMS}
\fi

\section{Introduction}


Tensor decomposition is a basic tool in data analysis.
The \emph{order-3 tensor decomposition problem}\footnote{``Tensor decomposition'' here is a shorthand for a specific kind of tensor decomposition sometimes called tensor rank decomposition or canonical polyadic decomposition.} can be stated as follows:
Given an order-3 tensor $T = \sum_{i=1}^n a_i \otimes a_i \otimes a_i$, recover the vectors $a_i \in \RR^d$.
The problem is \emph{undercomplete} if the $a_i$s are linearly independent, otherwise it is \emph{overcomplete}.\details{note that this notions seem to depend on the $a_i$s and not just on $T$, but Kruskal guarantees that they only depend on $T$. Or does it?}
Two problems in data analysis motivate us here to study tensor decomposition:
blind deconvolution and Gaussian mixture models (GMM).

A \emph{deconvolution} problem can be formulated as follows: 
We have a $d$-dimensional random vector 
\begin{equation}\label{equ:deconvolution}
Y = Z + \eta 
\end{equation}
where $Z$ and $\eta$ are independent random vectors.
Given samples from $Y$, the goal is to determine the distribution of $Z$.
We call it \emph{blind deconvolution} when the distribution of $\eta$ is unknown, otherwise it is \emph{non-blind}.
It is called \emph{deconvolution} because the distribution of $Y$ is the convolution of the probability distributions of $Z$ and $\eta$.

The following \emph{mixture model parameter estimation problem} can be recast as a blind deconvolution problem:
Let $X$ be a $d$-dimensional random vector distributed as the following mixture model: First sample $i$ from $[n]$, each value with probability $w_i$ ($w_i > 0$, $\sum_i w_i = 1$), then let $X = \mu_i + \eta$, where $\eta$ is a given $d$-dimensional random vector and $\mu_i \in \RR^d$.
The estimation problem is to estimate $\mu_i$s and $w_i$s from samples of $X$.
It is a deconvolution problem $X = Z + \eta$ when $Z$ follows the discrete distribution equal to $\mu_i$ with probability $w_i$.
It is blind when the distribution of $\eta$ is unknown.

The \emph{GMM parameter estimation problem} can be described as follows:
Let $X \in \RR^n$ be a random vector with density function $x \mapsto \sum_{i=1}^k w_i f_i(x)$ where $w_i > 0$, $\sum_i w_i = 1$ and $f_i$ is the Gaussian density function with mean $\mu_i \in \RR^n$ and covariance matrix $\Sigma_i \in \RR^{n \times n}$.  
GMM parameter estimation is the following algorithmic question: 
Given iid.\ samples from $X$, estimate $w_i$s, $\mu_i$s and $\Sigma_i$s.

The GMM parameter estimation problem is a deconvolution problem when the covariance matrices of the components are the same, namely $\Sigma_i = \Sigma$.
Specifically, $X = Z + \eta$ where $Z$ follows a discrete distribution taking value $\mu_i$ with probability $w_i$, $i=1,\dotsc, k$ and $\eta$ is Gaussian with mean 0 and covariance $\Sigma$.
It is blind if $\Sigma$ is unknown.

While the undercomplete tensor decomposition problem is well-understood (based on algorithmic techniques such as the tensor power method and Jennrich's algorithm \cite{harshman1970foundations}), the overcomplete regime is much more challenging \cite[Chapter 7]{MAL-057}.
Within the overcomplete case, there are fewer techniques available for the order-3 case than there are for higher order \cite[Section 7.3]{MAL-057}.
We discuss some of these techniques and challenges below (\cref{sec:relatedwork}).

\subsection{Our results}





\paragraph{Overcomplete tensor decomposition} 
We propose an algorithm, based on Jennrich's algorithm, that can recover the components $a_i \in \RR^d$ to within error $\eps$ given a symmetric order-3 tensor $T = \sum_{i=1}^{d+k} a_i \otimes a_i \otimes a_i$, when any $d$-subset of the $a_i$s is linearly independent, in time polynomial in $d^k$, $1/\eps^k$ and natural conditioning parameters.
Note that our goal is to show that the running time has polynomial dependence in that sense and the error has inverse polynomial dependence but we do not optimize the degrees of the polynomials.
Even though the algorithm is exponential in $k$, the case $k=1$ already makes possible a new GMM result (see below).
Our algorithm (\cref{alg: approx tensor decomposition}) and its analysis (\cref{thm:tensor decomposition main theorem}) are stronger than the informal statement above in two important ways:
It is robust in the sense that it approximates the $a_i$s even when the input is a tensor that is $\eps'$-close to $T$.
Also, it turns out that parameter $k$ above, the number of $a_i$s beyond the dimension $d$, is not the best notion of overcompleteness.
In our result the tensor is of the form $T = \sum_{i=1}^{r+k} a_i \otimes a_i \otimes a_i$, where $r$ is the robust Kruskal rank of $a_i$s (informally the maximum $r$ such that any $r$-subset is well-conditioned, \cref{def:krank}), so that $k$ is the number of components above the robust Kruskal rank.
Thus, our analysis also applies when the Kruskal rank is less than $d$.


\paragraph{Blind deconvolution}	
We provide an efficient algorithm for the following blind deconvolution problem: 
Approximate the distribution of $Z$ (from \eqref{equ:deconvolution}) when it is a $d$-dimensional discrete distribution supported on $d$ points satisfying a natural non-degeneracy condition (\cref{assum: linear independence}), the distribution of $\eta$ is unknown and the first and third moments of $\eta$ are 0 with finite 6th moment (this includes the natural case where $\eta$ has a centrally symmetric distribution with finite 6th moment).
Equivalently, it can solve the mixture model parameter estimation problem above under the same conditions (\cref{alg: mixture learning algorithm,thm: mixture learning main theorem}).

\paragraph{GMM} 
We show an efficient algorithm for the following GMM parameter estimation problem:
Given samples from a $d$-dimensional mixture of $d$ identical and not necessarily spherical Gaussians with unknown parameters $w_i$, $\mu_i$, $\Sigma$, estimate all parameters (\cref{alg: gaussian mixture algorithm,thm: GMM main theorem}).

It may seem as if the last two contributions (blind deconvolution and GMM) could be attacked with standard \emph{undercomplete} tensor decomposition techniques given that the number of components is equal to the ambient dimension and therefore they could be linearly independent. 
It is not clear how that could actually happen, as the non-spherical unknown covariance seems to make standard approaches inapplicable and our contribution is a formulation that involves an overcomplete tensor decomposition and uses our overcomplete tensor decomposition algorithm in an essential way.



\subsection{Related work}\label{sec:relatedwork}

Among basic tensor decomposition techniques for the undercomplete case we have tensor power iteration (see \cite{MAL-057} for example) and Jennrich's algorithm (\cite{harshman1970foundations}, also know as simultaneous diagonalization and rediscovered several times, with variations credited to \cite{MR1238921}).
Tensor power iteration is more robust than Jennrich's algorithm, while
Jennrich's algorithm can be applied more generally: Tensor power iteration is mainly an algorithm for orthogonal tensors (orthogonal $a_i$s) and the general case with additional information, while Jennrich's algorithm can decompose the general case without additional information. 
Our contributions below are based on Jennrich's algorithm because of this additional power.
The robustness of Jennrich's algorithm is studied in several papers; our analysis builds on top of \cite{goyal2014fourier, 10.1145/2591796.2591881}. 

For the overcomplete regime we have algorithms such as FOOBI \cite{DBLP:journals/tsp/LathauwerCC07} and the work of \cite{10.1145/2591796.2591881,DBLP:journals/corr/AnandkumarGJ14, DBLP:conf/colt/AnandkumarGJ15, DBLP:conf/approx/GeM15, DBLP:conf/focs/MaSS16, DBLP:conf/nips/0001M17, DBLP:conf/colt/HopkinsSS19}.

Many techniques for the overcomplete case only make sense for orders 4 and higher or have weaker guarantees in the order-3 case.
For example, some techniques use the fact that a $d\times d\times d\times d$ tensor can be seen as an $d^2 \times d^2$ matrix (and similarly for order higher than 4), while no equally useful operation is available for order-3 tensors.
Nevertheless, there are several results about decomposition in the order-3 case that are relevant to our work: Kruskal's uniqueness of decomposition \cite{MR444690}, a robust version of Kruskal's uniqueness and an algorithm running in time exponential in the number of components \cite{bhaskara2014uniqueness}, an algorithm for tensors with incoherent components \cite{DBLP:journals/corr/AnandkumarGJ14, DBLP:conf/colt/AnandkumarGJ15}, a quasi-polynomial time algorithm (based on sum of squares) for tensors with random components \cite{DBLP:conf/approx/GeM15} and polynomial time algorithms (also based on sum of squares) for tensors with random components \cite{hopkins2016fast, ma2016polynomial}. 
Among works closest to ours, in \cite{MR3206990, MR3573806} an algorithm that is efficient in the mildly overcomplete case is proposed for overcomplete order-3 tensor decomposition under natural non-degeneracy conditions.
Though our results have similar assumptions and computational cost compared to \cite{MR3206990, MR3573806}, our algorithm is comparatively a very simple randomized algorithm and we provide a rigorous robustness analysis.


Blind deconvolution-type problems have a long history in signal processing and specifically in image processing as a deblurring technique (see, e.g., \cite{5963691}).
The idea of using higher order moments in blind identification problems is standard too in signal processing, specifically in Independent Component Analysis (see e.g. \cite{Comon2010, 150113}). 
Our model \eqref{equ:deconvolution} is somewhat different but very natural and inspired by mixture models. 

With respect to GMMs, we are interested in parameter estimation in high dimension with no separation assumption (i.e., the means $\mu_i$ can be arbitrarily close).
Among the most relevant results in this context we have the following polynomial time algorithms: 
\cite{hsu2013learning}, for linearly independent means and spherical components (each $\Sigma_i$ is a multiple of the identity); 
\cite{anderson2014more}, for $O(d^c)$ components with identical and known covariance $\Sigma$; 
\cite[Section 7]{DBLP:journals/corr/GoyalVX13}, \cite{goyal2014fourier}, for linearly independent means and spherical components in the presence of Gaussian noise;
\cite{ge2015learning}, for a general GMM with $O(\sqrt{d})$ components in the sense of smoothed analysis. 
Our algorithm expands the family of GMMs for which efficient parameter estimation is possible. 
It does not require prior knowledge of the covariance matrix unlike \cite{anderson2014more} and can handle more components ($d$ components) than \cite{ge2015learning} at the price of assuming all covariance matrices are identical. 
With respect to recent results on clustering-based algorithms \cite{DBLP:journals/corr/abs-2005-06417,jia2020robustly}, we consider these works incomparable to ours since clustering-based algorithms typically require some separation assumptions in the parameters.

\section{Notation and preliminaries}\label{sec:preliminaries}
    
    For clarity of exposition we analyze our algorithms in a computational model where we assume arithmetic operations between real numbers take constant time.
    We use the notation $\poly(\cdot)$ to denote a fixed polynomial that is non-decreasing in every argument. 
    See \cite[Section 5.3]{DBLP:journals/corr/GoyalVX13} for a discussion of the complexity of Jennrich's algorithm.
    
    For $n\in\NatureNumber$, let $[n] = \{1,\dotsc n\}$. 
    The unit sphere in $\Real^d$ is denoted by $\sphere^{d-1}$.
    
    \paragraph{Matrices and vectors} For a matrix $A\in\Real^{m\times n}$, we denote by $\sigma_i(A)$ its $i$-th largest singular value, by $A^\dagger$ its Moore-Penrose pseudoinverse, and by $\kappa(A) = \sigma_1(A)/\sigma_{\min(m,n)}(A)$ its condition number. 
    Let $\vecop(A)\in\Real^{mn}$ denote the vector obtained by stacking all columns of $A$.
    Denote by $\diag(a)$ the diagonal matrix with diagonal entries from $a$, where $a$ is a (column) vector. 
    Let $\norm{\cdot}_2$ denote the spectral norm of a matrix and $\norm{\cdot}_F$, the Frobenius norm of a matrix.
    
    In $\Real^d$, we denote by $\inner{a}{b}$ the inner product of two vectors $a,b$. 
    Let $\hat a = a/\enorm{a}$. 
    For a set of vectors $\{a_1,a_2,\dotsc,a_n\}$, we denote their linear span by $\spanop\{a_1,\dotsc,a_n\}$. 
    We use $[a_1,a_2,\dotsc,a_n]$ to denote the matrix containing $a_i$s as columns.
    If $A = [a_1,a_2,\dotsc, a_n]$, we have $\hat A = [\hat a_1,\hat a_2,\dotsc, \hat a_n]$ and $\tilde A$ follows a similar definition. 
    We denote by $A_m\in\Real^{d\times m}$ the matrix $[a_1,a_2,\dotsc,a_m]$ for some $m<n$ and by $A_{>m}\in\Real^{d\times(n-m)}$ the matrix $[a_{m+1},\dotsc,a_n]$. We say the matrix $A$ is $\rho$-bounded if $\max_{i\in[n]}\norm{a_i}_2\leq \rho$.
    Given a vector $a\in\Real^d$ or a diagonal matrix $D\in\Real^{d\times d}$, for $r\in\Real$, notations $a^r$ and $D^r$ are used for entry-wise power.

\begin{definition}[\cite{MR444690,bhaskara2014uniqueness}]\label{def:krank}
Let $A \in \RR^{m \times n}$. 
The \emph{Kruskal rank} of $A$, denoted $\krank(A)$, is the maximum $k \in [n]$ such that any $k$ columns of $A$ are linearly independent.
Let $\tau > 0$.
The \emph{robust Kruskal rank (with threshold $\tau$)} of $A$, denoted $\rkrank{\tau}{A}$, is the maximum $k \in [n]$ such that for any subset $S \subseteq [n]$ of size $k$ we have $\sigma_k(A_S) \geq 1/\tau$. 
\end{definition}
\paragraph{Tensors}
For a symmetric order-3 tensor $T \in \RR^{d\times d \times d}$ and a vector $x \in \RR^d$, let $T_x$ denote the matrix $\sum_{i,j,k=1}^d T_{ijk} x_i e_j e_k \in \RR^{d \times d}$. 
Let $a^{\otimes3}$ be a shorthand for $a\otimes a\otimes a$. For a rank $n$ symmetric order-3 tensor $T =  \sum_{i=1}^n a_i^{\otimes3}$, we say the tensor $T$ is $\rho$-bounded if $\max_{i\in[n]}\norm{a_i}_2\leq \rho$.


\paragraph{Cumulants}
The cumulants of a random vector $X$ are a sequence of tensors related to the moment tensors of $X$: $K_1(X), K_2(X), K_3(X), \dotsc$
We only state the properties we need, see \cite{mccullagh2018tensor} for an introduction.
We have: $K_1(X) = \expectation[X], K_2(X) = \cov(X), K_3(X) = \expectation[(X - \expectation[X])^{\otimes3}]$. 
\details{Note that the first three cumulants coincide with the expectation, the covariance matrix and the third central moment of $X$.}
Cumulants have the property that for two independent random variables $X,Y$ we have $K_m(X+Y) = K_m(X) + K_m(Y)$. 
The first two cumulants of a standard Gaussian random vector are the mean and the covariance matrix, all subsequent cumulants are zero.

\paragraph{Jennrich's algorithm \cite{harshman1970foundations}} 
The basic idea of Jennrich's algorithm to decompose a symmetric order-3 tensor with linearly independent components $a_1, \dotsc, a_d \in \RR^d$ is the following: for random unit vectors $x, y \in \RR^d$, compute the (right) eigenvectors of $T_x T_y^{-1}$. 
With probability 1, the set of eigenvectors is equal to the set of directions of $a_i$s (the eigenvectors recover the $a_i$s up to sign and norm).
We use a version that allows for the number of $a_i$s to be less than $d$ and that includes an error analysis \cite{DBLP:journals/corr/GoyalVX13,goyal2014fourier}.
\details{
\paragraph{Error propagation} We will frequently encounter the situation where we have ``If $X\leq \eps$, then $Y$ is less than some polynomial in terms of $\eps, d$ and other related parameters". To track the error for underlying $Y$ within each step of our algorithm, we will use subscripted symbols to indicate which step or theorem the underlying symbol is referring to. For example, $\eps_{\ref*{lem: deflation}}$ refers to the error term used in \cref{lem: deflation}.
}

\section{Overcomplete order-3 tensor decomposition}

   We consider the problem of decomposing (recovering $a_i$s) a symmetric order-3 tensor $T \in \Real^{d\times d \times d}$ of rank $n$:
    \begin{equation}
        \label{equ: order-3 tensor}
        T = \sum_{i\in[n]} a_i^{\otimes3}.
    \end{equation}
    When the $a_i$s are linearly independent, Jennrich's algorithm efficiently recovers them given $T$.
    But it has no guarantees if the components are linearly dependent. 
    Our main idea for the linearly dependent case is: it is still possible that a large subset $\{a_1, \dotsc, a_r\}$ of components is linearly independent, so if we cancel out the other components, $\{a_{r+1}, \dotsc, a_{n}\}$, the residual tensor can be efficiently decomposed via Jennrich's algorithm.
    To cancel the other components, we search for a vector $x$ orthogonal to them so that $T_x$ only involves the linearly independent components.
    A random or grid search for an approximately orthogonal $x$ is efficient if the number of components to cancel out is small.
    
    For clarity, we now describe an idealized version of our algorithm as if we had two vectors $x, y$ that are exactly orthogonal to the other components.
    (The actual algorithm uses a random search to find $x,y$.)
    We also want $x$, $y$ to be \emph{generic} with this orthogonality property, so that they can also play the roles of $x$, $y$ in Jennrich's algorithm (see \cref{sec:preliminaries}).
    Specifically, the genericity here is that the eigenvalues of $T_x T_y^{-1}$ are distinct.
    In that case, the eigendecomposition of $T_x T_y^{-1}$ recovers the directions of $\{a_1, \dotsc, a_r\}$.
    Then, a linear system of equations provides the lengths of $\{a_1, \dotsc, a_r\}$.
    Once $\{a_1, \dotsc, a_r\}$ is recovered, the components of $T$ associated to them can be removed from $T$ (deflation) and Jennrich's algorithm can be applied a second time to the residual tensor to recover $\{a_{r+1}, \dotsc, a_{n}\}$.

    \subsection{Approximation algorithm and main theorem}
        \label{sec:main theorem}


        In the previous discussion we argued that given $x$, $y$ and $T$ with exact properties one can decompose $T$.
        In this subsection, we show that by repeatedly trying random choices we can find $x,y$ nearly orthogonal to $a_{r+1},\ldots,a_{r+k}$. 
        In practice, instead of the true tensor $T$, we usually have only an approximation $\tilde T$ of it and to be effective in this situation our algorithm comes with a robustness analysis that shows that if $\tilde T$ is close to $T$ then the output is close to the true components of $T$.         
        Our formal statements are \cref{alg: approx tensor decomposition} and \cref{thm:tensor decomposition main theorem}.

        \begin{algorithm}
            \caption{JENNRICH \cite[Algorithm ``Diagonalize"]{DBLP:journals/corr/GoyalVX13}}
            \label{alg: Jennrich}
            \begin{algorithmic}[1]
                \Statex{\textbf{Inputs:} $M_\mu, M_\lambda \in \Real^{d\times d}$, number of vectors $r$.}
                
                \State{compute the SVD of $M_\mu = VDU^\top$. Let $W$ be matrix whose columns are the left singular vectors (columns of $V$) corresponding to the top $r$ singular values;}
                
                \State{compute $M = (W^\top M_\mu W)(W^\top M_\lambda W)^{-1}$;}
                
                \State{compute the eigendecomposition: $M = P\Lambda P^{-1}$;}
                
                \Statex{\textbf{Outputs:} columns of $WP$.}
            \end{algorithmic}
        \end{algorithm}
        
        \begin{algorithm}
            \caption{Approximate tensor decomposition}
            \label{alg: approx tensor decomposition}
            \begin{algorithmic}[1]
                \Statex{\textbf{Inputs:} tensor $\tilde T \in \RR^{d\times d \times d}$, 
                error tolerance $\eps$, 
                tensor rank $n$, overcompleteness $k$, upper bound $M$ on $\norm{a_i}_2$ for $i\in[n]$. Let $r=n-k$ (Kruskal rank).}
                
                \Repeat
                    \State{pick $x,y$ iid.\ uniformly at random in $\sphere^{d-1}$;}
                    
                    \State{invoke \cref{alg: Jennrich} with $\tilde T_x, \tilde T_y$ and $r$. Denote the outputs by $\tilde a_i$ for $i\in[r]$;\label{line:decomposition}}
                    
                    \State{solve the least squares problem: $\min_{\xi_{1},\dotsc, \xi_{r}} \norm{\tilde A_r \diag(\xi_i\inner{x}{\tilde a_i})  \tilde A_r^\top - \tilde T_x}_2$.\label{line:leastsquares}}
                    
                    
                    \State{set $R = \tilde{T} - \sum_{i\in[r]}\xi_i\Tilde{a}_i^{\otimes3}$;}
                    
                    \State{pick $x',y'$ iid.\ uniformly at random in $\sphere^{d-1}$;}
                    
                    \State{invoke \cref{alg: Jennrich} with $R_{x'}, R_{y'}$ and $k$. Denote the outputs by $\tilde a_{r+i}$ for $i\in[k]$;\label{line:decomposition2}}
                    
                    \State{solve the least squares problem: $\min_{\xi_{r+1},\dotsc, \xi_{r+k}} \norm{\tilde A_{>r} \diag(\xi_{r+i} \inner{x'}{\tilde a_{r+i}}) \tilde A_{>r}^\top - R_{x'}}_2$. \label{line:leastsquares2}}
                    
                    
                    \State{reconstruct the tensor $T' = \sum_{i\in[r+k]} \xi_i \Tilde{a}_i^{\otimes3}$;}
                    
                \Until{$\sfnorm{T' - \tilde T} \leq \eps$, $\max_{i\in[r+k]} \abs{\xi_i}^{1/3}\leq 2M$}
                
                \Statex{\textbf{Outputs:} $\check{a}_i := \xi_i^{1/3}\Tilde{a}_i$, for $i \in [r+k]$.}
            \end{algorithmic}
        \end{algorithm}

        \begin{theorem}[Correctness of \cref{alg: approx tensor decomposition}]
        \label{thm:tensor decomposition main theorem}
        \label{thm:main tensor}
        Let $T = \sum_{i\in[r+k]} a_i^{\otimes 3}$, $1 \leq k \leq (r-2)/2$, and $a_i \in \RR^d$. 
        Let $A=[a_1,\dotsc,a_{r+k}]$ and $\rkrank{\tau}{A} \geq r$.
        Let $\tau > 0$, $M \geq \max_{i\in[r+k]}\norm{a_i}_2$, $ 0 < m \leq \min_{i\in[r+k]} \norm{a_i}_2$ and $0 < \eps_{out} \leq\min\{1, m^3\}$.
        There exist  polynomials $\poly_{\ref*{thm:main tensor}}(d,\tau,M), \poly'_{\ref*{thm:main tensor}}(d,\tau,M, m^{-1})$, such that if $\eps_{in} \leq \eps_{out}/\poly'_{\ref*{thm:main tensor}}$
        and $\tilde T$ is a tensor such that $\sfnorm{T - \tilde T} \leq \eps_{in}$, then \cref{alg: approx tensor decomposition} on input $\tilde T$ and $\eps = \eps_{out}/\poly_{\ref*{thm:main tensor}}$, outputs vectors $\check a_1,\dotsc,\check a_{r+k}$ such that for some permutation $\pi$ of $[r+k]$, we have  
        \(
        {\snorm{a_{\pi(i)} - \check a_i}}_2 \leq \eps_{out}\), \(\forall i\in[r+k]\).
        The expected running time is at most $\poly(d^k, \eps_{out}^{-k}, \tau^k, M^k, m^{-k})$.
        \end{theorem}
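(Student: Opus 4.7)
The plan is to analyze the algorithm as a Las Vegas procedure: define a ``good event'' on the random choices $(x,y)$ and $(x',y')$ that occurs with probability at least $p = 1/\poly(d^k,\eps_{out}^{-k},\tau^k,M^k,m^{-k})$, argue that on the good event the algorithm terminates with output $\eps_{out}$-close to the true components, and separately argue that \emph{whenever} the algorithm terminates (from the stopping criterion) its output is $\eps_{out}$-close to the true components up to permutation. Combined, this yields correctness and an expected running time of $1/p$ times the polynomial cost of a single iteration. The ``good event'' requires (i) $\abs{\inner{x}{\hat a_i}},\abs{\inner{y}{\hat a_i}} \leq \delta$ for $i>r$ for a chosen threshold $\delta = \eps_{out}/\poly$, and (ii) standard Jennrich-style genericity for the projection of $x,y$ onto $\spanop\{a_1,\dotsc,a_r\}$: lower bounds on $\abs{\inner{x}{\hat a_i}}$ for $i\le r$ and a gap between eigenvalues of the associated pencil. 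Standard anti-concentration for Gaussians/uniform-on-sphere gives (i) with probability $\sim \delta^{2k}/\poly(d)$, explaining the $d^k,\eps_{out}^{-k}$ dependence, and the analysis of Jennrich's algorithm in \cite{goyal2014fourier,DBLP:journals/corr/GoyalVX13} handles (ii) with only polynomial loss.

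Under the good event I would propagate error through each step. First, $\tilde T_x = T_x + E_1$ where $E_1$ is the sum of an input-noise contribution (bounded by $\poly\cdot \eps_{in}$) and a ``cancellation residual'' $\sum_{i>r}\inner{x}{a_i}a_ia_i^\top$ of spectral norm at most $k\delta M^3$, and analogously for $\tilde T_y$. Invoking the robustness analysis of \cref{alg: Jennrich} from \cite{goyal2014fourier} on the rank-$r$ matrix pencil (whose conditioning is controlled by $\tau$ from $\rkrank{\tau}{A}\ge r$) yields output directions satisfying $\snorm{\hat a_{\pi(i)} - \tilde a_i}\le \poly(d,\tau,M,m^{-1})\cdot(\eps_{in}+k\delta M^3)$ for some permutation $\pi$ of $[r]$. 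The least-squares length-recovery step (Line~\ref{line:leastsquares}) is a linear system in the $\xi_i$ whose design matrix has columns $\inner{x}{\tilde a_i}\,\tilde a_i\otimes \tilde a_i$; its conditioning is controlled by the robust Kruskal rank hypothesis together with the already-established direction accuracy, so $\xi_i$ is close to $\snorm{a_{\pi(i)}}^3$ with a polynomial blow-up. Deflation then gives $R = \sum_{i>r} a_i^{\otimes 3} + E_2$ with $\sfnorm{E_2}$ bounded by the same polynomial in $\eps_{in}+k\delta M^3$. Because $k\le r$, the subset $\{a_{r+1},\dotsc,a_{r+k}\}$ is a subset of $r$ linearly independent columns of $A$ and is itself well-conditioned, so the second Jennrich invocation (Line~\ref{line:decomposition2}) and the second least-squares step (Line~\ref{line:leastsquares2}) recover the remaining components, again losing only polynomial factors.

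By choosing $\delta = \eps_{out}/\poly_1$ and $\eps_{in}\le\eps_{out}/\poly'_{\ref*{thm:main tensor}}$ appropriately, the cumulative bound ensures $\sfnorm{T' - \tilde T}\le \eps = \eps_{out}/\poly_{\ref*{thm:main tensor}}$ and $\abs{\xi_i}^{1/3}\le 2M$, so the stopping criterion is met on the good event and the algorithm terminates. For the converse (automatic upon termination), I would use robust uniqueness of symmetric order-3 decompositions: $T'$ and $T$ are both sums of $r+k$ rank-one symmetric terms, $T'$ is within $\eps+\eps_{in}$ of $T$ in Frobenius norm, and the components of both are bounded (by $2M$ for $T'$ by the stopping criterion, by $M$ for $T$). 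A quantitative Kruskal-type statement in the style of \cite{bhaskara2014uniqueness}, combined with $\rkrank{\tau}{A}\ge r$, then gives $\snorm{a_{\pi(i)}-\check a_i}\le \eps_{out}$ for some permutation $\pi$, which is the desired conclusion; the number of iterations is geometric with success probability $\ge p$, so the expected running time bound follows.

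The main obstacle is the robust uniqueness/stability step. Ordinary Kruskal uniqueness gives only qualitative statements; converting ``$T'$ is close to $T$'' into ``components of $T'$ are close to components of $T$'' requires a quantitative bound that degrades gracefully with the robust Kruskal rank and uses the boundedness of the $\xi_i^{1/3}$ in an essential way, since without the boundedness stopping check one could imagine wildly cancelling rank-one terms whose sum is near zero. A secondary difficulty is managing the composition of polynomial error blow-ups across two Jennrich calls, two least-squares calls, and deflation; this is bookkeeping, but the order of quantifiers matters (fix $\eps_{out}$, then $\eps$, then $\delta$, then $\eps_{in}$) so that the composed polynomial $\poly'_{\ref*{thm:main tensor}}$ is well defined and the probability factor $\delta^{2k}$ gives the claimed expected running time.
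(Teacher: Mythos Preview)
Your plan is structurally the same as the paper's: a Las Vegas loop whose termination criterion, via a robust Kruskal-type stability result \cite{bhaskara2014uniqueness}, certifies $\eps_{out}$-accuracy of the output, together with a ``good event'' analysis showing termination with the claimed probability. The error-propagation outline (cancellation residual $+$ input noise into Jennrich, then least squares, then deflation, then a second Jennrich/least-squares round) and the use of $\rkrank{\tau}{A}\ge r$ to control conditioning match the paper's Lemmas in \cref{sec: robustness}, and your identification of the robust uniqueness step as the main conceptual hurdle is accurate (the paper invokes \cref{thm: bhaskara} and \cref{col: bhaskara corollary}, which is also where the hypothesis $k\le (r-2)/2$ enters, as $3r\ge 2(r+k)+2$).

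There is one genuine gap in your probability analysis. You write that ``standard anti-concentration'' gives the near-orthogonality event (i) with probability $\sim\delta^{2k}$ and that the Jennrich analysis handles (ii) ``with only polynomial loss.'' But (i) and (ii) must hold for the \emph{same} random $x$ (and $y$), and you cannot combine them by a union bound: $\prob[(i)]$ is of order $\delta^k$, which is typically far smaller than $\prob[\neg(ii)]$, so $\prob[(i)]-\prob[\neg(ii)]$ may be negative. What is needed is a lower bound of the form $\prob[(ii)\mid (i)]\ge c$ for a constant (or $1/\poly$) $c$, and this is not immediate because conditioning on $x$ lying in $k$ thin slabs can in principle bias $\inner{x}{\hat a_j}$ toward zero for $j\le r$. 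The paper handles this with a ``bands argument'' (\cref{lem: tensor probability bound 2,lem: tensor probability bound 3,lem: tensor probability bound 4}): write $\hat a_j=\proj_S\hat a_j+\proj_{S^\perp}\hat a_j$ with $S=\spanop\{\hat a_{r+1},\dotsc,\hat a_{r+k}\}^\perp$, use the robust Kruskal rank to lower-bound $\snorm{\proj_S\hat a_j}$, and exploit that $\inner{z}{\proj_S\hat a_j}$ is independent of the conditioning (together with the Gaussian correlation inequality for the type-\RN{1} bands). Without this decoupling step your joint probability bound does not go through.
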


        
        \begin{proofidea}
        The proof has three parts. 
        First we show that if $T'$ (with which the algorithm finishes) is close to $\tilde T$ and has bounded components, then the components of $T'$, $\{\check{a}_i = \xi_i^{1/3}\tilde a_i\suchthat i\in[r+k]\}$, are close to those of $T$. 
        In the second part we show that, assuming good $x,y,x',y'$ have been found, the algorithm indeed finishes with a tensor $T'$ that is close to $\tilde T$ (and therefore, close to $T$ via triangle inequality), and how the error propagates. 
        In the third part we show the probabilistic bounds that guarantee efficient search of good $x, y, x', y'$.
        
        The first part follows from \cite[Theorem 2.6]{bhaskara2014uniqueness} (the version we need is \cref{thm: bhaskara} here).
        
        For the second part, we will assume that we have found good vectors $x,y$ that are nearly orthogonal to $k$ components, $\hat a_{r+1},\ldots,\hat a_{r+k}$. 
        \Cref{thm:gvx} (from \cite{goyal2014fourier}) and \cref{lem: first decomposition} guarantee that we can simultaneously diagonalize matrices $\tilde T_x$ and $\tilde T_y$ using Jennrich's algorithm (\cref{alg: Jennrich}), and the outputs are close to the directions of $a_i$s. 
        \cref{lem: norm theorem} shows that we can recover approximately the lengths of $a_i$s by solving a least squares problem once we have the directions. 
        At this point we completed the recovery of $r$ components. 
        \cref{lem: deflation} shows that when the deflation error is small, the residual tensor $R$ can be decomposed in the same way and the last $k$ directions are recovered. 
        At the end of the second part, \cref{lem: norm theorem 2} shows that the lengths of the last $k$ components are approximately recovered.

        The third part is shown in \cref{lem: tensor probability bound 4,lem: tensor probability bound 5}.
        \end{proofidea}
        
       The proof of \cref{thm:tensor decomposition main theorem} is deferred to \Cref{sec: proof}.

\section{Blind deconvolution of discrete distribution}\label{sec:blinddeconvolution}
    
    In this section we provide an application of \cref{alg: approx tensor decomposition}: to perform blind deconvolution of an additive mixture model of the form
    \begin{equation}
        \label{equ: mixture}
        X = Z + \eta
    \end{equation}
    in $\Real^d$, where $Z$ follows a discrete distribution that takes value $\mu_i$ with probability $w_i$ for $i\in[d]$, and $\eta$ is an unknown random variable independent of $Z$ with zero mean, zero 3rd moment and finite 6th moment. 

    
    
    Our goal is to recover the parameters of $Z$ when given samples from $X$.
    By estimating the overall mean and translating the samples we can, without loss of generality, assume that $\sum_{i\in[d]}w_i\mu_i = 0$ for the rest of this section.
    
    First we see that the parameters of $Z$ are identifiable from the 3rd cumulant of $X$ as the first and third moments of $\eta$ are zero. 
    Let $K_m(X)$ be the $m$-th cumulant of $X$. 
    By properties of cumulants (see \cref{sec:preliminaries}):
    \begin{equation}
        \label{equ: cumulant tensor}
        K_3(X) = K_3(Z)  + K_3(\eta) = \sum_{i\in[d]} w_i \mu_i^{\otimes3}. 
    \end{equation}
    
    
    If one decomposes $K_3(X)$, then the function $w_i^{1/3}\mu_i$ of the centers $\mu_i$ and the mixing weights $w_i$ is recovered.
    However the component vectors satisfy $\sum_i w_i \mu_i = 0$ (they are \emph{always} linearly dependent) and therefore applying Jennrich's algorithm naively has no guarantee.\footnote{Note that even when the overall mean is non-zero and the means are linearly independent, $K_3$ still has linearly dependent components as it is the \emph{central} 3rd moment. If one does not use $K_3$, then one loses \eqref{equ: cumulant tensor}.} 
    We show that, under the following non-degeneracy condition, our overcomplete tensor decomposition algorithm (\cref{alg: approx tensor decomposition}) works successfully.
    \begin{assumption}
        \label{assum: linear independence}
        $\rkrank{\tau}{[\mu_1, \dotsc, \mu_d]} = d-1$.
    \end{assumption}
    Under \cref{assum: linear independence}, we can decompose \cref{equ: cumulant tensor} with \cref{alg: approx tensor decomposition}.
    For simplicity, we reformulate the problem: let $a_i = \hat \mu_i$, and $\rho_i = \norm{\mu_i}_2$, our goal becomes
    to decompose $T = \sum_{i\in[d]} w_i\rho_i^3a_i^{\otimes3}$ subject to $\sum_{i\in[d]}w_i = 1$ and $\sum_{i\in[d]} w_i\rho_ia_i = 0$.

    We now state our algorithm (\cref{alg: mixture learning algorithm}) for blind deconvolution of discrete distribution.
    \begin{algorithm}
        \caption{Blind deconvolution of discrete distribution}
        \label{alg: mixture learning algorithm}
        \begin{algorithmic}[1]
            \Statex{\textbf{Inputs:} iid.\ samples $x_1, \dotsc, x_N$ from mixture $X$, 
            error tolerance $\eps'$, upper bound $\rho_{max}$ on $\norm{\mu_i}_2$ for $i\in[d]$, lower bound $w_{min}$ on $w_i$ for $i\in[d]$, robust Kruskal rank threshold $\tau$.}  
            
            \State{compute the sample 3rd cumulant $\Tilde{T}$ using \cref{fact: sample cumulant};\label{line: cumulant}}
            
            \State{invoke \cref{alg: approx tensor decomposition} with error tolerance $\eps_{\ref*{thm: mixture learning main theorem}} = \eps'/\poly_{\ref*{thm: mixture learning main theorem}}$, tensor rank $d$ and overcompleteness $1$ to decompose $\Tilde{T}$, thus obtain $\Tilde{a}_i \xi_i^{1/3} = \check a_i$ for $i\in[d]$;}
            
            \State{set $\Tilde{v}$ to the right singular vector associated with the minimum singular value of $\check A$;}
            
            \State{set $\Tilde{w}= \Tilde{v}^{3/2}/(\sum_{i\in[d]}\Tilde{v}_i^{3/2})$, 
            $\Tilde{\mu}_i = \tilde w_i^{-1/3} \check{a}_i$ for $i\in[d]$;}
            
            
            \Statex{\textbf{Outputs:} estimated mixing weights $\Tilde{w}_1,\dotsc,\tilde w_d$, and estimated means $\Tilde{\mu}_1,\dotsc,\tilde \mu_d$.}
        \end{algorithmic}
    \end{algorithm}
    
    \begin{theorem}[Correctness of \cref{alg: mixture learning algorithm}]
        \label{thm: mixture learning main theorem}
        Let $X = (X_1, \dotsc, X_d) = Z + \eta$ be a random vector as in \cref{equ: mixture} satisfying \cref{assum: linear independence}.
        Assume $0< w_{min} \leq \min_{i\in[d]}w_i$, $\rho_{max} \geq \max_{i\in[d]}\rho_i$, $0< \rho_{min} \leq \min_{i\in[d]}\rho_i$, $0< \eps' \leq \min\{1, w_{min}\rho_{min}^3\}$ and $\delta \in (0,1)$. There exists a polynomial $\poly_{\ref*{thm: mixture learning main theorem}}(d,\tau,\rho_{max},w_{min}^{-1})$ such that if $\eps_{\ref*{thm: mixture learning main theorem}} = \eps'/\poly_{\ref*{thm: mixture learning main theorem}}$,
        then given $N$ iid.\ samples of $X$, with probability $1-\delta$ over the randomness in the samples, \cref{alg: mixture learning algorithm} outputs $\Tilde{\mu}_1,\ldots,\Tilde{\mu}_{d}$ and $\Tilde{w}_1,\ldots,\Tilde{w}_{d}$ such that for some permutation $\pi$ of $[d]$ and for all $i \in [d]$ we have \(
            \norm{\mu_{\pi(i)} - \Tilde{\mu}_i}_2\leq \eps'\) and  
            \( \abs{w_{\pi(i)} - \Tilde{w}_i} \leq \eps'
        \).
        The expected running time over the randomness of \cref{alg: approx tensor decomposition} is at most $\poly(d,\eps'^{-1},\delta^{-1},\tau,\rho_{max},\rho_{min}^{-1},w_{min}^{-1}, \max_i\expectation[X_i^6])$ and will use 
        $N = \Omega \bigl(\eps'^{-2}\delta^{-1}d^{11}\max_{i\in [d]}\expectation [X_i^6] \bigl(\poly'_{\ref*{thm:main tensor}}(d,\tau,\rho_{max}, w_{min}^{-1/3}\rho_{min}^{-1}) \bigr)^2 \bigr)$ samples.
    \end{theorem}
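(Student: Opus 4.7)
The plan is to chain four steps: (i) concentration of the sample third cumulant to its population counterpart $K_3(X) = \sum_i w_i \mu_i^{\otimes 3}$; (ii) an application of the overcomplete tensor decomposition theorem \cref{thm:main tensor} with rank $d$ and overcompleteness $k=1$ to recover $\check a_i \approx w_{\pi(i)}^{1/3}\mu_{\pi(i)}$; (iii) reading off the weights from the smallest right singular vector of $\check A$, using the identity $\sum_i w_i \mu_i = 0$; and (iv) recovering the means by dividing out the estimated weights.

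For step (i), I would use \cref{fact: sample cumulant} combined with Chebyshev's inequality, bounding the variance of each entry of $\tilde T$ by $O(\max_i \expectation[X_i^6])$ via Cauchy-Schwarz on the sample cumulant expression. A union bound over the $d^3$ entries and a failure probability $\delta$ budget gives $\fnorm{\tilde T - K_3(X)} \leq \eps_{in}$ whenever $N = \Omega(\eps_{in}^{-2}\delta^{-1} d^{O(1)} \max_i \expectation[X_i^6])$. Setting $\eps_{in}$ equal to the input tolerance for \cref{alg: approx tensor decomposition} demanded by \cref{thm:main tensor}, namely $\eps_{in} = \eps_{\ref*{thm: mixture learning main theorem}} / \poly'_{\ref*{thm:main tensor}}$, yields the stated sample complexity.

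For step (ii), I observe that $K_3(X) = \sum_{i=1}^d a_i^{\otimes 3}$ with $a_i = w_i^{1/3}\mu_i$. Rescaling columns by factors $w_i^{1/3} \in [w_{min}^{1/3},1]$ can only change robust Kruskal rank thresholds by polynomial factors in $w_{min}^{-1}$, so $\rkrank{\tau'}{[a_1,\dotsc,a_d]} \geq d-1$ for some $\tau' = \poly(\tau, w_{min}^{-1})$, matching the hypothesis of \cref{thm:main tensor} with $r=d-1$ and $k=1$ (provided $d \geq 6$; small $d$ can be treated separately or absorbed into the polynomial). Invoking \cref{thm:main tensor} on $\tilde T$ with the right internal tolerance, one obtains $\check a_i$ with $\norm{a_{\pi(i)} - \check a_i}_2 \leq \eps_{\ref*{thm: mixture learning main theorem}}$ for a permutation $\pi$.

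For step (iii), let $A$ denote the matrix with columns $a_i$ (in the order consistent with $\pi$). The constraint $\sum_i w_i \mu_i = 0$ rewrites as $A v^\star = 0$ where $v^\star$ is the unit vector proportional to $(w_1^{2/3},\dotsc,w_d^{2/3})$, so $v^\star$ is a right singular vector of $A$ with singular value $0$, while $\sigma_{d-1}(A)$ is bounded below by some $1/\poly(\tau, w_{min}^{-1})$ via the rescaled robust Kruskal rank bound. A Wedin-type perturbation argument applied to the pair $(A, \check A)$, together with the singular value gap, gives $\norm{\tilde v - \epsilon v^\star}_2 \leq O(\norm{\check A - A}_2 / \sigma_{d-1}(A))$ for a sign $\epsilon \in \{\pm 1\}$; choosing $\epsilon$ so that the majority of entries of $\tilde v$ are positive resolves the sign. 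A direct computation shows $(v^\star_i)^{3/2}/\sum_j (v^\star_j)^{3/2} = w_i$, and the map $v \mapsto v^{3/2}/\sum_j v_j^{3/2}$ is Lipschitz on the positive orthant bounded away from zero (with Lipschitz constant polynomial in $w_{min}^{-1}$), so $\abs{\tilde w_i - w_{\pi(i)}} \leq \eps'$. Step (iv) is then routine: $\tilde \mu_i = \tilde w_i^{-1/3}\check a_i$ and the map $(w,a) \mapsto w^{-1/3}a$ is Lipschitz on the relevant compact region.

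The main obstacle I expect is step (iii), specifically controlling the perturbation of the smallest singular vector: the guarantee requires a quantitative gap between $\sigma_{d-1}(A)$ and $\sigma_d(A)=0$, which must be tracked through the rescaling by $w_i^{1/3}$ and the robust Kruskal rank of $[\mu_1,\dotsc,\mu_d]$. The sign ambiguity of $\tilde v$ and the nonlinear postprocessing $v \mapsto v^{3/2}/\sum v_j^{3/2}$ also demand care to keep the final error linear in $\eps'$; all of these error propagations are polynomial in $d$, $\tau$, $\rho_{max}$, $\rho_{min}^{-1}$, and $w_{min}^{-1}$, which is what defines the polynomial $\poly_{\ref*{thm: mixture learning main theorem}}$.
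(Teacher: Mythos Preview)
Your proposal is correct and follows essentially the same route as the paper: estimate $K_3(X)$ via $k$-statistics and Chebyshev (the paper's \cref{lem: finite sample analysis}), invoke \cref{thm:main tensor} with $r=d-1$, $k=1$, then decouple $w_i$ from $\rho_i$ by reading the null vector of $\check A$ through a Wedin/Weyl perturbation argument and pushing the error through the map $v\mapsto v^{3/2}/\sum_j v_j^{3/2}$ (the paper's \cref{thm: decoupling}). Two minor points where the paper is slightly sharper than your sketch: the lower bound on $\sigma_{d-1}$ is obtained via the interlacing inequality for singular values after deleting one column (rather than a generic ``rescaled Kruskal rank'' argument), and the sign of $\tilde v$ is fixed not by a majority vote but by showing directly that every entry $\tilde v_i \geq v_i - \text{(perturbation)} > 0$, which also certifies that $\tilde v^{3/2}$ is well-defined.
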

    The proof of \cref{thm: mixture learning main theorem} has two parts. 
    First, we show that the 3rd cumulant can be estimated to within $\eps$ accuracy with polynomially many samples.
    This follows from a standard argument using $k$-statistics.
    The second part is about the tensor decomposition.
    Note that \cref{thm:tensor decomposition main theorem} guarantees that we can recover $\Tilde{a}_i$ approximately in the direction of $a_{\pi(i)}$ and $\xi_i$ close to $w_{\pi(i)}\rho_{\pi(i)}^3$ for some permutation $\pi$. 
    However we are not finished yet as our goal is to recover both the centers and the mixing weights. Therefore we need to decouple $w_i$ and $\rho_i$ from $w_i\rho_i^3$, which corresponds to steps 3 and 4 in \cref{alg: mixture learning algorithm}.

    \paragraph{3rd cumulant estimation} 
    The details are in \cref{sec: sample cumulants}, we only give the main result here:
    \begin{lemma}[Estimation of the 3rd cumulant]
        \label{lem: finite sample analysis}
        Let $T,\tilde T$ be the 3rd cumulant of $X= (X_1, \dotsc, X_d)$ and its unbiased estimate ($k$-statistic) using \cref{fact: sample cumulant}, respectively.
        Given any $\varepsilon,\delta\in(0,1)$, and $N =  \Omega(d^9\varepsilon^{-2}\delta^{-1}\max_{i\in [d]}\expectation[X_i^6]\})$, with probability $1-\delta$ we have $\norm{T-\tilde T}_F \leq \varepsilon$.
    \end{lemma}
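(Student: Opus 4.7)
The plan is to combine the unbiasedness of $k$-statistics with a variance bound and Markov's inequality. By construction (see \cref{fact: sample cumulant}), $\tilde T_{ijk}$ is an unbiased estimator of $K_3(X)_{ijk} = T_{ijk}$ for every triple $(i,j,k) \in [d]^3$, so $\expectation[\norm{\tilde T - T}_F^2] = \sum_{i,j,k} \var(\tilde T_{ijk})$. It then suffices to give a polynomial-in-$d$ bound on each variance and apply Markov to the nonnegative random variable $\norm{\tilde T - T}_F^2$.

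First I would expand $\tilde T_{ijk}$ as a polynomial in sample averages of monomials of total degree at most $3$ in the coordinates of $X$. The leading term is $\frac{1}{N}\sum_{a\in[N]}X_{a,i}X_{a,j}X_{a,k}$, with lower-order corrections that subtract off contributions of the sample mean; each correction is a product of sample averages whose monomial degrees sum to $3$. Since the samples are iid, each sample-average factor has variance of order $1/N$ times the variance of a monomial of degree at most $6$ in the coordinates of $X$. An application of H\"older's inequality (or AM--GM) then bounds every monomial expectation of total degree at most $6$ by $\max_{\ell\in[d]} \expectation[X_\ell^6]$. Combining all contributions, with some care for the $(N-1)(N-2)$ denominators inherent to the $k$-statistic and for the sample-covariance cross terms, yields a bound of the form
\[
\var(\tilde T_{ijk}) \leq \frac{C\,\poly(d)}{N}\,\max_{\ell\in[d]}\expectation[X_\ell^6]
\]
for some absolute constant $C$, where the polynomial in $d$ arises from the number of index combinations in the expansion.

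Summing over the $d^3$ entries then gives $\expectation[\norm{\tilde T - T}_F^2] \leq C' d^9 \max_{\ell} \expectation[X_\ell^6]/N$ once the bookkeeping from the previous step is completed, and Markov's inequality applied to $\norm{\tilde T - T}_F^2$ yields
\[
\pr\bigl(\norm{\tilde T - T}_F \geq \varepsilon\bigr) \leq \frac{\expectation[\norm{\tilde T - T}_F^2]}{\varepsilon^2} \leq \delta
\]
provided $N = \Omega(d^9 \varepsilon^{-2}\delta^{-1}\max_\ell \expectation[X_\ell^6])$, as claimed. The main obstacle is the bookkeeping in the variance estimate: the multivariate $k$-statistic decomposes into several terms with mixed $(N-1)(N-2)$-denominators, and one must track how they interact when squaring and taking expectations. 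Every cross term ultimately reduces, via Cauchy--Schwarz, to a controllable expectation of a degree-six monomial, so the computation is tedious but routine.
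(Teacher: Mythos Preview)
Your overall strategy---bound $\var(\tilde T_{ijk})$ entrywise and then apply Markov to $\norm{\tilde T-T}_F^2$---is sound, but the bookkeeping on the $d$-dependence is off. A single entry $\tilde T_{ijk}$ of the $k$-statistic involves only the three coordinates $i,j,k$ of the samples; the expansion in \cref{fact: sample cumulant} has a fixed number of terms independent of $d$, so the per-entry variance is $O\bigl(\max_{\ell}\expectation[X_\ell^6]/N\bigr)$ with \emph{no} $\poly(d)$ factor (this is exactly \cref{lem: variance of k3}). Your claim that ``the polynomial in $d$ arises from the number of index combinations in the expansion'' is mistaken.

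The paper takes a slightly different route: it applies Chebyshev entrywise (\cref{lem: cumulant sample complexity}) with accuracy $\varepsilon/d^3$ and failure probability $\delta/d^3$, then union-bounds over the $d^3$ entries; this is where the factor $d^9=(d^3)^2\cdot d^3$ comes from. Your Markov-on-the-Frobenius-square approach is in fact cleaner and, once the spurious $\poly(d)$ is removed from the per-entry variance, tighter: summing $d^3$ entries and applying Markov yields $N=\Omega(d^3\varepsilon^{-2}\delta^{-1}\max_\ell\expectation[X_\ell^6])$, which already implies the stated $d^9$ bound. So the proposal establishes the lemma once this one accounting error is fixed.
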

    \begin{proof}
        Apply \cref{lem: cumulant sample complexity} with accuracy $\varepsilon/d^3$, failure probability $\delta/d^3$ and taking the union bound over $d^3$ entries, to see that $N = \Omega(d^9\varepsilon^{-2}\delta^{-1}\max_{i\in [d]}\expectation[X_i^6]\})$ samples are sufficient.
    \end{proof}
    
    \paragraph{Decoupling}
   We will decouple the mixing weights $w_i$ and the norms $\rho_i$ after we decompose the tensor $\tilde T$. 
   As $\expectation[X] = 0$, the true parameters satisfy
    $
        \sum_{i\in[d]}w_i\rho_ia_i = 0
    $,
    which can be reformulated as a linear system 
    \begin{equation}
        \label{weight recovery linear system}
        AD_{w_i\rho_i^3}^{1/3}w^{2/3} = 0,
    \end{equation}
    where $D_{w_i\rho_i^3} = \diag(w_i\rho_i^3)$ and $A$ contains $a_i$s as columns. 
    To decouple these parameters in the noiseless setting, one only needs to solve this system under the constraint that $w$ is a probability vector. As $\rank(A) = d-1$, $w$ will be uniquely determined.
    In other words, $w^{2/3}$ lies in the direction of the right singular vector associated with the only zero singular value. 
    It is natural then to recover the weights using our approximations to terms in the linear system, namely in the direction of the right singular vector associated to the minimum singular value of $\Tilde{A}D_{\xi}^{1/3}$, where $\tilde A = [\tilde a_1,\dotsc,\tilde a_d]$ and $D_\xi = \diag(\xi_i)$. 
    The following theorem guarantees this will work:
    \begin{theorem}[Decoupling]
        \label{thm: decoupling}
        Let $0 < w_{min} \leq \min_{i\in[d]}w_i$, and $\rho_{max} \geq \max_{i\in[d]}\norm{\mu_i}_2$.
        Suppose the outputs of step 2 in \cref{alg: mixture learning algorithm}, namely $\xi_1,\dotsc,\xi_d$ and $\Tilde{A} = [\tilde a_1,\dotsc,\tilde a_d]$, satisfy \cref{thm:tensor decomposition main theorem} with 
        \(
            \eps_{out} < \hfrac{w_{min}^{4/3}}{(24d\tau)}
        \)
        and permutation $\pi$. 
        One can choose 
        positive right singular vectors $v,\Tilde{v}$ associated with the minimum singular value of $AD_{w_i\rho_i^3}^{1/3}, \Tilde{A}D_{\xi}^{1/3}$, respectively. 
        Define 
        \(
            \Tilde{w} = \hfrac{\Tilde{v}^{3/2}}{\sum_{i\in[d]}\Tilde{v}_i^{3/2}}
        \)
        and
        \(    
            \Tilde{\rho}_i = (\xi_i/\Tilde{w}_i)^{1/3}
        \). 
        Then
        \(
            \abs{w_{\pi(i)} - \tilde w_i} \leq 12w_{min}^{-1/3}d\tau\eps_{out}
        \) and
        \(
            \abs{\rho_{\pi(i)} - \Tilde{\rho}_i}\leq 48w_{min}^{-4/3}\rho_{max}d\tau\eps_{out}
        \).
    \end{theorem}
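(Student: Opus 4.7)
\medskip

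\noindent\textit{Proof proposal.} The plan has four steps: set up a clean noiseless identity, control the perturbation of the relevant matrix, apply a Wedin-type $\sin\theta$ bound to the smallest right singular vector, and propagate through the Lipschitz maps that reconstruct $\tilde w$ and $\tilde\rho$. Without loss of generality I would assume $\pi$ is the identity permutation throughout, just by relabeling the outputs of \cref{alg: approx tensor decomposition}.

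\textbf{Step 1 (noiseless identity).} Let $M = [\mu_1,\dotsc,\mu_d]$ and $D_w = \diag(w_1,\dotsc,w_d)$. Column $i$ of $A D_{w_i\rho_i^3}^{1/3}$ is $(w_i\rho_i^3)^{1/3} a_i = w_i^{1/3}\rho_i a_i = w_i^{1/3}\mu_i$, so $A D_{w_i\rho_i^3}^{1/3} = M D_w^{1/3}$. The centering identity $\sum_i w_i\mu_i = 0$ rewrites as $M D_w^{1/3} w^{2/3} = 0$, so $v = w^{2/3}/\norm{w^{2/3}}_2$ is the unique positive unit vector in $\ker(M D_w^{1/3})$, and it agrees (up to sign) with the smallest right singular vector of $MD_w^{1/3}$. \Cref{assum: linear independence} gives $\sigma_{d-1}(M) \geq 1/\tau$, and the multiplicative inequality $\sigma_{d-1}(MD_w^{1/3}) \geq \sigma_{d-1}(M)\,\sigma_{\min}(D_w^{1/3})$ yields $\sigma_{d-1}(MD_w^{1/3}) \geq w_{\min}^{1/3}/\tau$. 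Crucially, no factor of $\rho_{\min}$ appears because the scaling $\diag(\rho_i)$ is absorbed in going from $A$ (unit columns) to $M$.

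\textbf{Step 2 (perturbation).} From \cref{thm:tensor decomposition main theorem} applied to the tensor $T = \sum_i (w_i^{1/3}\mu_i)^{\otimes 3}$, one has $\norm{\check a_i - w_i^{1/3}\mu_i}_2 \leq \eps_{out}$ for all $i$, hence $\norm{\check A - MD_w^{1/3}}_F \leq \sqrt{d}\,\eps_{out}$. By Weyl, $\sigma_d(\check A) \leq \sqrt{d}\,\eps_{out}$ and, using $\eps_{out} < w_{\min}^{4/3}/(24 d\tau)$, $\sigma_{d-1}(\check A) \geq w_{\min}^{1/3}/\tau - \sqrt{d}\,\eps_{out} \geq w_{\min}^{1/3}/(2\tau)$.

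\textbf{Step 3 (singular-vector perturbation).} I would invoke a Wedin-type $\sin\theta$ theorem for the smallest right singular vector. The relevant gap is $\sigma_{d-1}(MD_w^{1/3}) - \sigma_d(\check A) \geq w_{\min}^{1/3}/(2\tau)$, giving $\sin\theta(v,\tilde v) \leq 2\tau w_{\min}^{-1/3}\sqrt{d}\,\eps_{out}$. Because $v$ is componentwise strictly positive and this angle is well below $\pi/2$, the positivity convention in the theorem forces the sign so that $\tilde v$ is also componentwise positive, and standard trigonometry then gives $\norm{\tilde v - v}_2 \leq \sqrt{2}\sin\theta$.

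\textbf{Step 4 (propagate to $\tilde w$ and $\tilde \rho_i$).} To pass from $\tilde v \approx v$ to $\tilde w \approx w$ I would compute the Lipschitz constant of the map $u \mapsto u^{3/2}/\sum_j u_j^{3/2}$ on the region where $u_i \geq w_{\min}^{2/3}/\norm{w^{2/3}}_2 \gtrsim w_{\min}^{2/3}/\sqrt{d}$, using the fact that $f(v) = w$ (so $f$ lands in the simplex bounded below by $w_{\min}$). A first-order expansion, tracking constants, yields $\abs{\tilde w_i - w_i} \leq 12\, w_{\min}^{-1/3} d\tau\,\eps_{out}$. For $\tilde\rho_i = (\xi_i/\tilde w_i)^{1/3}$, I would first extract $\abs{\xi_i - w_i\rho_i^3} = \Abs{\norm{\check a_i}_2^3 - \norm{w_i^{1/3}\mu_i}_2^3}$ from $\norm{\check a_i - w_i^{1/3}\mu_i}_2 \leq \eps_{out}$ (bounding the cube via the mean value theorem on an interval of length $\leq \rho_{\max} w_{\max}^{1/3} + \eps_{out}$), and then use Lipschitzness of the cube root bounded away from zero (since $\tilde w_i \gtrsim w_{\min}$) to conclude $\abs{\tilde\rho_i - \rho_i} \leq 48\, w_{\min}^{-4/3}\rho_{\max} d\tau\,\eps_{out}$.

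The main obstacle is Step 4: two Lipschitz estimates are stacked, first through the map $u \mapsto u^{3/2}/\sum_j u_j^{3/2}$ (with its normalization) and then through the cube root in $\tilde\rho_i$, and both must be carried out on a domain bounded away from the positive-orthant boundary to yield the advertised constants $12$ and $48$. Step 3 needs a minor bit of care because the ``null'' side of Wedin has exactly-zero singular value, so the singular-value gap degenerates to $\sigma_{d-1}(MD_w^{1/3})$ alone.
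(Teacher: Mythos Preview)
Your outline is correct and matches the paper's proof in structure: noiseless kernel identity, Weyl for the perturbed singular values, Wedin for $\|v-\tilde v\|_2$, then propagation to $\tilde w$ and $\tilde\rho$. The only notable difference is in your ``main obstacle,'' Step~4, where the paper's execution is simpler than what you sketch. For $\tilde w$, instead of a Jacobian computation the paper uses the elementary inequality $|x^{3/2}-y^{3/2}|\leq \tfrac{3}{2}|x-y|$ on $[0,1]$, together with $\sum_i v_i^{3/2}\geq \sum_i v_i^2=1$ (and the same for $\tilde v$) and $\|\tilde v^{3/2}\|_2\leq 1$, which immediately gives $\|\tilde w-w\|_2\leq \tfrac{3}{2}(1+\sqrt{d})\|\tilde v-v\|_2$ and hence the constant $12$. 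For $\tilde\rho_i$, the paper avoids your cube-then-cube-root detour by observing that the tensor-decomposition guarantee already controls the cube-root quantity directly: since $\check a_i=\xi_i^{1/3}\tilde a_i$ with $\|\tilde a_i\|_2=1$, the bound $\|\check a_i-w_i^{1/3}\mu_i\|_2\leq\eps_{out}$ gives $|\xi_i^{1/3}-w_i^{1/3}\rho_i|\leq\eps_{out}$ by the reverse triangle inequality, and then $|\tilde\rho_i-\rho_i|\leq \tilde w_i^{-1/3}\bigl(|\xi_i^{1/3}-w_i^{1/3}\rho_i|+\rho_i|w_i^{1/3}-\tilde w_i^{1/3}|\bigr)$ with $|w_i^{1/3}-\tilde w_i^{1/3}|\leq w_{\min}^{-2/3}|w_i-\tilde w_i|$ yields the constant $48$ without any mean-value estimate on cubes.
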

    \begin{proof}
        \details{$M=(\mu_1, \dotsc, \mu_d)$, $AD_{w_i\rho_i^3}^{1/3} = M\diag(w)^{1/3}$, $\sigma_{d-1}(M_{-d})\geq 1/\tau$, $A=\hat M$.}
        We start by showing that $v, \tilde v$ and $\tilde w$ are well-defined.
        Since $w^{2/3}$ is a solution to \cref{weight recovery linear system} and $AD^{1/3}_{w_i\rho_i^3}$ is of rank $d-1$, we pick $v = w^{2/3}/\norm{w^{2/3}}_2$. 
        To show that $\tilde v$ is well-defined, first we bound the singular values and vectors of $\Tilde{A}D_{\xi}^{1/3}$. 
        Let $\tilde \sigma_i = \sigma_i(\Tilde{A}D_{\xi}^{1/3})$.
        By \cref{thm: weylsvd},
        \begin{equation}
            \label{equ: decoupling bound 1}
            \begin{split}
                \tilde \sigma_d &\leq \norm{AD_{w_i\rho_i^3}^{1/3} - \Tilde{A}D_{\xi}^{1/3}}_2 
                \leq \sqrt{d} \eps_{out} 
                < \hfrac{w_{min}^{4/3}}{(24\sqrt{d}\tau)}.
            \end{split}
        \end{equation}
        
        To obtain the deviation in the singular vectors, we first show that $\tilde \sigma_1, \dotsc, \tilde \sigma_{d-1}$ are bounded away from zero. 
        Let $\Sigma_1 = \diag \bigl( \sigma_1(AD^{1/3}_{w_i\rho_i^3}),\dotsc,\sigma_{d-1}(AD^{1/3}_{w_i\rho_i^3}) \bigr)$, $\tilde \Sigma_1 = \diag(\tilde \sigma_1,\ldots,\tilde \sigma_{d-1})$ and $\Delta = \hfrac{w_{min}^{1/3}}{(2\tau)}$. Suppose $\hat \sigma_{d-1}$ is the least singular value of the matrix obtained by deleting the first column of $AD^{1/3}_{w_i\rho_i^3}$, then it follows that
        \(
            \sigma_{d-1}(AD^{1/3}_{w_i\rho_i^3}) \geq \hat \sigma_{d-1} \geq \hfrac{w_{min}^{1/3}}{\tau}
        \),
        where the first inequality follows from the interlacing property of singular values of a matrix and its submatrix obtained by deleting any column, and the second inequality comes from \cref{assum: linear independence}. The minimum diagonal term in $\tilde \Sigma_1$ satisfies:
        \[
            \min_i\ (\tilde \Sigma_{1})_{ii} \geq \sigma_{d-1}(AD^{1/3}_{w_i\rho_i^3}) - \sqrt{d}\eps_{out} \geq \frac{w_{min}^{1/3}}{\tau} - \frac{w_{min}^{4/3}}{24\sqrt{d}\tau}\geq \frac{w^{1/3}_{min}}{2\tau} = \Delta.
        \]
        Therefore by \cref{thm: wedin} with $\Sigma_2 = 0$, we have for the singular vectors\footnote{Note that even though \cref{thm: wedin} gives the angle between the subspaces spanned by the first $d-1$ right singular vectors of $AD^{1/3}_{w_i\rho_i^3}$ and their perturbed counterparts, the same bound applies to the orthogonal complement, spanned by $v$.}:
        \begin{equation*}
             \norm{v-\Tilde{v}}_2 \leq \hfrac{\sqrt{2d}\eps_{out}}{\Delta} = 2\sqrt{2d}w_{min}^{-1/3}\tau\eps_{out}.
        \end{equation*}
        We get
         \( 
         \tilde v_{i} \geq v_i -  2\sqrt{2d}w_{min}^{-1/3}\tau\eps_{out} \geq w^{2/3}_{min} -  2\sqrt{2d}w_{min}^{-1/3}\tau\eps_{out} >0
         \),
         where the second inequality follows from $\sum_{i\in[d]} v_i^{3/2} \geq \sum_{i\in[d]} v_i^{2} = 1$.
         Hence $\tilde v$ also has positive entries and $\tilde w$ is well-defined.
         
         We now derive the bounds on the mixing weights and norms. Without loss of generality $\pi$ is the identity. 
         The mixing weight error is bounded by:
         \begin{equation}
            \label{equ: decoupling bound 2}
            \begin{split}
                 \norm{\Tilde{w}-w}_2 
                 &= \Biggl\lVert \frac{\Tilde{v}^{3/2}}{\sum_{i\in[d]}\Tilde{v}_i^{3/2}} - \frac{v^{3/2}}{\sum_{i\in[d]}v_i^{3/2}} \Biggr\rVert_2 \\
                 &\leq \frac{\norm{\Tilde{v}^{3/2}-v^{3/2}}_2}{\sum_{i\in[d]}v_i^{3/2}} + \frac{\norm{\Tilde{v}^{3/2}}_2}{(\sum_{i\in[d]}v_i^{3/2})(\sum_{i\in[d]}\Tilde{v}_i^{3/2})} \Biggabs{\sum_{i\in[d]}(v_i^{3/2}-\Tilde{v}_i^{3/2})}.
            \end{split}
         \end{equation}
         We bound each term in \cref{equ: decoupling bound 2} below, since $\tilde v, v$ both have entries in $(0,1]$:
         $ \sum_{i\in[d]} v_i^{3/2} \geq \norm{v}^2_2 = 1$,
         $\sum_{i\in[d]} \tilde v_i^{3/2} \geq \norm{\tilde v}^2_2 = 1$, and $\norm{\tilde v^{3/2}}_2 = (\sum_{i\in[d]}\tilde v_i^3)^{1/2} \leq \norm{\tilde v}_2 = 1$. Moreover:
         \begin{align*}
            &\norm{\Tilde{v}^{3/2}-v^{3/2}}_2 
                = \Bigl(\sum_{i\in[d]} \bigl(\tilde v_i^{3/2} - v_i^{3/2}\bigr)^2\Bigr)^{1/2} 
                \leq \frac{3}{2}\Bigl(\sum_{i\in[d]} (\tilde v_i-v_i)^2 \Bigr)^{1/2} 
                = \frac{3}{2}\norm{\tilde v - v}_2, \\
            &\Bigl\lvert \sum_{i\in[d]}(v_i^{3/2}-\Tilde{v}_i^{3/2}) \Bigr\rvert
                \leq \sum_{i\in[d]}\abs{v_i^{3/2}-\Tilde{v}_i^{3/2}} 
                \leq \frac{3}{2} \sum_{i\in[d]}\abs{v_i-\tilde v_i} 
                \leq  \frac{3\sqrt{d}}{2}\norm{\tilde v-v}_2,
         \end{align*}
         where the above two inequalities follow from $\abs{x^{3/2}-y^{3/2}}\leq 3\abs{x-y}/2$ for $x,y\in[0,1]$. 
         
        We obtain the following bound on the error in mixing weights:
        \begin{equation}
        \label{equ: decoupling bound 4}
        \norm{\Tilde{w} - w}_2 
        \leq (3/2)(1+\sqrt{d}) \norm{\tilde v-v}_2\leq 3\sqrt{2}w_{min}^{-1/3}(d+\sqrt{d})\tau\eps_{out} \leq 12w_{min}^{-1/3}d\tau\eps_{out}.
        \end{equation}
        Notice that our assumption on $\eps_{out}$ guarantees that $\tilde w_i \geq w_{min}/2$, therefore the error in the norm is bounded by:
        \begin{equation*}
             \begin{split}
                 \abs{\Tilde{\rho}_i-\rho_i}
                 &= \abs{(\xi_i/\Tilde{w}_i)^{1/3}-\rho_i} 
                 \leq \tilde w_i^{-1/3}\bigl(\abs{\xi_i^{1/3}-w_i^{1/3}\rho_i} +\rho_i\abs{w_i^{1/3}-\tilde w_i^{1/3}}\bigr) \\
                 &\leq \tilde w_i^{-1/3} \bigl(\eps_{out} + \rho_{max}\abs{w_i^{1/3}-\tilde w_i^{1/3}} \bigr) 
                 \leq (2w^{-1}_{min})^{1/3}(\eps_{out} + \rho_{max}w_{min}^{-2/3}\abs{w_i - \tilde w_i}) 
                 \\
                 &\leq 48w_{min}^{-4/3}\rho_{max}d\tau\eps_{out}
             \end{split}
         \end{equation*}
          where the second inequality comes from \cref{thm:tensor decomposition main theorem}, the third inequality comes from the fact $\abs{x^{1/3} - y^{1/3}}/\abs{x-y} \leq y^{-2/3}$ for any $x,y>0$, and the last follows from \cref{equ: decoupling bound 4}.
    \end{proof}
    
    We are now ready to prove \cref{thm: mixture learning main theorem}.
    \begin{proof}[Proof of \cref{thm: mixture learning main theorem}]
       Set the arguments of $\poly'_{\ref*{thm:main tensor}}, \poly_{\ref*{thm:main tensor}},$ to $(d,\tau, \rho_{max}, w_{min}^{-1/3}\rho_{min}^{-1})$ and $(d,\tau,\rho_{max})$, respectively. Assume for a moment that $N$ is large enough so that $\tilde T$ in step \ref{line: cumulant} satisfies $\norm{T - \tilde T}_F \leq \eps_{out}/ (\poly'_{\ref*{thm:main tensor}})$ and we can apply \cref{thm:main tensor}.We start by verifying that we can apply \cref{thm: decoupling}.        
        Set
        \(
             \poly_{\ref*{thm: mixture learning main theorem}} = 49w_{min}^{-4/3}\max\{\rho_{max},1\}d\tau\poly_{\ref*{thm:main tensor}}
        \). 
        By \cref{thm:main tensor}, our choice of $\eps_{\ref*{thm: mixture learning main theorem}}$ guarantees that the output error of step 2 in \cref{alg: mixture learning algorithm} is $\eps_{out} = \eps_{\ref*{thm: mixture learning main theorem}}\poly_{\ref*{thm:main tensor}} = \eps' / (49w_{min}^{-4/3}\max\{\rho_{max},1\}d\tau) < \hfrac{w_{min}^{4/3}}{(24d\tau)}$ (using our assumption $\eps'\leq 1$).
        We now bound our estimation error for $\norm{\mu_i}_2$ and $w_i$ with \cref{thm: decoupling}. Assuming the permutation is the identity we have for $i\in[d]$:
        \begin{equation*}
            \begin{split}
                \norm{\mu_i - \tilde\mu_i}_2 &\leq \abs{\rho_i - \tilde \rho_i}\norm{\tilde a_i}_2 + \rho_i\norm{a_i - \tilde a_i}_2 
                \leq (48w_{min}^{-4/3}\rho_{max}d\tau+\rho_{max})\eps_{out}\leq \eps',\\
                \abs{w_i - \tilde w_i} &\leq 12w_{min}^{-1/3}d\tau\eps_{out} \leq \eps'.
            \end{split}
        \end{equation*}
        Next, we derive the sample complexity: we need $\norm{T - \tilde T}_F \leq \eps_{in} \leq \eps_{out}/(\poly'_{\ref*{thm:main tensor}}) = \eps'\poly_{\ref*{thm:main tensor}}/(\poly'_{\ref*{thm:main tensor}}\poly_{\ref*{thm: mixture learning main theorem}})$.
        By \cref{lem: finite sample analysis}, $N=\Omega(\eps'^{-2}\delta^{-1}d^{11}\max_{i\in [d]}\expectation[X_i^6](\poly'_{\ref*{thm:main tensor}})^2)$ many samples are sufficient for $\eps_{in}$ to meet the assumption. 
        Since $N$ is polynomial in $\delta^{-1}$ and $\max_{i\in[d]} \expectation[X_i^6]$, the expected running time will also be polynomial in them.
    \end{proof}
\section{Parameter estimation of Gaussian mixture models (GMM)}
    In this section we consider a specific family of mixture models, namely GMM with identical but unknown covariance matrices.
    The model is as in \cref{equ: mixture}, where $\eta \sim \mathcal{N}(0,\Sigma)$.
    Our goal is to approximate all parameters of the mixture: $\Sigma$, $w_i$s and $\mu_i$s.
    Again, suppose \cref{assum: linear independence} holds and the mean of the mixture is zero (by translating the samples as in \cref{sec:blinddeconvolution}). 
    \Cref{alg: mixture learning algorithm} guarantees that we can recover the mixing weights $w_i$s and centers $\mu_i$s of $Z$. 
    To recover $\Sigma$, notice that since the mean is zero,
    \( 
        \cov(X) = \expectation[XX^\top] = \sum_{i\in[d]} w_i \mu_i\mu_i^\top + \Sigma
    \).
    The covariance matrix can be approximated then by taking the difference between the sample second moment of $X$ and the second moment of the reconstructed discrete distribution. 
    We make this precise in \cref{alg: gaussian mixture algorithm} and \cref{thm: GMM main theorem}.
    \begin{algorithm}
        \caption{Parameter estimation for GMM}
        \label{alg: gaussian mixture algorithm}
        \begin{algorithmic}[1]
            \Statex{\textbf{Inputs:} iid.\ samples $x_1,\ldots,x_N$ from mixture $X$, 
            error tolerance $\eps''$, upper bound $\rho_{max}$ on $\norm{\mu_i}_2$ for $i\in[d]$, lower bound $w_{min}$ on $w_i$ for $i\in[d]$, robust Kruskal rank threshold $\tau$.}
            
            \State{invoke \cref{alg: mixture learning algorithm} with samples from $X$ and parameters $\eps' = \eps''/\poly_{\ref*{thm: GMM main theorem}}$, $\rho_{max}, w_{min}, \tau$ to get $\tilde w_i$ and $\tilde \mu_i$ for $i\in[d]$;}
            
            \State{set $\tilde \Sigma = \frac{1}{N}\sum_{j\in[N]}x_jx_j^\top - \sum_{i\in[d]}\tilde w_i \tilde\mu_i\tilde\mu_i^\top$;}
            
            \Statex{\textbf{Outputs:} estimated covariance matrix $\tilde \Sigma$, mixing weights and means $\tilde w_i, \tilde \mu_i\suchthat i\in[d]$.}
        \end{algorithmic}
    \end{algorithm}
    \begin{theorem}[Correctness of \cref{alg: gaussian mixture algorithm}]
        \label{thm: GMM main theorem}
        Let $X$ be a GMM with identical but unknown covariance matrices satisfying \cref{assum: linear independence}.
        Assume $0 < w_{min} \leq \min_{i\in[d]}w_i$, $\rho_{max} \geq \max_{i\in[d]}\rho_i$, $0<\rho_{min}\leq\min_{i\in[d]}\rho_i$, 
        $0< \eps'' \leq \min\{1, w_{min}\rho_{min}^3\}$ and $\delta\in(0,1)$. 
        There exist a polynomial $\poly_{\ref*{thm: GMM main theorem}}(d,\rho_{max})$ such that if $\eps'= \eps''/\poly_{\ref*{thm: GMM main theorem}}$,
        then given $N$ iid.\ samples of $X$ and with probability $1-\delta$ over the randomness in the samples \cref{alg: gaussian mixture algorithm} outputs $\tilde \mu_1,\dotsc,\tilde \mu_d$, $\tilde w_1,\dotsc,\tilde w_d$ and $\tilde \Sigma$ such that for some permutation $\pi$ of $[d]$ and $\forall i\in[d]$:
        \(
        \norm{\tilde \Sigma - \Sigma}_F \leq \eps''\),
        \(\abs{w_{\pi(i)} -\tilde w_i} \leq \eps'\) and
        \(\norm{\mu_{\pi(i)} - \tilde \mu_i}_2 \leq \eps' 
        \). 
        The expected running time over the randomness of \cref{alg: approx tensor decomposition} is at most $\poly(d,\eps''^{-1},\delta^{-1},\tau,\rho_{max}, \rho_{min}^{-1},w_{min}^{-1},\max_{i\in[d]} \Sigma_{ii}^3)$ and will use $N = \Omega \bigl( \eps''^{-2}\delta^{-1}d^{13}\max_{i\in [d]}\Sigma_{ii}^3 \, \bigl(\poly'_{\ref*{thm:main tensor}}(d,\tau,\rho_{max}, w_{min}^{-1/3}\rho_{min}^{-1})\bigr)^2 \bigr)$ samples.
    \end{theorem}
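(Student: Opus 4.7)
The plan is to decompose the error into two independent pieces and bound each separately. First I would invoke Theorem \ref{thm: mixture learning main theorem} with error tolerance $\eps' = \eps''/\poly_{\ref*{thm: GMM main theorem}}$ and failure probability $\delta/2$, producing estimates $\tilde w_i$ and $\tilde \mu_i$ satisfying the claimed bounds $|w_{\pi(i)} - \tilde w_i| \leq \eps'$ and $\|\mu_{\pi(i)} - \tilde \mu_i\|_2 \leq \eps'$. The sample complexity of that step already accounts for the $\poly'_{\ref*{thm:main tensor}}$ factor; what remains is to show that $\|\tilde\Sigma - \Sigma\|_F \leq \eps''$ and to verify that $N$ is also large enough to control the sample second moment.

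Next I would write (using $\expectation[X] = 0$ so $\expectation[XX^\top] = \sum_i w_i \mu_i \mu_i^\top + \Sigma$):
\begin{equation*}
\tilde\Sigma - \Sigma = \Bigl(\tfrac{1}{N}\textstyle\sum_{j\in[N]} x_j x_j^\top - \expectation[XX^\top]\Bigr) + \Bigl(\textstyle\sum_{i\in[d]} w_i \mu_i \mu_i^\top - \sum_{i\in[d]} \tilde w_i \tilde\mu_i \tilde\mu_i^\top\Bigr) =: E_1 + E_2.
\end{equation*}
For $E_2$, a telescoping argument using $\|\mu_i \mu_i^\top - \tilde\mu_i \tilde\mu_i^\top\|_F \leq (\|\mu_i\|_2 + \|\tilde\mu_i\|_2)\|\mu_i - \tilde\mu_i\|_2$ together with $|w_i - \tilde w_i| \leq \eps'$ and $\|\mu_i\|_2 \leq \rho_{\max}$ gives $\|E_2\|_F \leq \poly(d, \rho_{\max}) \cdot \eps'$, so choosing $\poly_{\ref*{thm: GMM main theorem}}$ of sufficiently high degree in $d$ and $\rho_{\max}$ ensures $\|E_2\|_F \leq \eps''/2$.

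For $E_1$, I would apply an entrywise Chebyshev bound to each coordinate: $\var\bigl(\frac{1}{N}\sum_j (x_j x_j^\top)_{ab}\bigr) = \var(X_a X_b)/N \leq \expectation[X_a^2 X_b^2]/N \leq \sqrt{\expectation[X_a^4]\expectation[X_b^4]}/N$. Since $X = Z + \eta$ with $Z$ bounded by $\rho_{\max}$ and $\eta \sim \mathcal{N}(0,\Sigma)$, the binomial expansion gives $\expectation[X_i^4] \lesssim \rho_{\max}^4 + \Sigma_{ii}^2$ and similarly $\expectation[X_i^6] \lesssim \rho_{\max}^6 + \Sigma_{ii}^3$. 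A union bound over the $d^2$ entries of $E_1$ with failure probability $\delta/(2d^2)$ per entry shows that $N = \Omega(\eps''^{-2}\delta^{-1} d^4 \max_i (\rho_{\max}^4 + \Sigma_{ii}^2))$ samples suffice for $\|E_1\|_F \leq \eps''/2$.

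Combining the two halves by the triangle inequality yields $\|\tilde\Sigma - \Sigma\|_F \leq \eps''$ with overall failure probability at most $\delta$. The final sample complexity is the maximum of the two requirements; the bound from Theorem \ref{thm: mixture learning main theorem} dominates because it involves $\max_i \expectation[X_i^6] \lesssim \rho_{\max}^6 + \max_i \Sigma_{ii}^3$ and the extra $\poly'_{\ref*{thm:main tensor}}$ factor inherited through $\eps'= \eps''/\poly_{\ref*{thm: GMM main theorem}}$, which accounts for the $d^{13}\max_i \Sigma_{ii}^3 (\poly'_{\ref*{thm:main tensor}})^2$ term in the theorem statement. The main subtlety, rather than any deep obstacle, is book-keeping the degree of $\poly_{\ref*{thm: GMM main theorem}}$ so that the error in $E_2$ (which scales with $\rho_{\max}$, $d$, and the conditioning parameters appearing in Theorem \ref{thm: mixture learning main theorem}) is driven below $\eps''/2$, and absorbing the second-moment concentration requirement inside the already-large sample complexity of the cumulant estimation step.
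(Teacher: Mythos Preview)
Your approach is essentially the same as the paper's: invoke \cref{thm: mixture learning main theorem} to get $\tilde w_i,\tilde\mu_i$, write $\tilde\Sigma-\Sigma$ as (sample second-moment error) $+$ (error from plugging in $\tilde w_i,\tilde\mu_i$), and bound each piece. The paper differs only in minor bookkeeping: it does not split $\delta$ but instead observes that the sample size already needed for $K_3$ estimation dominates what is needed for $\cov(X)$ estimation to accuracy $\eps_{in}$, and it gives the explicit choice $\poly_{\ref*{thm: GMM main theorem}}(d,\rho_{\max})=1+d\rho_{\max}^2+2d(2\rho_{\max}+1)$; also, the paper simplifies $\max_i\expectation[X_i^6]$ to $15\max_i\Sigma_{ii}^3$ (your $\rho_{\max}^6+\Sigma_{ii}^3$ is the more careful bound, since $X$ is a mixture rather than a single Gaussian).
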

    \begin{proof}
        Let
        $
            \poly_{\ref*{thm: GMM main theorem}}(d,\rho_{max}) = 1 + d\rho_{max}^2 + 2d(2\rho_{max} + 1)
        $.
        By \cref{thm: mixture learning main theorem}, with probability $1-\delta$, \cref{alg: mixture learning algorithm} will output the estimated  mixing weights $\tilde w_i$ and means $\tilde \mu_i$ within $\eps'$ additive accuracy.
        The sample complexity and running time follows therein, where we have 
        $\max_{i\in [d]}\expectation[X_i^6] = \max_{i\in [d]} 15\Sigma_{ii}^3$ for GMM. 
        
        Next, we bound the error in the covariance matrix. 
        Note that when the number of samples guarantees that $K_3(X)$ is estimated to $\eps_{in}$ accuracy with probability $1-\delta$, it can also guarantee $\cov(X)$ is estimated to $\eps_{in}$ accuracy with probability $1-\delta$
        since the latter takes $\Omega(d^6\eps_{in}^{-2}\delta^{-1}\max_{i\in[d]}\Sigma_{ii}^2)$ many samples by a similar argument to \cref{lem: variance of k3,lem: cumulant sample complexity}.
        So
        \begin{equation*}
            \begin{split}
                \norm{\tilde \Sigma - \Sigma}_F 
                &= \biggnorm{\frac{1}{N}\sum_{j\in[N]}x_jx_j^\top - \sum_{i\in[d]}\tilde w_i \tilde\mu_i\tilde\mu_i^\top - \Sigma}_F \\
                &\leq \biggnorm{\frac{1}{N}\sum_{j\in[N]}x_jx_j^\top - \cov(X)}_F +\sum_{i\in[d]}\abs{w_i-\tilde w_i}\norm{\mu_i\mu_i^\top}_F + \sum_{i\in[d]}\tilde w_i\norm{\mu_i\mu_i^\top-\tilde\mu_i\tilde\mu_i^\top}_F\\
                &\leq \eps_{in} + d\rho_{max}^2\eps' +\sum_{i\in[d]}(w_i + \eps')(2\norm{\mu_i}_2 + \eps')\eps' 
                \leq \poly_{\ref*{thm: GMM main theorem}}\eps' 
                \leq \eps'',
            \end{split}
        \end{equation*}
        where the second to last inequality follows from bounding $\eps_{in}$ by $\eps'$ and $w_i, \eps'$ by 1.
    \end{proof}

\section{Proof of \cref{thm:tensor decomposition main theorem}}\label{sec: proof}

In this section, we implement the three parts mentioned in the ``proof idea'',
in \cref{sec: uniqueness,sec: robustness,sec: probablity bounds section}, respectively.
We combine them in \cref{sec: proof of main theorem}.
        
\subsection{Uniqueness of decomposition}
\label{sec: uniqueness}

We show that if \cref{alg: approx tensor decomposition} satisfies its termination condition, then the outputted components are close to the components of $T$.
We deduce this directly from the following known result on the stability of tensor decompositions.
\begin{theorem}[{\cite[Theorem 2.6]{bhaskara2014uniqueness}}]\label{thm: bhaskara}
    Suppose a rank $R$ tensor $T = \sum_{i\in[R]} a_i^{\otimes3}\in \Real^{d\times d \times d}$ is $\rho$-bounded. Let $A = [a_1,\dotsc,a_R]$ with $3\rkrank{\tau}{A} \geq 2R + 2$. Then for every $\varepsilon'\in(0,1)$, there exists
    $\varepsilon = \varepsilon'/\poly_{\ref*{thm: bhaskara}}(R, \tau, \rho,\rho', d)$
    for a fixed polynomial $\poly_{\ref*{thm: bhaskara}}$ so that for any other $\rho'$-bounded decomposition $T' = \sum_{i\in[R]} (a_i')^{\otimes3}$ with $\norm{T'-T}_F\leq \varepsilon$, there exists a permutation matrix $\Pi$ and diagonal matrix $\Lambda$ such that
    \( 
    \norm{\Lambda^3-I}_F \leq \varepsilon'\) and \(\norm{A' - A\Pi\Lambda}_F \leq \varepsilon'
    \).
\end{theorem}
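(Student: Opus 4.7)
The plan is to reduce the robust uniqueness statement to a perturbation analysis of simultaneous matrix diagonalization via tensor contractions. First I would pick two generic test vectors $x, y \in \sphere^{d-1}$ and study the pair of matrices $T_x = A \diag(A^\top x) A^\top$ and $T'_x = A' \diag((A')^\top x) (A')^\top$, together with $T_y, T'_y$. The hypothesis $\norm{T - T'}_F \leq \varepsilon$ immediately gives $\norm{T_x - T'_x}_2 \leq \varepsilon$ and $\norm{T_y - T'_y}_2 \leq \varepsilon$ (up to a factor of $\norm{x}_2, \norm{y}_2$), so any structural statement about the pencil $(T_x, T_y)$ transfers to $(T'_x, T'_y)$ with a controlled loss that can be tracked via Weyl-type bounds for singular values and Wedin / Bauer--Fike bounds for invariant subspaces.

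The hypothesis $3 \rkrank{\tau}{A} \geq 2R + 2$ is the robust analogue of Kruskal's condition, so I would use it in two steps. First, select a subset $S \subseteq [R]$ of size $r = \rkrank{\tau}{A}$ so that $A_S$ is $\tau$-well-conditioned, and project both $T$ and $T'$ onto a subspace close to $\spanop(A_S)$. In the projected problem, $r$ of the components become linearly independent with condition number controlled by $\tau$, and a Jennrich-style eigendecomposition of $T_x T_y^{\dagger}$ (for generic $x, y$) recovers $r$ components of $T'$ to within an error polynomial in $\varepsilon, R, \tau, \rho, \rho'$ and the eigenvalue gap of the pencil. Second, deflate by subtracting the recovered rank-one terms from both $T$ and $T'$; the residual tensor has at most $R - r$ components that again satisfy a Kruskal-style uniqueness with enough slack thanks to $3r \geq 2R + 2$, so an induction (or a second round of the same argument on a freshly chosen subset $S'$) matches the remaining components. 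Once every $a_i'$ is paired with some $a_{\pi(i)}$, I extract $\Lambda$ by matching norms along the matched pairs, and the bounds $\norm{A' - A\Pi\Lambda}_F \leq \varepsilon'$ and $\norm{\Lambda^3 - I}_F \leq \varepsilon'$ follow by combining the per-step perturbation estimates into the polynomial $\poly_{\ref*{thm: bhaskara}}(R, \tau, \rho, \rho', d)$.

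The main obstacle will be the combinatorial consistency of the matching: showing that the subset-by-subset recovery procedure actually assembles into a single global permutation $\Pi$ rather than a collection of inconsistent local pairings. The delicate case is when two components $a_i, a_j$ are nearly parallel, or equivalently when two eigenvalues of $T_x T_y^{\dagger}$ are nearly equal, because there Jennrich-style decomposition becomes ill-conditioned and the perturbation bounds for eigenvectors degrade inversely with the gap. To handle this I would rely on an anti-concentration argument over the random choice of $x, y \in \sphere^{d-1}$ to obtain a uniform polynomial lower bound on the eigenvalue gap with high probability, and then verify that the resulting inverse-gap dependence is absorbed by $\poly_{\ref*{thm: bhaskara}}$. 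A second technical point worth flagging is that the selection of $S$ must be performed from $T$ alone and then shown to yield well-conditioned components also in $T'$; this follows from the robust Kruskal assumption together with a Weyl-type bound applied to $A_S$ versus its perturbed counterpart extracted from $T'$.
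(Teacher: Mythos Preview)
The paper does not prove this theorem at all: it is quoted verbatim from \cite{bhaskara2014uniqueness} and used as a black box in \cref{sec: uniqueness}. The only additional remark the paper makes is that the explicit ``rank $R$'' hypothesis in the original statement is redundant, since Kruskal's uniqueness theorem already forces any tensor satisfying $3\krank(A)\geq 2R+2$ to have rank exactly $R$. So there is nothing in the paper to compare your proposal against.

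That said, your proposal has a structural gap that is worth pointing out. What you sketch is essentially the paper's own \emph{algorithm} (Jennrich on a well-conditioned subset, then deflate), and a robustness analysis of that algorithm shows: given $T'$ close to $T$, the procedure outputs vectors close to the $a_i$. But \cref{thm: bhaskara} is a \emph{uniqueness} statement: for \emph{any} $\rho'$-bounded decomposition $T'=\sum_i (a_i')^{\otimes 3}$, the given $a_i'$ must match the $a_i$. Your argument never connects the algorithm's output to the particular $a_i'$ that were handed to you; it only shows that \emph{some} decomposition of $T'$ is close to that of $T$. To close that gap you would need to know that the decomposition of $T'$ is unique (or that the algorithm, run on $T'$, must return the $a_i'$), which is precisely the content of the theorem you are trying to prove. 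The proof in \cite{bhaskara2014uniqueness} avoids this circularity by working directly with the two given factor matrices $A$ and $A'$ in the spirit of Kruskal's permutation lemma, rather than through an auxiliary recovery procedure.
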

The original statement in \cite{bhaskara2014uniqueness} explicitly assumes that $T$ (the sum of $R$ rank-1 tensors) has rank $R$, but this assumption is redundant: a tensor $T = \sum_{i\in[R]} a_i^{\otimes3}\in \Real^{d\times d \times d}$ with $3 \krank(A) \geq 2R+2$ cannot have another decomposition with less than $R$ terms because of Kruskal's uniqueness theorem \cite[Theorem 4a]{MR444690}. Note that in \cref{thm: bhaskara} a scaling matrix $\Lambda$ is introduced. We will use the following corollary instead to have a handier result without the scaling matrix:
\begin{corollary}
    \label{col: bhaskara corollary}
   In the setting of \cref{thm: bhaskara}, there exists a polynomial $\poly_{\ref*{col: bhaskara corollary}}(R,\tau,\rho,\rho',d)$ such that if $\varepsilon'\in(0,1)$ and
   $\varepsilon = \varepsilon'/\poly_{\ref*{col: bhaskara corollary}}(R, \tau, \rho,\rho', d)$, 
   then for any other $\rho'$-bounded decomposition $T' = \sum_{i\in[R]} (a_i')^{\otimes3}$ with $\norm{T'-T}_F\leq \varepsilon$, there exists a permutation $\pi$ of $[R]$ such that $\forall i\in[R]$,
    \(
    \norm{a_{\pi(i)} - a'_i}_2\leq \varepsilon'
    \).
\end{corollary}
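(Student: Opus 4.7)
The plan is to reduce to \cref{thm: bhaskara} with a slightly smaller accuracy parameter $\varepsilon''$ (to be chosen) and then absorb the diagonal scaling $\Lambda$ it produces into the per-column error, at the cost of one extra factor of $\rho$. Concretely, I would set $\varepsilon'' := \varepsilon'/(\rho+1)$ and apply \cref{thm: bhaskara} to obtain a permutation matrix $\Pi$ and diagonal matrix $\Lambda$ with $\norm{\Lambda^3 - I}_F \leq \varepsilon''$ and $\norm{A' - A\Pi\Lambda}_F \leq \varepsilon''$, provided $\norm{T-T'}_F \leq \varepsilon = \varepsilon''/\poly_{\ref*{thm: bhaskara}}(R,\tau,\rho,\rho',d)$. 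Defining $\poly_{\ref*{col: bhaskara corollary}} := (\rho+1)\poly_{\ref*{thm: bhaskara}}$ then gives the required dependence on $R,\tau,\rho,\rho',d$.

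The main step is to argue that $\Lambda$ is close to $I$ entrywise. The Frobenius bound on $\Lambda^3 - I$ controls each diagonal entry individually: $\abs{\lambda_i^3 - 1} \leq \varepsilon''$. Since $\varepsilon'' \leq \varepsilon' < 1$, this forces $\lambda_i^3 > 0$, so the real cube root $\lambda_i$ is positive, and the factorization $\lambda_i^3 - 1 = (\lambda_i - 1)(\lambda_i^2 + \lambda_i + 1)$ with $\lambda_i^2 + \lambda_i + 1 \geq 1$ yields $\abs{\lambda_i - 1} \leq \varepsilon''$ for every $i$. In particular, the sign ambiguity inherent to cube-root decompositions does not arise here, which is what makes the bound come out cleanly without any extra case analysis.

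Let $\pi$ be the permutation induced by $\Pi$. The $i$-th column of $A\Pi\Lambda$ is exactly $\lambda_i a_{\pi(i)}$, so $\norm{\lambda_i a_{\pi(i)} - a'_i}_2 \leq \norm{A\Pi\Lambda - A'}_F \leq \varepsilon''$. Combining with the triangle inequality and $\norm{a_{\pi(i)}}_2 \leq \rho$,
\[
\norm{a_{\pi(i)} - a'_i}_2 \leq \abs{1-\lambda_i}\,\norm{a_{\pi(i)}}_2 + \norm{\lambda_i a_{\pi(i)} - a'_i}_2 \leq \varepsilon''\rho + \varepsilon'' = \varepsilon''(\rho+1) = \varepsilon',
\]
which is the desired conclusion. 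I do not expect a real obstacle: all inputs to \cref{thm: bhaskara} are preserved, and the only quantitative ingredient is the elementary fact that $x \mapsto x^{1/3}$ is well-conditioned near $1$, which was already packaged into the factorization of $\lambda_i^3 - 1$ above.
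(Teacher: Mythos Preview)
Your proposal is correct and follows essentially the same approach as the paper's proof: apply \cref{thm: bhaskara} with a scaled accuracy, control $\abs{\lambda_i-1}$ via $\abs{\lambda_i^3-1}$, and absorb the scaling by triangle inequality using $\norm{a_{\pi(i)}}_2\le\rho$. The only cosmetic difference is that the paper uses the universal real inequality $\abs{x-1}\le\tfrac{4}{3}\abs{x^3-1}$ (from $x^2+x+1\ge\tfrac{3}{4}$) and hence the constant $1+4\rho/3$, whereas you first argue $\lambda_i>0$ to get $\lambda_i^2+\lambda_i+1\ge1$ and the constant $\rho+1$.
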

\begin{proof}
    We assume that the permutation is the identity. 
    Let $c =(1+4\rho/3)$ and $\poly_{\ref*{col: bhaskara corollary}} = c \poly_{\ref*{thm: bhaskara}}$. By \cref{thm: bhaskara}, we have that for each $i\in[R]$:
    \( 
    \norm{a_i' - \lambda_i a_i}_2\leq c^{-1}\varepsilon' \) and \( \abs{\lambda_i^3 -1} \leq c^{-1}\varepsilon'
    \).
    Since $\abs{x-1}\leq 4\abs{x^3-1}$/3 for all $x \in \RR$, the second inequality implies that: 
    \(
        \abs{\lambda_i - 1} \leq 4\abs{\lambda_i^3-1}/3\leq 4c^{-1}\eps'/3 
    \).
    Therefore
            $\norm{a_i' - a_i}_2 \leq \norm{a_i' - \lambda_i a_i}_2 + \abs{\lambda_i - 1}\norm{a_i}_2 \leq  (1 + 4\rho/3)c^{-1}\varepsilon'  = \varepsilon'$.
\end{proof}

\subsection{Robust decomposition}
\label{sec: robustness}
In this subsection, we will derive the forward error propagation of \cref{alg: approx tensor decomposition}, i.e.\ how the output error depends on the input error in each step of \cref{alg: approx tensor decomposition}.
We will assume throughout this subsection that we already have two unit vectors $x,y$ that are nearly orthogonal to $\hat a_{r+1},\ldots, \hat a_{r+k}$, 
that is, 
$ \abs{\inner{x}{\hat a_{r+i}}},\abs{\inner{y}{\hat a_{r+i}}}\leq \theta $ 
for $i\in[k]$, where $\theta$ will be chosen later, and $\rkrank{\tau}{A}\geq r$. 
Let $E_{\mathrm{in}} = T-\tilde T$ be the input error tensor.
\paragraph{Part 1: robust diagonalization} We first cite the robust analysis of \cref{alg: Jennrich}.
\begin{theorem}[{\cite[Theorem 5.4, Lemmas 5.1, 5.2]{goyal2014fourier}}]\label{thm:gvx}
    \label{thm: robustness of Jennrich}
    \hspace{-1ex}Let
    \(
    T_\mu = \sum_{i\in[r]} \mu_i a_i a_i^\top = A\diag(\mu)A^\top \), 
    \(
        T_\lambda = \sum_{i\in[r]} \lambda_i a_i a_i^\top= A\diag(\lambda)A^\top
    \), 
    $A = [a_1,\dotsc,a_r]$, $a_i \in \RR^d$, $\norm{a_i} = 1$, $\lambda_i, \mu_i \in \RR$ for $i \in [r]$.
    Suppose
         \textbf{(1)} $\sigma_r(A) > 0$,
         \textbf{(2)} $(\forall i)$ $0 < k_l \leq \abs{\mu_i}, \abs{\lambda_i} \leq k_u$, and
         \textbf{(3)} $(\forall i \neq j)$ $\abs{\mu_i/\lambda_i - \mu_j/\lambda_j} \geq \alpha > 0$.
    Let  $0<\eps_{\ref*{thm: robustness of Jennrich}}<1$ and $\tilde T_\mu, \tilde T_\lambda$ be matrices such that 
    $
    \fnorm{T_\mu - \tilde T_\mu}, \fnorm{T_\lambda - \tilde T_\lambda} \leq \frac{\eps_{\ref*{thm:gvx}} k_l^2 \sigma_r(A)^3 \min\{\alpha,1\}}{2^{11} \kappa(A) k_u r^2}
    $.
    Then \cref{alg: Jennrich} on input $\tilde T_\mu, \tilde T_\lambda$ outputs unit vectors $\tilde a_1,\dotsc,\tilde a_{r}$ such that for some permutation $\pi$ of $[r]$ and signs $s_1,\dotsc, s_{r} \in \{\pm 1\}$, and for all $i \in [r]$ we have 
    $\norm{a_{\pi(i)} - s_i \tilde a_i} \leq \eps_{\ref*{thm:gvx}}$.
    It runs in time $\poly(d,1/\alpha,1/k_l, 1/\sigma_r(A_r),1/\eps_{\ref*{thm:gvx}})$.
\end{theorem}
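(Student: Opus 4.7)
The plan is to analyze the algorithm in the exact case first, identifying what each step computes algebraically, and then propagate perturbations through the pipeline. In the noiseless case, let $W \in \Real^{d \times r}$ be an orthonormal basis for the top-$r$ left singular subspace of $T_\mu$; since $T_\mu = A \diag(\mu) A^\top$ has rank exactly $r$ (because $\sigma_r(A) > 0$ and $\abs{\mu_i} \geq k_l > 0$), this subspace coincides with the column space of $A$. Writing $S := W^\top A \in \Real^{r \times r}$, which is invertible since $\rank A = r$, the whitened matrices become $W^\top T_\mu W = S \diag(\mu) S^\top$ and $W^\top T_\lambda W = S \diag(\lambda) S^\top$, and the product formed in step~2 of \cref{alg: Jennrich} simplifies to $M = S \diag(\mu/\lambda) S^{-1}$. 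Thus $M$ has distinct eigenvalues $\mu_i/\lambda_i$ (by assumption~(3)) with columns of $S$ as right eigenvectors, and left-multiplying by $W$ yields $WS = A$, so after normalizing column lengths the algorithm recovers the $a_i$ exactly, up to signs and a permutation.

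For the perturbed version I would propagate errors step by step. Since $\sigma_r(T_\mu) \geq k_l \sigma_r(A)^2$, Wedin's $\sin\Theta$ theorem guarantees that an orthonormal basis $\tilde W$ of the top-$r$ left singular subspace of $\tilde T_\mu$ can be chosen so that $\fnorm{\tilde W - W}$ is of order $\fnorm{\tilde T_\mu - T_\mu}/(k_l \sigma_r(A)^2)$. Substituting $\tilde W$ into the whitening expression and using $\norm{T_\mu}_2, \norm{T_\lambda}_2 \leq k_u \norm{A}_2^2$, the triangle inequality together with the standard perturbation bound for matrix inversion applied to $S \diag(\lambda) S^\top$ (whose smallest singular value is at least $k_l \sigma_r(A)^2$) yields that the computed matrix $\tilde M$ is polynomially close to $M$, with the error multiplied by the natural factors $\kappa(A)$, $k_u$, $1/k_l$, $r$.

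The main obstacle is the eigenvector perturbation step for the nonsymmetric matrix $M$. Because $M$ is diagonalized by $S$, the relevant conditioning is $\kappa(S)$, which is controlled by $\kappa(A)$ since $W$ has orthonormal columns and $S = W^\top A$. A quantitative Bauer--Fike / $\sin\Theta$ bound for diagonalizable matrices with simple spectrum then guarantees that, provided $\fnorm{\tilde M - M}$ is below a threshold of order $\alpha/\kappa(S)$, there is a permutation $\pi$ pairing eigenvalues so that the corresponding eigenvectors of $\tilde M$ are within $O\bigl(\kappa(S)\fnorm{\tilde M - M}/\alpha\bigr)$ of the columns of $S$ up to rescaling. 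Combining this with the earlier bound on $\fnorm{\tilde W - W}$ and renormalizing the columns of $\tilde W \tilde P$ yields $\norm{a_{\pi(i)} - s_i \tilde a_i} \leq \eps_{\ref*{thm:gvx}}$ for a suitable sign $s_i$; tracking constants carefully should match the stated threshold $\eps_{\ref*{thm:gvx}} k_l^2 \sigma_r(A)^3 \min\{\alpha,1\}/(2^{11} \kappa(A) k_u r^2)$ on the input error. The polynomial running time is immediate, since every step is an SVD, eigendecomposition, or matrix product on matrices of size at most $d \times d$.
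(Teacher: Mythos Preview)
This theorem is not proved in the paper; it is quoted verbatim from \cite[Theorem 5.4, Lemmas 5.1, 5.2]{goyal2014fourier} and used as a black box, so there is no ``paper's own proof'' to compare against. Your outline is nonetheless a faithful reconstruction of the argument in \cite{goyal2014fourier}: the exact-case algebra (whitening to $S\diag(\mu)S^\top$, reading off eigenvectors of $S\diag(\mu/\lambda)S^{-1}$, and recovering $A=WS$) is exactly how Jennrich's algorithm is analyzed there, and the perturbation analysis via Wedin for the subspace $W$, inverse perturbation for $(W^\top T_\lambda W)^{-1}$, and a Bauer--Fike/eigenvector bound for the nonsymmetric diagonalizable $M$ with conditioning $\kappa(S)\leq\kappa(A)$ and gap $\alpha$ is precisely the content of their Lemmas 5.1--5.2 and Theorem 5.4.

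The one place where your sketch is genuinely loose is the eigenvector step: standard Bauer--Fike only controls eigenvalues, and ``$\sin\Theta$ for diagonalizable matrices'' is not an off-the-shelf theorem. What \cite{goyal2014fourier} actually does (and what you would need to spell out) is to bound, for each eigenvalue $\tilde\mu_i/\tilde\lambda_i$ of $\tilde M$, the distance from the corresponding approximate eigenvector to the true one by writing it in the $S$-basis and using that the other eigenvalues of $M$ are at least $\alpha$ away; this is where the $\min\{\alpha,1\}$ and the extra factors of $\kappa(A)$ and $r$ enter. Your high-level statement that the perturbation is $O(\kappa(S)\fnorm{\tilde M-M}/\alpha)$ is morally right but hides this work, and ``tracking constants carefully should match'' is doing a lot of lifting for the specific constant $2^{11}$. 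As a proof \emph{plan} it is correct; as a proof it would need that lemma made explicit.
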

Now we apply \cref{thm:gvx} to our case: let $E_x = T_x - \tilde T_x$ and $E_y = T_y - \tilde T_y$. Write
\(
    \tilde T_x = \hat A_{r} D_x \hat A_{r}^\top + \hat A_{>r} D_x'\hat A_{>r}^\top + (E_{\mathrm{in}})_x,
\)
where $\hat A_r$ contains $\hat a_i$s as columns, $D_x = \diag(\norm{a_i}^3\inner{x}{\hat a_i})$ for $i\in[r]$ and $\hat A_{>r}$ contains $\hat a_{r+i}$s, $D_x' = \diag(\norm{a_{r+i}}^3\inner{x}{\hat a_{r+i}})$ for $i\in[k]$. Then we have
\begin{equation}
    \label{equ: norm of Ex}
    \norm{E_x}_F = \norm{\hat A_{>r} D_x'\hat A_{>r}^\top + (E_{\mathrm{in}})_x}_F \leq kM^3\theta + \eps_{in},
\end{equation}
and similarly for $E_y$. The following lemma guarantees the correctness of step \ref{line:decomposition} in \cref{alg: approx tensor decomposition}.
\begin{lemma}
    \label{lem: first decomposition}
    Let $\tilde a_1,\ldots,\tilde a_r$ be the outputs of step \ref{line:decomposition} in \cref{alg: approx tensor decomposition}.
    If
    \textbf{(1)} $\forall i\in[r]$: $0 < \hfrac{k_l}{m^3} \leq \abs{\inner{x}{\hat a_i}}, \abs{\inner{y}{\hat a_i}}\leq 1$, and 
    \textbf{(2)} $\forall i,j\in[r], i\neq j$: $\bigabs{\inner{x}{\hat a_r}/\inner{y}{\hat a_r} - \inner{x}{\hat a_r}/\inner{y}{\hat a_r}} \geq \alpha >0$,
    then there are signs $s_1,\ldots, s_r\in\{\pm1\}$ and a permutation $\pi$ of $[r]$ such that $\forall i\in[r]$: $\norm{\hat a_{\pi(i)} - s_i\tilde a_i} \leq \eps_{\ref*{lem: first decomposition}} 
    := \frac{2^{11}\tau^4M^7r^{5/2}(kM^3\theta + \eps_{in})}{k_l^2\min\{\alpha,1\}}$. 
    This step runs in time $\poly(d, \alpha^{-1}, k_l^{-1}, \tau, M, \eps_{\ref*{lem: first decomposition}}^{-1})$.
\end{lemma}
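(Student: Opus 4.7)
The plan is to cast the matrix pair $(\tilde T_x, \tilde T_y)$ fed to Jennrich's algorithm (\cref{alg: Jennrich}) into exactly the form required by the robust analysis of \cref{thm:gvx}, treating everything outside the desired low-rank structure as perturbation. Using the decomposition already derived in the excerpt,
\[
    \tilde T_x = \hat A_r D_x \hat A_r^\top + \hat A_{>r} D_x' \hat A_{>r}^\top + (E_{\mathrm{in}})_x,
\]
I would identify $T_\mu := \hat A_r D_x \hat A_r^\top$ and $T_\lambda := \hat A_r D_y \hat A_r^\top$ as the ``true'' matrices in the sense of \cref{thm:gvx}, with $\mu_i = \norm{a_i}^3 \inner{x}{\hat a_i}$ and $\lambda_i = \norm{a_i}^3 \inner{y}{\hat a_i}$, and absorb both the $\hat A_{>r}$ contribution and the input noise into the perturbation $\tilde T_\mu - T_\mu$. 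The bound $\norm{E_x}_F \leq k M^3 \theta + \eps_{\mathrm{in}}$ from equation \eqref{equ: norm of Ex} (and the analogue for $E_y$) then quantifies this perturbation.

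Next I would verify the three hypotheses of \cref{thm:gvx}. Hypothesis (1), $\sigma_r(\hat A_r) > 0$, follows from $\rkrank{\tau}{A} \geq r$: writing $\hat A_r = A_r D^{-1}$ with $D = \diag(\norm{a_i})$, we get $\sigma_r(\hat A_r) \geq \sigma_r(A_r)/M \geq 1/(\tau M)$, and since the columns of $\hat A_r$ are unit vectors we also have $\sigma_1(\hat A_r) \leq \sqrt{r}$, so $\kappa(\hat A_r) \leq \sqrt{r}\, \tau M$. Hypothesis (2) gives the eigenvalue magnitude bounds: from assumption (1) of the lemma and $m \leq \norm{a_i} \leq M$, I get $\abs{\mu_i}, \abs{\lambda_i} \geq m^3 \cdot k_l/m^3 = k_l$, and similarly $\abs{\mu_i}, \abs{\lambda_i} \leq M^3$, so we may take $k_u = M^3$. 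Hypothesis (3) on ratio separation is immediate from assumption (2) of the lemma, since $\mu_i/\lambda_i = \inner{x}{\hat a_i}/\inner{y}{\hat a_i}$.

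With the constants $\sigma_r(\hat A_r)^{-3} \leq (\tau M)^3$, $\kappa(\hat A_r) \leq \sqrt{r}\,\tau M$, $k_u = M^3$ in hand, I would apply \cref{thm:gvx} with target output accuracy
\[
    \eps_{\ref*{lem: first decomposition}} \;=\; \frac{2^{11}\, \kappa(\hat A_r)\, k_u\, r^2}{k_l^2\, \sigma_r(\hat A_r)^3\, \min\{\alpha,1\}}\, \norm{E_x}_F,
\]
which, after substituting the bounds above and $\norm{E_x}_F \leq k M^3 \theta + \eps_{\mathrm{in}}$, yields exactly the expression $\frac{2^{11} \tau^4 M^7 r^{5/2}(k M^3 \theta + \eps_{\mathrm{in}})}{k_l^2 \min\{\alpha,1\}}$ claimed in the lemma. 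The conclusion of \cref{thm:gvx} then provides a permutation $\pi$ and signs $s_i \in \{\pm 1\}$ with $\norm{\hat a_{\pi(i)} - s_i \tilde a_i} \leq \eps_{\ref*{lem: first decomposition}}$, and the runtime bound follows directly from the runtime statement in \cref{thm:gvx} after substituting the same parameters.

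I do not expect any real obstacle here: the whole argument is a careful accounting exercise to check that the perturbation actually fed to Jennrich's algorithm is small enough relative to the conditioning quantities that appear in \cref{thm:gvx}. The only slightly delicate point is to remember that the ``noise'' seen by Jennrich's algorithm has two contributions, the input error $(E_{\mathrm{in}})_x$ and the leakage $\hat A_{>r} D_x' \hat A_{>r}^\top$ caused by $x$ not being exactly orthogonal to $\hat a_{r+1}, \dotsc, \hat a_{r+k}$; bounding the latter by $k M^3 \theta$ uses $\norm{a_{r+i}} \leq M$ together with the near-orthogonality $\abs{\inner{x}{\hat a_{r+i}}} \leq \theta$.
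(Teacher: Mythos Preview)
Your proposal is correct and follows essentially the same approach as the paper's own proof: both identify $T_\mu = \hat A_r D_x \hat A_r^\top$, $T_\lambda = \hat A_r D_y \hat A_r^\top$, verify the three hypotheses of \cref{thm:gvx} via the robust Kruskal rank bound $\sigma_r(\hat A_r)\geq 1/(\tau M)$ and the stated assumptions, and then plug the perturbation estimate \eqref{equ: norm of Ex} with $k_u=M^3$ into the error expression from \cref{thm:gvx} to obtain the claimed $\eps_{\ref*{lem: first decomposition}}$. Your write-up is in fact more explicit than the paper's (which compresses the bookkeeping into the single identity $\sigma_r(\hat A_r)^3\kappa(\hat A_r)^{-1}\geq (\sqrt{r}\,\tau^4 M^4)^{-1}$), but the substance is identical.
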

\begin{proof}
    Condition 1 in \cref{thm: robustness of Jennrich} holds since $\rkrank{\tau}{A} \geq r$: 
    $\sigma_r(\hat A_r) \geq \sigma_r(A_r)/M \geq 1/(\tau M)$. 
    Conditions 2 and 3 in \cref{thm:gvx} hold because of our assumptions. 
    Combining \cref{equ: norm of Ex} and $\rkrank{\tau}{A} \geq r$ which implies
    \(
    \sigma_r(\hat A_r)^3\kappa(\hat A_r)^{-1} = \sigma_r(\hat A_r)^4\sigma_1(\hat A_r)^{-1}\geq  (\sqrt{r}\tau^4M^4)^{-1}
    \),
    the assumptions of \cref{thm:gvx} are satisfied with parameter $k_u = M^3$.
    The claim follows.
\end{proof}
Since $x,y$ are actually chosen at random, we provide the probability for assumptions of \cref{lem: first decomposition} to hold in \cref{sec: probablity bounds section}. 

\paragraph{Part 2: norm estimation} 
The next step is to recover $\norm{a_i}_2$. 
This can be done by solving the least squares problem in step \ref{line:leastsquares}.
To see this, one can verify that when $\tilde a_i = \hat a_i$ and $\tilde T_x = T_x$ (no error in earlier steps),  
$\xi_i = \norm{a_i}_2^3$
is a zero error solution to step \ref{line:leastsquares}.
The following lemma guarantees that we can approximate the norm via step \ref{line:leastsquares}:
\begin{lemma}[Norm estimation]
    \label{lem: norm theorem}
    Let $\tilde b_1,\dotsc, \tilde b_r$ be the columns of $(\tilde A_r^\dagger)^\top$. 
    If \cref{lem: first decomposition} holds with $\eps_{\ref*{lem: first decomposition}} \leq \min\{k_l/(2m^3),(2\sqrt{r}\tau M)^{-1}\}$,
    then $\xi_i = \hfrac{\tilde T(x,\Tilde{b}_i,\Tilde{b}_i)}{\inner{x}{\Tilde{a}_i}}$ for $i\in[r]$
    is the unique solution to step \ref{line:leastsquares} in \cref{alg: approx tensor decomposition} and
    for the permutation $\pi$, signs $s_i$ in \cref{lem: first decomposition} and all $i\in[r]$ we have 
    $    \Abs{\norm{a_{\pi(i)}}_2^3 - s_i{\xi_i}} \leq \eps_{\ref*{lem: norm theorem}} := 2k_l^{-1}m^3 M^2 \left[ 3M\eps_{\ref*{lem: first decomposition}} + rM\eps_{\ref*{lem: first decomposition}}^2 + 4\tau^2(kM^3\theta + \eps_{in}) \right]$.
    \details{
        The number of arithmetic operations to find an exact solution for a rational $d$-by-$r$ linear least squares problem $\min_{x \in \RR^r} \norm{Ax-b}^2$ when $A$ has full column rank is $\poly(d,r)$ by using that the unique solution is given by $A^\dagger b = (A^\top A)^{-1} A^\top b$. 
        This also gives a strongly polynomial time algorithm via Edmonds's Gaussian elimination algorithm.}
\end{lemma}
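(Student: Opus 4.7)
The plan is to structure the proof in three parts: (i) verifying that the closed-form expression for $\xi_i$ is well-defined, (ii) showing that it is the unique solution to the least-squares problem in step \ref{line:leastsquares}, and (iii) bounding the error $\abs{\snorm{a_{\pi(i)}}_2^3 - s_i \xi_i}$. For notational clarity I would assume without loss of generality that the permutation $\pi$ from \cref{lem: first decomposition} is the identity and write $\tilde a_i = s_i \hat a_i + \epsilon_i$ with $\snorm{\epsilon_i}_2 \leq \eps_{\ref*{lem: first decomposition}}$.

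For (i), two ingredients from \cref{lem: first decomposition} are carried over. First, $\abs{\inner{x}{\tilde a_i}} \geq k_l/(2m^3)$, obtained by applying the triangle inequality to $\inner{x}{\tilde a_i} = s_i\inner{x}{\hat a_i} + \inner{x}{\epsilon_i}$ together with the hypotheses $\abs{\inner{x}{\hat a_i}} \geq k_l/m^3$ and $\eps_{\ref*{lem: first decomposition}} \leq k_l/(2m^3)$; this makes division by $\inner{x}{\tilde a_i}$ safe. Second, $\sigma_r(\tilde A_r) \geq 1/(2\tau M)$, obtained by a singular-value perturbation estimate combining $\sigma_r(\hat A_r) \geq 1/(\tau M)$ (which follows from $\rkrank{\tau}{A} \geq r$ after normalizing columns) with $\eps_{\ref*{lem: first decomposition}} \leq (2\sqrt{r}\tau M)^{-1}$; this yields $\snorm{\tilde b_i}_2 \leq 2\tau M$ on the columns of $(\tilde A_r^\dagger)^\top$.

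For (ii), I would exploit the dual-basis identity $\tilde b_i^\top \tilde a_j = \delta_{ij}$. Contracting the matrix $\tilde A_r \diag(\xi_i \inner{x}{\tilde a_i}) \tilde A_r^\top$ by $\tilde b_i$ on both sides collapses it to $\xi_i\inner{x}{\tilde a_i}$, so matching the corresponding contractions of $\tilde T_x$ forces the stated closed form. Uniqueness of the solution then follows from the injectivity of the linear map $\xi \mapsto \tilde A_r \diag(\xi_i \inner{x}{\tilde a_i}) \tilde A_r^\top$, guaranteed by the full column rank of $\tilde A_r$ and the nonvanishing of $\inner{x}{\tilde a_i}$.

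For (iii), I would expand $\tilde T_x = T_x - (E_{\mathrm{in}})_x$ and split $T_x$ into contributions from $j \leq r$ and $j > r$. Substituting $\hat a_j = s_j\tilde a_j - s_j\epsilon_j$ into $(\tilde b_i^\top \hat a_j)^2$ produces, for $j \leq r$, a main term $\delta_{ij}$ plus corrections of order $\snorm{\tilde b_i}_2\eps_{\ref*{lem: first decomposition}}$ (from the diagonal contribution) and $\snorm{\tilde b_i}_2^2\eps_{\ref*{lem: first decomposition}}^2$ (from the off-diagonal contributions); the $j > r$ terms are controlled via $\abs{\inner{x}{\hat a_j}} \leq \theta$ and $\abs{\tilde b_i^\top \hat a_j} \leq \snorm{\tilde b_i}_2$, producing a $kM^3\theta\snorm{\tilde b_i}_2^2$ contribution; and the input-error piece is bounded by $\abs{\tilde b_i^\top (E_{\mathrm{in}})_x \tilde b_i} \leq \snorm{\tilde b_i}_2^2 \eps_{in}$. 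Comparing the aggregate against $\snorm{a_i}_2^3 \inner{x}{\tilde a_i}$ (which requires accounting also for the $\snorm{a_i}_2^3 \inner{x}{\epsilon_i}$ slack coming from $\inner{x}{\tilde a_i} - s_i\inner{x}{\hat a_i}$) and dividing by $\abs{\inner{x}{\tilde a_i}} \geq k_l/(2m^3)$ yields the claimed bound $\eps_{\ref*{lem: norm theorem}}$ after collecting terms. The main obstacle is the careful bookkeeping of these cross terms, particularly how the factor $\snorm{\tilde b_i}_2 = O(\tau M)$ propagates through the quadratic dependencies to produce the $\tau^2$ factor and the $M^5$ factor in the $k$-component piece of the final bound.
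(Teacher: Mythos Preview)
Your proposal is correct and follows essentially the same approach as the paper's proof. The only organizational difference is that the paper bundles the $j>r$ contributions together with the input-error term into a single error matrix $E_x$ (with $\fnorm{E_x}\leq kM^3\theta+\eps_{in}$) and handles them jointly via $\abs{\tilde b_i^\top E_x\tilde b_i}\leq \snorm{\tilde b_i}_2^2\fnorm{E_x}$, whereas you treat the $j>r$ piece and the $(E_{\mathrm{in}})_x$ piece separately; and the paper phrases the uniqueness argument via the Khatri--Rao matrix $\tilde A^{\odot 2}$ rather than the dual-basis contraction, but these are the same computation.
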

\begin{proof}
    For simplicity we assume the permutation is the identity. We start by showing $\sigma_r(\tilde A_r) >0$, which implies $\tilde A_r^\dagger \tilde A_r = I_r$ and thus $\tilde b_i$ is orthogonal to $\tilde a_j$ for $i,j\in[r], i\neq j$.
    By \cref{lem: first decomposition}, the distance between corresponding columns of $\tilde A_r\diag(s_i)$ and $\hat A_r$ is at most $\eps_{\ref*{lem: first decomposition}}$. therefore by \cref{thm: weylsvd} we have
    $ \bigabs{\sigma_r \bigl( \tilde A_r\diag(s_i) \bigr) - \sigma_r(\hat A_r)} \leq \norm{\tilde A_r\diag(s_i) - \hat A_r}_2 \leq \sqrt{r}\eps_{\ref*{lem:  first decomposition}}$,
    which implies
        \begin{equation}
        \label{equ: least singular value of tilde A_r}
        \sigma_r(\tilde A_r) 
        = \sigma_r \bigl(\tilde A_r\diag(s_i) \bigr) 
        \geq \sigma_r(\hat A_r) - \sqrt{r}\eps_{\ref*{lem:  first decomposition}} 
        \geq (\tau M)^{-1}  - \sqrt{r}\eps_{\ref*{lem:  first decomposition}} \geq \hfrac{1}{(2\tau M)}.
    \end{equation}

    Next, we show that $\xi_i$ is the unique solution to step \ref{line:leastsquares}.
    We restate the least squares problem in a matrix-vector product form,
    \( 
        \min_{\xi_i} \norm{\tilde A^{\odot2}[\inner{x}{\tilde a_1}\xi_1,\ldots,\inner{x}{\tilde a_r}\xi_r]^\top - \vecop(\tilde T_x) }_2
    \),
    where $\tilde A^{\odot2} = [\vecop(\tilde a_1\tilde a_1^\top),\dotsc, \vecop(\tilde a_r\tilde a_r^\top)] \in \Real^{d^2\times r}$. It follows that $\sigma_r(\tilde A^{\odot2}) = \sigma_r(\tilde A_r)^2 > 0$ and thus the solution is unique. Let $\tilde B^{\odot2} = [\vecop(\tilde b_1\tilde b_1^\top),\dotsc, \vecop(\tilde b_r\tilde b_r^\top)]^\top$ and notice that $\tilde B^{\odot2}\tilde A^{\odot2} = I_r$. 
    The solution to the least squares problem is then given by
    \( 
       [\inner{x}{\tilde a_1}\xi_1,\ldots,\inner{x}{\tilde a_r}\xi_r]^\top = \tilde B^{\odot2} \vecop(\tilde T_x) = [\tilde b_1^\top \tilde T_x \tilde b_1,\ldots,\tilde b_r^\top \tilde T_x \tilde b_r]^\top
    \),
    which implies $\xi_i = \tilde T(x,\tilde b_i,\tilde b_i)/\inner{x}{\tilde a_i}$.
    
    Finally we show that $s_i\xi_i$ is close to $\norm{a_i}_2^3$. 
    The deviation of $s_i\xi_i$ from $\norm{a_i}^3_2$ is bounded by:
    \begin{equation}
        \label{equ: multiplier error}
        \begin{split}
        \Abs{\norm{a_i}_2^3 &- s_i\xi_i} 
        = \biggl\lvert \norm{a_i}^3_2 - \frac{1}{\inner{x}{s_i\Tilde{a}_i}}\Bigl(\sum_{j\in[r]}\inner{x}{a_j}\inner{\tilde b_i}{a_j}^2 + \tilde b_j^\top E_x \tilde b_j\Bigr) \biggr\rvert \\
        &\leq \biggl\lvert \frac{\inner{x}{\Hat{a}_i}\inner{\Tilde{b}_i}{\Hat{a}_i}^2}{\inner{x}{s_i\Tilde{a}_i}} - 1 \biggr\rvert \norm{a_i}_2^3 
            + \sum_{j \in [r], j\neq i} \left(\norm{a_j}_2^3\biggl\lvert \frac{\inner{x}{\hat{a}_j}\inner{\Tilde{b}_i}{\hat{a}_j}^2}{\inner{x}{s_i\Tilde{a}_i}}\biggr\rvert + \biggl\lvert \frac{\tilde b_i^\top E_x \tilde b_i}{\inner{x}{s_i\Tilde{a}_{i}}} \biggr\rvert \right).
        \end{split}
    \end{equation}
    We analyze the deviation of each term in \cref{equ: multiplier error}. By standard arguments using triangle and Cauchy-Schwarz inequalities, we have for all $i, j\in[r]$:
    \begin{equation}
        \label{equ: inner product bounds}
        \begin{split}
            &\abs{\inner{x}{s_i\Tilde{a}_i}} \geq \abs{\inner{x}{\Hat{a}_i}}-\eps_{\ref*{lem:  first decomposition}} \geq k_l/m^3-\eps_{\ref*{lem:  first decomposition}} \geq k_l/(2m^3),\\
            &\abs{\inner{x}{s_j\Tilde{a}_j} - \inner{x}{\Hat{a}_j}}\leq \eps_{\ref*{lem:  first decomposition}},\quad \abs{\inner{\tilde{b}_i}{s_j\tilde{a}_j}-\inner{\Tilde{b}_i}{\hat{a}_j}} \leq \eps_{\ref*{lem:  first decomposition}},
        \end{split}
    \end{equation}
    where the first line comes from the assumptions of the lemma, and the last line follows from \cref{lem: first decomposition}.
    Notice that $\tilde b_i$ is orthogonal to $\tilde a_j$ for $j\neq i$, and $\inner{\tilde b_i}{\tilde a_i} = 1$. 
    \Cref{equ: inner product bounds} implies that:
    \begin{equation}
        \label{equ: multiplier error term 1}
        \biggabs{\frac{\inner{x}{\Hat{a}_i}\inner{\Tilde{b}_i}{\Hat{a}_i}^2}{\inner{x}{s_i\Tilde{a}_i}} -1 } \leq 6k_l^{-1}m^3\eps_{\ref*{lem:  first decomposition}},\quad 
        \biggabs{\frac{\inner{x}{\hat{a}_j}\inner{\Tilde{b}_i}{\hat{a}_j}^2}{\inner{x}{s_i\Tilde{a}_i}}}\leq 2k_l^{-1}m^3\eps_{\ref*{lem:  first decomposition}}^2.
    \end{equation}
    The last term in \cref{equ: multiplier error} is bounded by:
    \begin{equation}
        \label{equ: multiplier error term 2}
        \biggabs{\frac{\tilde b_i^\top E_x \tilde b_i}{\inner{x}{s_i\Tilde{a}_{i}}}} 
        \leq 2k_l^{-1}m^3 \norm{E_x}_2 \norm{\tilde b_i}_2^2 
        \leq 2k_l^{-1}m^3\norm{E_x}_F\sigma_r(\tilde A_r)^{-2} 
        \leq 8k_l^{-1}m^3 \tau^2M^2\norm{E_x}_F,
    \end{equation}
    where the second inequality follows from the definition of $\tilde b_i$, and the last inequality applies \cref{equ: least singular value of tilde A_r}. Combining \cref{equ: norm of Ex,equ: multiplier error,equ: multiplier error term 1,equ: multiplier error term 2} gives the desired result.
\end{proof}

\paragraph{Part 3: deflation} 
After we deflate $T$ with the previously recovered $r$ components, the induced error with respect to the exact deflation $\sum_{i=r+1}^{r+k} a_i^{\otimes3}$ is given by\details{actual deflated $ T + E_{\mathrm{in}} - \sum_{i\in[r]}  \xi_i\Tilde{a}_i^{\otimes3}$ minus ideal deflated: $\sum_{i=r+1}^{r+k} a_i^{\otimes3} $} 
$
E' = E_{\mathrm{in}} + \sum_{i\in[r]} (a_i^{\otimes3} - \xi_i\Tilde{a}_i^{\otimes3})
$. 
Now we show that the remaining tensor can be decomposed with the same strategy via step \ref{line:decomposition2} in \cref{alg: approx tensor decomposition}.
\begin{lemma}[Deflation]
    \label{lem: deflation}
    Let $\Tilde{a}_{r+1},\dotsc, \tilde a_{r+k}$ be the outputs of step \ref{line:decomposition2} in \cref{alg: approx tensor decomposition}.
    If
        \textbf{(1)} $\forall i \in[k]$: $0<\hfrac{k_l'}{m^3}\leq \bigabs{\inner{x'}{\hat a_{r+i}}}, \bigabs{\inner{y'}{\hat a_{r+i}}}\leq 1$, and
        \textbf{(2)} $\forall i,j\in[k], i\neq j$: $\bigabs{\inner{x'}{\hat a_{r+i}}/\inner{y'}{\hat a_{r+i}} - \inner{x'}{\hat a_{r+j}}/\inner{y'}{\hat a_{r+j}}} \geq \alpha' >0$,
    then there are signs $s_{r+1},\ldots, s_{r+k}\in\{\pm 1\}$ and a permutation $\pi'$ of $[k]$ such that $\forall i\in[k]$:
    \(
        \norm{\Hat{a}_{r+\pi'(i)} - s_{r+i}\Tilde{a}_{r+i}}_2\leq \eps_{\ref*{lem: deflation}} :=  \frac{2^{11}\tau^4M^7k^{5/2} \norm{E'}_F }{(k'_l)^2\min\{\alpha',1\}}
     \). This step runs in time $\poly(d, {k'_l}^{-1}, {\alpha'}^{-1}, \tau, M, \eps_{\ref*{lem: deflation}}^{-1})$.
\end{lemma}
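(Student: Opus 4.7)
The plan is to mirror the proof of \cref{lem: first decomposition}, now applying \cref{thm: robustness of Jennrich} to the contractions of the residual tensor. Writing $R = \sum_{i=r+1}^{r+k} a_i^{\otimes 3} + E'$, I first decompose
\[
R_{x'} = \hat A_{>r} D'_{x'} \hat A_{>r}^\top + (E')_{x'},
\]
with $D'_{x'} = \diag \bigl(\norm{a_{r+i}}_2^3 \inner{x'}{\hat a_{r+i}}\bigr)_{i\in[k]}$, and analogously $R_{y'} = \hat A_{>r} D'_{y'} \hat A_{>r}^\top + (E')_{y'}$. Since contracting an order-$3$ tensor along one mode by a unit vector is non-expansive in Frobenius norm, the perturbations satisfy $\fnorm{(E')_{x'}}, \fnorm{(E')_{y'}} \leq \fnorm{E'}$.

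Next I verify the three hypotheses of \cref{thm: robustness of Jennrich} for $\hat A_{>r} \in \RR^{d \times k}$. For condition (1), $\rkrank{\tau}{A} \geq r$ together with $k \leq r$ (inherited from the overall hypothesis $k \leq (r-2)/2$ of \cref{thm:main tensor}) implies, via the interlacing of singular values under column deletion, that $\sigma_k(A_{>r}) \geq 1/\tau$, hence $\sigma_k(\hat A_{>r}) \geq 1/(\tau M)$ and $\kappa(\hat A_{>r}) \leq \sqrt{k}\,\tau M$. Condition (2) follows from $\norm{a_{r+i}}_2 \in [m, M]$ combined with hypothesis (1) of the lemma: the eigenvalues $\norm{a_{r+i}}_2^3 \inner{x'}{\hat a_{r+i}}$ lie in $[k'_l, M^3]$, so one may take $k_l = k'_l$ and $k_u = M^3$ in \cref{thm: robustness of Jennrich} (the $m^3$ factors cancel by design). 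Condition (3) is hypothesis (2) of the lemma with separation parameter $\alpha'$.

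Plugging these parameter values into the admissible input-error bound $\frac{\eps\, k_l^2 \sigma_k(\hat A_{>r})^3 \min\{\alpha,1\}}{2^{11}\kappa(\hat A_{>r})\, k_u\, k^2}$ from \cref{thm: robustness of Jennrich}, and matching it against $\fnorm{(E')_{x'}}, \fnorm{(E')_{y'}} \leq \fnorm{E'}$ with desired output accuracy $\eps = \eps_{\ref*{lem: deflation}}$, routine algebra reproduces exactly the stated
\[
\eps_{\ref*{lem: deflation}} = \frac{2^{11}\tau^4 M^7 k^{5/2}\, \fnorm{E'}}{(k'_l)^2 \min\{\alpha',1\}}.
\]
\Cref{thm: robustness of Jennrich} then yields unit vectors $\tilde a_{r+1}, \dotsc, \tilde a_{r+k}$ together with signs $s_{r+i} \in \{\pm 1\}$ and a permutation $\pi'$ of $[k]$ satisfying $\norm{\hat a_{r+\pi'(i)} - s_{r+i}\tilde a_{r+i}}_2 \leq \eps_{\ref*{lem: deflation}}$, within the claimed running time $\poly(d, {k'_l}^{-1}, {\alpha'}^{-1}, \tau, M, \eps_{\ref*{lem: deflation}}^{-1})$. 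The main obstacle is purely bookkeeping of the polynomial factors so that the derived bound matches the one stated; the only conceptual subtlety beyond \cref{lem: first decomposition} is recognizing that $\hat A_{>r}$ is not guaranteed well-conditioned directly by the Kruskal rank hypothesis but inherits $\sigma_k(\hat A_{>r}) \geq 1/(\tau M)$ via singular-value interlacing once $k \leq r$.
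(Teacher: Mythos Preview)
Your proposal is correct and matches the paper's own approach: the paper simply states that the proof is identical to that of \cref{lem: first decomposition} (applying \cref{thm: robustness of Jennrich} to $R_{x'}, R_{y'}$ with $T_\mu = \hat A_{>r}\diag(\|a_{r+i}\|_2^3\inner{x'}{\hat a_{r+i}})\hat A_{>r}^\top$, etc.), and your write-up fills in precisely those details. Your explicit invocation of singular-value interlacing to obtain $\sigma_k(\hat A_{>r}) \geq 1/(\tau M)$ from $\rkrank{\tau}{A} \geq r$ when $k \leq r$ is a nice touch that the paper leaves implicit.
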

\begin{proof}
    The proof is similar to the proof of \cref{lem: first decomposition} and thus omitted here.
    \details{
    We only need to show that \cref{thm: robustness of Jennrich} can be applied here. Take 
    \[
    \begin{array}{ll}
        \tilde T_\mu = R_{x'} & T_\mu = \hat A_{>r} \diag(\norm{a_{r+i}}_2^3\inner{x'}{\hat a_{r+i}}) \hat A_{>r}^\top \\
        \tilde T_{\lambda} = R_{y'} & T_\lambda = \hat A_{>r} \diag(\norm{a_{r+i}}_2^3\inner{y'}{\hat a_{r+i}}) \hat A_{>r}^\top.
    \end{array}\]
    Condition 1 in \cref{thm: robustness of Jennrich} holds because $\rkrank{\tau}{A} \geq r$.
    Condition 2 and 3 in \cref{thm: robustness of Jennrich} follow with parameters $k_l', M^3,\alpha'$. 
    Combining
    \( 
        \norm{\tilde T_\mu - T_\mu}_F, 
        \norm{\tilde T_\lambda- T_\lambda}_F 
        \leq \norm{E'}_F
    \)
    with the robust Kruskal rank condition $\rkrank{\tau}{A} \geq r$ which guarantees
    \(\sigma_k(\hat A_{>r})^3\kappa(\hat A_{>r})^{-1} = \sigma_k(\hat A_{>r})^4\sigma_1(\hat A_{>r})^{-1} (\sqrt{k}\tau^4M^4)^{-1},\)
    \cref{thm: robustness of Jennrich} hold and the claim follows.
    }    
\end{proof}
With $\tilde a_{r+1},\dotsc,\tilde a_{r+k}$, we can further approximate the norm of $a_{r+1},\ldots,a_{r+k}$, in the same way we did for the first $r$ components, via step \ref{line:leastsquares2}. 
The following lemma guarantees it works:
\begin{lemma}
    \label{lem: norm theorem 2}
    Let $\tilde b_{r+1}, \dotsc, \tilde b_{r+k}$ be the columns of $(\tilde A_{>r}^\dagger)^\top$. 
    If \cref{lem: deflation} holds with $\eps_{\ref*{lem: deflation}} \leq \min\{k_l'/(2m^3),(2\sqrt{k}\tau M)^{-1}\}$, then
    $\xi_{r+i} = \hfrac{R(x',\Tilde{b}_{r+i},\Tilde{b}_{r+i})}{\inner{x'}{\Tilde{a}_{r+i}}}$, for $i\in[k]$
    is the unique solution to step \ref{line:leastsquares2} in \cref{alg: approx tensor decomposition} and
    for the permutation $\pi'$, signs $s_{r+i}$ in \cref{lem: deflation}, and all $i\in[k]$ we have
    $
        \Abs{\norm{a_{r+\pi'(i)}}^3 - s_{r+i}\xi_{r+i}} 
        \leq \eps_{\ref*{lem: norm theorem 2}} 
        := 2k_l'^{-1} m^3 M^2 \bigl[ 3M\eps_{\ref*{lem: deflation}}+kM\eps_{\ref*{lem: deflation}}^2
        + 4 \tau^2\norm{E'}_F \bigr]
    $.
\end{lemma}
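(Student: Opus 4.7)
The plan is to mirror the proof of \cref{lem: norm theorem} step by step, replacing the first-$r$ components with the last-$k$ components and the input error $E_x$ with the deflation error tensor $E'$ obtained by reconstructing and subtracting the first $r$ components. Since the structure of the residual tensor $R = \sum_{i=r+1}^{r+k} a_i^{\otimes 3} + E'$ is entirely analogous to the original tensor in step \ref{line:leastsquares} but with a different error term, the same chain of estimates will carry over.

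First, I would verify that $\sigma_k(\tilde A_{>r}) \geq 1/(2\tau M)$. This follows from the robust Kruskal-rank bound $\sigma_k(\hat A_{>r}) \geq 1/(\tau M)$ (implied by $\rkrank{\tau}{A} \geq r$ applied to the last $k$ normalized columns, together with $\norm{a_i}_2 \leq M$), combined with \cref{thm: weylsvd} and \cref{lem: deflation}, using the hypothesis $\eps_{\ref*{lem: deflation}} \leq (2\sqrt{k}\tau M)^{-1}$. This lower bound guarantees that $\tilde A_{>r}^\dagger \tilde A_{>r} = I_k$, so the dual vectors $\tilde b_{r+i}$ satisfy $\inner{\tilde b_{r+i}}{\tilde a_{r+j}} = \delta_{ij}$.

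Second, I would rewrite the least squares problem as
\[
\min_{\xi_{r+1},\dotsc,\xi_{r+k}} \bignorm{\tilde A_{>r}^{\odot 2}\, [\inner{x'}{\tilde a_{r+1}}\xi_{r+1},\dotsc, \inner{x'}{\tilde a_{r+k}}\xi_{r+k}]^\top - \vecop(R_{x'})}_2,
\]
where $\tilde A_{>r}^{\odot 2} = [\vecop(\tilde a_{r+1}\tilde a_{r+1}^\top),\dotsc,\vecop(\tilde a_{r+k}\tilde a_{r+k}^\top)] \in \Real^{d^2\times k}$. Because $\sigma_k(\tilde A_{>r}^{\odot 2}) = \sigma_k(\tilde A_{>r})^2 > 0$, the solution is unique and can be read off by left-multiplying by $\tilde B^{\odot 2} := [\vecop(\tilde b_{r+1}\tilde b_{r+1}^\top),\dotsc,\vecop(\tilde b_{r+k}\tilde b_{r+k}^\top)]^\top$, which is a left inverse of $\tilde A_{>r}^{\odot 2}$. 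This yields the stated closed form $\xi_{r+i} = R(x',\tilde b_{r+i},\tilde b_{r+i})/\inner{x'}{\tilde a_{r+i}}$.

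Third, I would bound $\Abs{\norm{a_{r+i}}_2^3 - s_{r+i}\xi_{r+i}}$ by substituting $R = \sum_{j\in[k]} a_{r+j}^{\otimes 3} + E'$, splitting into the diagonal term $j = i$, the off-diagonal terms $j\neq i$, and the error term $\tilde b_{r+i}^\top E'_{x'} \tilde b_{r+i}/\inner{x'}{\tilde a_{r+i}}$, in complete analogy with \cref{equ: multiplier error}. The three analogues of \cref{equ: inner product bounds}--\cref{equ: multiplier error term 2} then hold verbatim after swapping $(r,k_l,\eps_{\ref*{lem:  first decomposition}}, \norm{E_x}_F)$ for $(k, k_l',\eps_{\ref*{lem: deflation}}, \norm{E'}_F)$: the assumption $\eps_{\ref*{lem: deflation}} \leq k_l'/(2m^3)$ guarantees $\abs{\inner{x'}{s_{r+i}\tilde a_{r+i}}} \geq k_l'/(2m^3)$, the pseudoinverse bound $\norm{\tilde b_{r+i}}_2 \leq 1/\sigma_k(\tilde A_{>r}) \leq 2\tau M$ gives the $4\tau^2 M^2$ factor in the noise term, and the Cauchy--Schwarz/triangle inequality manipulations are identical.

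No new ideas are required beyond those in \cref{lem: norm theorem}; the main obstacle is simply careful bookkeeping of the index shift from the first $r$ to the last $k$ components and of the replacement $\norm{E_x}_F \to \norm{E'}_F$, so as to arrive at exactly the claimed $\eps_{\ref*{lem: norm theorem 2}}$. Since the argument is verbatim analogous, I would state this explicitly and refer the reader to the proof of \cref{lem: norm theorem} for the repeated computations.
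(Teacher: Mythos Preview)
Your proposal is correct and matches the paper's own approach exactly: the paper states that the proof is similar to that of \cref{lem: norm theorem} and omits the details, and the hidden computations in the source are precisely the index-shifted versions of \cref{equ: least singular value of tilde A_r}--\cref{equ: multiplier error term 2} that you outline. The only cosmetic point is that the error slice appearing in the analogue of \cref{equ: multiplier error term 2} is $E'_{x'}$, which you then bound by $\norm{E'}_F$ using $\norm{x'}_2=1$; this is implicit in your substitution $\norm{E_x}_F\to\norm{E'}_F$ and is exactly what the paper does.
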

\begin{proof}
    The proof is similar to the proof of \cref{lem: norm theorem} and thus omitted here.
    \details{We start by bounding $\sigma_k(\tilde A_{>r})$ from below. By \cref{lem: deflation}, the distance between corresponding columns of $\tilde A_r\diag(s_i)$ and $\hat A_r$ is at most $\eps_{\ref*{lem: deflation}}$. Similarly, by \cref{thm: weylsvd}:
    \begin{equation}
        \label{equ: least singular value of tilde A>r}
        \sigma_k(\tilde A_{>r}) = \sigma_k(\diag(s_{r+i})\tilde A_{>r}) \geq \sigma_k(\hat A_{>r}) - \sqrt{k}\eps_{\ref*{lem: deflation}} \geq \frac{1}{\tau M} - \sqrt{k}\eps_{\ref*{lem: deflation}} \geq \frac{1}{2\tau M}.
    \end{equation}
    Thus by reformulating step \ref{line:leastsquares2}, we can show that $\xi_{r+i} = \hfrac{R(x',\Tilde{b}_{r+i},\Tilde{b}_{r+i})}{\inner{x'}{\Tilde{a}_{r+i}}}$ is the unique solution to the least squares problem
    and $\Abs{\norm{a_{r+i}}^3 - s_{r+i}\xi_{r+i}}$ is bounded by:
    \begin{equation}
        \label{equ: deflated norm estimation}
        \begin{split}
            \Abs{\norm{a_{r+i}}^3 - s_{r+i}\xi_{r+i}} &\leq \norm{a_{r+i}}^3\Abs{\frac{\inner{x'}{\Hat{a}_{r+i}}\inner{\Tilde{b}_{r+i}}{\Hat{a}_{r+i}}^2}{\inner{x'}{s_{r+i}\Tilde{a}_{r+i}}} - 1}\\ 
            &\quad + \sum_{j\in[k] \setminus i} \bigl(\norm{a_{r+j}}^3 \Abs{\frac{\inner{x'}{\Hat{a}_{r+j}}\inner{\Tilde{b}_{r+i}}{\Hat{a}_{r+j}}^2}{\inner{x'}{s_{r+i}\Tilde{a}_{r+i}}}} + \Abs{\frac{\tilde b_{r+i} ^\top E'_x \tilde b_{r+i}}{\inner{x'}{s_{r+i}\Tilde{a}_{r+i}}}}\bigr).
        \end{split}
    \end{equation}
    Similar to \cref{equ: inner product bounds}, we have the following bounds for the terms in \cref{equ: deflated norm estimation}:
    \begin{equation}
        \label{equ: inner product bounds 2}
        \begin{split}
            &\abs{\inner{x'}{s_{r+i}\tilde a_{r+i}}} \geq k_l'/m^3 -\eps_{\ref*{lem: deflation}} \geq k'_l/(2m^3) \quad \text{for }i\in[k], \\
            &\abs{\inner{x'}{\Hat{a}_{r+j}} - \inner{x'}{s_{r+i}\Tilde{a}_{r+j}}}\leq \eps_{\ref*{lem: deflation}},\quad \abs{\inner{\Tilde{b}_{r+i}}{\Hat{a}_{r+j}} - \inner{\Tilde{b}_{r+i}}{s_{r+i}\Tilde{a}_{r+j}}} \leq \eps_{\ref*{lem: deflation}} \quad \text{for }i\in[k],
        \end{split}
    \end{equation}
    where the first inequality is from assumptions of the lemma, the second and the last are from the conclusion of \cref{lem: deflation}.
    Now we can bound \cref{equ: deflated norm estimation} with \cref{equ: least singular value of tilde A>r,equ: inner product bounds 2}:
    \begin{equation*}
        \Abs{\norm{a_{r+i}}^3 - s_{r+i}\xi_{r+i}}\leq 2{k'_l}^{-1}m^3M^3(3\eps_{\ref*{lem: deflation}}+(k-1)\eps_{\ref*{lem: deflation}}^2) + 8{k'_l}^{-1}m^3\tau^2M^2\norm{E'}_F.
    \end{equation*}}
\end{proof}

\subsection{Probability bounds}
\label{sec: probablity bounds section}
We give here bounds on the probability of finding good  $x,y,x',y'$ so that \cref{alg: approx tensor decomposition} succeeds with positive probability. 
Throughout this subsection, let $x,y$ be two iid.\ random vectors distributed uniformly on $\sphere^{d-1}$, and $\rkrank{\tau M}{[\hat a_1,\dotsc,\hat a_{r+k}]}\geq r$.

We first list the events that need to hold to apply \cref{lem: first decomposition}:
\begin{enumerate}
    \item vanishing last $k$ terms: $\event_{1,y}=\{\forall i \in [k], \abs{\inner{y}{\hat a_{r+i}}} \leq \theta\}$;
    \item lower bounds on first $r$ terms: $\event_{2,y} = \{\forall i\in[r], \abs{\inner{y}{\hat a_i}} \geq k_l/m^3\}$;
    \item the eigenvalue gap: $\event_3=\{\forall i\neq j, i,j\in[r],\abs{\inner{x}{\hat a_i}/\inner{y}{\hat a_i} - \inner{x}{\hat a_j}/\inner{y}{\hat a_j}} \geq \alpha >0\}$.
\end{enumerate}
We have similar events $\event_{1,x},\event_{2,x}$. 
Note that in this subsection $k_l,\theta$ and $\alpha$ are considered as fixed parameters.

The structure of this subsection is stated as follows: 
we will first demonstrate our proof idea for controlling the probability of the listed events, as the union bound would be too weak to work for them. 
After presenting our idea, we will first analyze the probability of $\event_{1,y}\cap \event_{2,y}$, then the probability of $\event_{1,x}\cap\event_{2,x}\cap\event_3$ when conditioned on the other events of $y$. 
Finally we will collect these sub-events and give the probability that all of them will hold.


To bound the probability of $\event_{1,y}\cap\event_{2,y}$, we give the idea of our analysis below:
\details{note that for our algorithm to succeed, $\theta$ needs to be close to zero and much smaller than $k_l$, and in fact $\event_{1,y}$ happens with somewhat small probability. 
Hence the union bound is too weak to work here. 
We need to carefully bound the probability for these events.}

\paragraph{Bands argument}
We analyze the events geometrically and replace random unit vectors by random Gaussian vectors together with concentration of their norm. 
Let $z$ be a random Gaussian vector let $a$ and $b$ be two unit vectors.
An event of the form $\{\abs{\inner{z}{a}} \leq t_1\}$ corresponds to a band, while an event like $\{\abs{\inner{z}{b}} \geq t_2\}$ corresponds to the complement of a band. 
We call them bands of type \RN{1} and type \RN{2}, denoted by $\mathcal{B}_{1}$ and $\mathcal{B}_{2}$, respectively. 
To better illustrate this, we give a demonstration of bands as the shaded areas in \cref{fig:bands}. 
\begin{figure}[htbp]
    \centering
    \subfloat[Band of type \RN{1}]{\label{fig: band 1}\tikzset{every picture/.style={line width=0.75pt}} 
        \begin{tikzpicture}[x=0.75pt,y=0.75pt,yscale=-.75,xscale=.75]
        \draw [color={rgb, 255:red, 0; green, 0; blue, 0 }  ,draw opacity=1 ][fill={rgb, 255:red, 137; green, 102; blue, 102 }  ,fill opacity=1 ]   (428.25,119.75) -- (198.78,119.75) ;
        \draw [color={rgb, 255:red, 0; green, 0; blue, 0 }  ,draw opacity=1 ][fill={rgb, 255:red, 137; green, 102; blue, 102 }  ,fill opacity=1 ]   (428.25,170.75) -- (198.78,170.75) ;
        
        \draw  [draw opacity=0][fill={rgb, 255:red, 158; green, 158; blue, 158 }  ,fill opacity=1 ] (198.78,119.75) -- (428.25,119.75) -- (428.25,170.75) -- (198.78,170.75) -- cycle ;
        
        \draw (304,136) node [anchor=north west][inner sep=0.75pt]  [xscale=0.8,yscale=0.8]  {$\mathcal{B}_{1}$};

    \end{tikzpicture}}\hspace{3cm}
    \subfloat[Band of type \RN{2}]{\label{fig: band 2}\tikzset{every picture/.style={line width=0.75pt}} 

        \begin{tikzpicture}[x=0.75pt,y=0.75pt,yscale=-1,xscale=1]
        
        \draw  [draw opacity=0][fill={rgb, 255:red, 255; green, 255; blue, 255 }  ,fill opacity=1 ] (193,137) -- (387.5,137) -- (387.5,160.78) -- (193,160.78) -- cycle ;
        \draw  [draw opacity=0][fill={rgb, 255:red, 128; green, 128; blue, 128 }  ,fill opacity=1 ] (193,124.11) -- (387.5,124.11) -- (387.5,137) -- (193,137) -- cycle ;
        \draw    (193,137) -- (387.5,137) ;
        \draw    (193,160) -- (387.5,160) ;
        \draw  [draw opacity=0][fill={rgb, 255:red, 128; green, 128; blue, 128 }  ,fill opacity=1 ] (193,160) -- (387.5,160.78) -- (387.5,173.67) -- (193,173.67) -- cycle ;
        
        \draw (277,142) node [anchor=north west][inner sep=0.75pt]  [font=\normalsize,xscale=0.8,yscale=0.8] [align=left] {$\displaystyle \mathcal{B}_{2}$};
        \end{tikzpicture}}
    \caption{Example of bands}\label{fig:bands}
\end{figure}
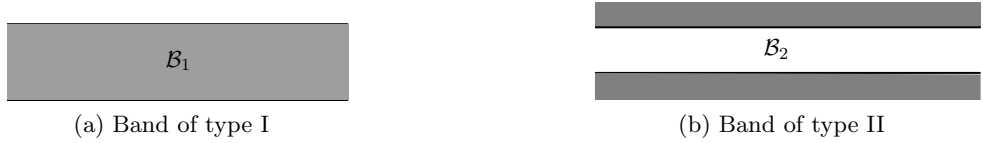
The intersection of bands of type \RN{1} can be lower-bounded with \cref{lem: uniform distribution lower bound 2} (a direct use of the Gaussian correlation inequality), while the intersection of bands of different types needs special care. 
Consider $\mathcal{B}_1\cap \mathcal{B}_2$: when $\inner{a}{b}=0$, the intersection becomes $\mathcal{B}_{1}$ with a rectangular region excluded. 
In this case, the two bands will be orthogonal, and the two events are independent.
In the general case, the excluded region is a parallelogram depending on $\inner{a}{b}$. See \cref{fig:intersection} for illustration. 
In the extreme case, two bands are parallel and hence the probability will be zero when $t_1 \leq t_2$. But when $\inner{a}{b}$ is not too close to one, we can, when bounding the probability, replace the parallelogram by a slightly larger rectangular region without decreasing the final probability too much, which is shown by the white dashed lines in \cref{fig: intersection 2}. This is essentially done by projecting $b$ onto $\spanop{\{a\}}$ and $\spanop{\{a\}}^\perp$.
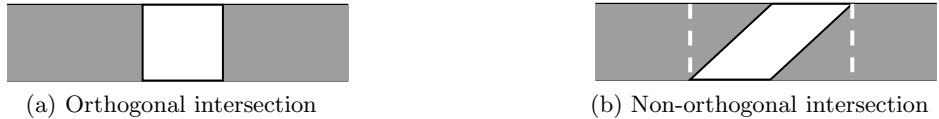
\begin{figure}[htbp]
    \centering
    \subfloat[Orthogonal intersection]{\label{fig: intersection 1}\tikzset{every picture/.style={line width=0.75pt}} 
    
    \begin{tikzpicture}[x=0.75pt,y=0.75pt,yscale=-0.75,xscale=0.75]
    
    \draw [color={rgb, 255:red, 0; green, 0; blue, 0 }  ,draw opacity=1 ][fill={rgb, 255:red, 137; green, 102; blue, 102 }  ,fill opacity=1 ]   (428.25,119.75) -- (198.78,119.75) ;
    \draw [color={rgb, 255:red, 0; green, 0; blue, 0 }  ,draw opacity=1 ][fill={rgb, 255:red, 137; green, 102; blue, 102 }  ,fill opacity=1 ]   (428.25,170.75) -- (198.78,170.75) ;
    
    \draw  [draw opacity=0][fill={rgb, 255:red, 158; green, 158; blue, 158 }  ,fill opacity=1 ][line width=0.75]  (198.78,119.75) -- (428.25,119.75) -- (428.25,170.75) -- (198.78,170.75) -- cycle ;
    \draw  [color={rgb, 255:red, 0; green, 0; blue, 0 }  ,draw opacity=1 ][fill={rgb, 255:red, 255; green, 255; blue, 255 }  ,fill opacity=1 ] (290,119.75) -- (344,119.75) -- (344,170.75) -- (290,170.75) -- cycle ;
    
    \end{tikzpicture}}
    \hspace{3cm}
    \subfloat[Non-orthogonal intersection]{\label{fig: intersection 2}\tikzset{every picture/.style={line width=0.75pt}} 

    \begin{tikzpicture}[x=0.75pt,y=0.75pt,yscale=-0.75,xscale=0.75]
    
    \draw [color={rgb, 255:red, 0; green, 0; blue, 0 }  ,draw opacity=1 ][fill={rgb, 255:red, 137; green, 102; blue, 102 }  ,fill opacity=1 ]   (428.25,119.75) -- (198.78,119.75) ;
    \draw [color={rgb, 255:red, 0; green, 0; blue, 0 }  ,draw opacity=1 ][fill={rgb, 255:red, 137; green, 102; blue, 102 }  ,fill opacity=1 ]   (428.25,170.75) -- (198.78,170.75) ;
    
    \draw  [draw opacity=0][fill={rgb, 255:red, 158; green, 158; blue, 158 }  ,fill opacity=1 ][line width=0.75]  (198.78,119.75) -- (428.25,119.75) -- (428.25,170.75) -- (198.78,170.75) -- cycle ;
    \draw  [color={rgb, 255:red, 0; green, 0; blue, 0 }  ,draw opacity=1 ][fill={rgb, 255:red, 255; green, 255; blue, 255 }  ,fill opacity=1 ] (317.5,119.75) -- (371.5,119.75) -- (316.5,170.75) -- (262.5,170.75) -- cycle ;
    \draw [color={rgb, 255:red, 255; green, 255; blue, 255 }  ,draw opacity=1 ][line width=1.5]  [dash pattern={on 5.63pt off 4.5pt}]  (262.5,119.83) -- (262.5,170.75) ;
    \draw [color={rgb, 255:red, 255; green, 255; blue, 255 }  ,draw opacity=1 ][line width=1.5]  [dash pattern={on 5.63pt off 4.5pt}]  (371.5,119.75) -- (371.5,170.67) ;
    \end{tikzpicture}}
    \caption{Intersection of bands}\label{fig:intersection}
\end{figure}

We see that events $\event_{1,y}, \event_{2,y}$ are the intersection of bands and their probability is the probability measure of their intersection. 
Specifically, we have:
$\event_{1,y} = \cap^k_{i=i}\mathcal{B}_{1,i},$ $\event_{2,y} = \cap^r_{j=1}\mathcal{B}_{2,j},$
where $\mathcal{B}_{1,i}:= \{\abs{\inner{y}{\hat a_{r+i}}}\leq \theta\}$ and $\mathcal{B}_{2,j}:= \{\abs{\inner{y}{\hat a_j}}\geq k_l/m^3\}$. For the rest of this subsection, let $S = \spanop\{\hat a_{r+1},\dotsc,\hat a_{r+k}\}^\perp$, $S^\perp = \spanop\{\hat a_{r+1},\dotsc,\hat a_{r+k}\}$, and $\proj_S$ be the orthogonal projection onto $S$ and $\proj_{S^\perp} = I -\proj_S$.
Now we can bound the probability of $\event_{1,y}\cap\event_{2,y}$:
\begin{lemma}
    \label{lem: tensor probability bound 2}
    If $k_l>0$ and $0 < \theta\leq 2/\sqrt{d}$, then
    $\prob[\event_{1,y}\cap \event_{2,y}] 
    \geq p_1:= (\theta \sqrt{d}/8)^k \bigl(1/4- r\sqrt{d/2\pi}\tau M (4k_l/m^3 + \sqrt{k}\tau M\theta) \bigr)$.
\end{lemma}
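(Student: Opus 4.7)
The plan is to follow the ``bands'' geometric strategy outlined before the statement. First, write $y = z/\norm{z}_2$ where $z \sim N(0, I_d)$ and transfer the sphere events to Gaussian events $\mathcal{B}'_{1,i} = \{ \abs{\inner{z}{\hat a_{r+i}}} \leq \sqrt{d}\theta/2 \}$ and $\mathcal{B}'_{2,j} = \{ \abs{\inner{z}{\hat a_j}} \geq 2\sqrt{d}k_l/m^3 \}$, paying a small cost for confining $\norm{z}_2$ to, say, $[\sqrt d/2, 2\sqrt d]$. This cost (and the associated constants) is what should produce the $1/4$ factor (in place of a leading $1$ minus exponentially small noise) in the final bound. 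Once in Gaussian-land I expand
\[
\prob[\event_{1,y}\cap\event_{2,y}] \geq \prob\Bigl[\cap_{i\in[k]}\mathcal{B}'_{1,i}\Bigr] - \sum_{j\in[r]}\prob\Bigl[\bigl(\cap_{i\in[k]}\mathcal{B}'_{1,i}\bigr)\cap(\mathcal{B}'_{2,j})^c\Bigr],
\]
reducing the task to two sub-problems.

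For the positive term, I would use the Gaussian correlation inequality (the cited \cref{lem: uniform distribution lower bound 2}) to factor the intersection of type-\RN{1} bands into a product of single-band probabilities, each of which is lower bounded via the standard Gaussian density. Using the assumption $\theta\leq 2/\sqrt d$, the exponential factor $e^{-d\theta^2/8}$ is a constant bounded below, and after absorbing the concentration-of-norm loss the leading factor should come out at $(\theta\sqrt d/8)^k$.

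For each cross-term $\prob[(\cap_i\mathcal{B}'_{1,i})\cap(\mathcal{B}'_{2,j})^c]$ I would decompose $\hat a_j = \proj_{S^\perp}\hat a_j + \proj_S\hat a_j$ where $S = \spanop\{\hat a_{r+1},\dotsc,\hat a_{r+k}\}^\perp$. Conditional on $\cap_i \mathcal{B}'_{1,i}$, the contribution $\inner{z}{\proj_{S^\perp}\hat a_j} = z^\top \hat A_{>r}\hat A_{>r}^\dagger \hat a_j$ is deterministically bounded by $\sqrt{dk}\tau M\theta/2$, using the robust Kruskal rank of $A$ to control $\snorm{\hat A_{>r}^\dagger \hat a_j}_2 \leq \sqrt k \tau M$. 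Crucially, $\proj_S \hat a_j$ is orthogonal to all of the $\hat a_{r+i}$, so $\inner{z}{\proj_S \hat a_j}$ is independent of the conditioning event, and is distributed as $N(0,\snorm{\proj_S \hat a_j}_2^2)$ with $\snorm{\proj_S \hat a_j}_2\geq 1/(\tau M)$ (again by robust Kruskal rank, applied to the $(k{+}1)$-subset $\{\hat a_j,\hat a_{r+1},\dotsc,\hat a_{r+k}\}$). The negation event $(\mathcal{B}'_{2,j})^c$ then forces $\abs{\inner{z}{\proj_S \hat a_j}} \leq 2\sqrt d k_l/m^3 + \sqrt{dk}\tau M\theta/2$, and bounding the Gaussian density of $\inner{z}{\proj_S \hat a_j}$ by its maximum $(\sqrt{2\pi}\snorm{\proj_S \hat a_j}_2)^{-1}\leq \tau M/\sqrt{2\pi}$ yields a probability at most $\sqrt{d/(2\pi)}\,\tau M(4k_l/m^3+\sqrt k\tau M\theta)$. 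Summing over $j\in[r]$ and factoring the $\prob[\cap_i\mathcal{B}'_{1,i}]$ factor out of the whole expression produces the claimed bound.

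The main obstacle I anticipate is keeping the conditional argument clean: one must justify that conditioning on the band event $\cap_i\mathcal{B}'_{1,i}$ (which depends only on $\proj_{S^\perp}z$) leaves the independent component $\proj_S z$ untouched, while simultaneously using the conditioning to control $\inner{z}{\proj_{S^\perp}\hat a_j}$. Once that decomposition is stated carefully, the remaining estimates are routine Gaussian density bounds and applications of the robust Kruskal rank hypothesis; the only constants to track are those absorbed into the $1/4$ and $1/8$ factors coming from the norm-concentration step and the Gaussian correlation step, respectively.
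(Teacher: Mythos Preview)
Your plan handles the cross terms $\prob[(\cap_i\mathcal{B}'_{1,i})\cap(\mathcal{B}'_{2,j})^c]$ exactly as the paper does, via the $S/S^\perp$ split and the robust Kruskal rank bound; that part is fine.

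The gap is in the norm-concentration step. You say you will ``pay a small cost for confining $\norm{z}_2$ to $[\sqrt d/2,2\sqrt d]$'' and that this cost is what produces the $1/4$. But the naive way to do this is to subtract $\prob[\norm{z}_2\notin[\sqrt d/2,2\sqrt d]]\leq 2e^{-cd}$ \emph{additively}, which yields a bound of the shape
\[
(\theta\sqrt d/8)^k\bigl(1-r\sqrt{d/2\pi}\,\tau M(\dotsb)\bigr)-2e^{-cd},
\]
not the claimed $(\theta\sqrt d/8)^k\bigl(1/4-\dotsb\bigr)$. Since $\theta$ is later taken to be polynomially small (namely $\theta=\eps/q$), the term $(\theta\sqrt d/8)^k$ can be far smaller than $e^{-cd}$, so the additive error cannot be absorbed and the stated $p_1$ does not follow.

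The paper's fix is to make the two norm-tail errors factor through $\prob[\eventA]$ where $\eventA=\cap_i\mathcal{B}'_{1,i}$. For the lower tail it replaces $\{\norm{z}_2\leq\sqrt d/2\}$ by the larger event $\{\norm{\proj_S z}_2\leq\sqrt d/2\}$; since $\eventA$ depends only on $\proj_{S^\perp}z$, this is \emph{independent} of $\eventA$, giving $\prob[\norm{\proj_S z}_2\leq\sqrt d/2,\eventA]\leq\tfrac12\prob[\eventA]$. For the upper tail it uses the Gaussian correlation inequality to get $\prob[\norm{z}_2\leq 2\sqrt d\mid\eventA]\geq\prob[\norm{z}_2\leq 2\sqrt d]$, and then Markov gives $\prob[\norm{z}_2\geq 2\sqrt d,\eventA]\leq\tfrac14\prob[\eventA]$. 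Now every negative term carries a factor $\prob[\eventA]\geq(\theta\sqrt d/8)^k$, and $1-\tfrac12-\tfrac14=\tfrac14$ is exactly the constant inside the bracket. You already articulated the $S/S^\perp$ independence idea for the cross terms; you need to invoke it (and the GCI) for the norm tails as well.
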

\begin{proof}
Write $y = z/\enorm{z}$, where $z$ is a standard Gaussian random vector.
Consider the following events corresponding to $z$, for $\lowerr, \upperr$ to be chosen later: 
$\mathcal{B}'_{1,i} := \{\abs{\inner{z}{\hat a_{r+i}}}\leq \lowerr \theta\}$ and 
$\mathcal{B}'_{2,j} := \{\abs{\inner{z}{\hat a_j}} \geq \upperr k_l/m^3\}$. 
We have
\begin{align*}
\event_{1,y} \cap \event_{2,y} 
&= (\cap_i \ev{B}_{1,i}) \cap (\cap_j \ev{B}_{2,j})
= (\cap_i \ev{B}_{1,i}) \setminus (\cup_j \ev{B}_{2,j}^c) \\
&\supseteq (\cap_i \ev{B}'_{1,i} \setminus \{ \enorm{z} \leq \lowerr \}) \setminus \cup_j ((\ev{B}'_{2,j})^c \cup \{\enorm{z} \geq \upperr \}).
\end{align*}
Set $\eventA = \cap_{i\in[k]} \ev{B}'_{1,i}$.
Since $\eventA \setminus \{ \enorm{z} \leq \lowerr\}
= \eventA \setminus (\{ \enorm{z} \leq \lowerr\} \cap \eventA) 
\supseteq \eventA \setminus (\{ \enorm{\proj_S z} \leq \lowerr \} \cap \eventA)
$:
\begin{align}
\event_{1,y} \cap \event_{2,y} 
&\supseteq \bigl(\eventA \setminus (\{ \enorm{\proj_S z} \leq \lowerr \} \cap \eventA) \bigr) \setminus \cup_j ((\ev{B}'_{2,j})^c \cup \{\enorm{z} \geq \upperr \}) \nonumber \\ 
&= \eventA \setminus \Bigl( 
    (\{ \enorm{\proj_S z} \leq \lowerr \} \cap \eventA) 
    \bigcup \cup_{j \in [r]} ((\ev{B}'_{2,j})^c \cap \eventA) 
    \bigcup (\{\enorm{z} \geq \upperr\} \cap \eventA) 
    \Bigr)\label{equ:terms}
\end{align}
We now bound the probabilities of the terms in \eqref{equ:terms}. First,
\begin{align*}
    \pr[(\ev{B}'_{2,j})^c, \eventA] &= \prob \bigl[ \abs{\inner{z}{\hat a_j}} \leq \upperr k_l/m^3 \bigm| \eventA \bigr] \prob[\eventA].
\end{align*}
Notice that when conditioning on the event $\abs{\inner{z}{\hat a_{r+i}}}\leq \lowerr \theta$ for $i\in[k]$ we have:
\begin{equation}
    \label{equ: projection after condition2}
    \abs{\inner{z}{\proj_{S^\perp}\hat a_j}} = \abs{z^\top \hat A_{>r} \hat A_{>r}^\dagger \hat a_j} \leq \lowerr \sqrt{k}\theta\norm{\hat A_{>r}^\dagger\hat a_j}_2 \leq \lowerr \sqrt{k}\tau M\theta,
\end{equation}
where the first equality comes from the definition of the projection, the second inequality follows from the conditioning, and the last comes from the robust Kruskal rank condition. 
Furthermore, we notice that $\proj_S\hat a_j$ is orthogonal to $\hat a_{r+1},\dotsc,\hat a_{r+k}$ and the conditioning can therefore be dropped after applying \cref{equ: projection after condition2}:
\begin{equation*}
    \begin{split}
        \prob\bigl[\abs{\inner{z}{\hat a_j}} \leq \upperr k_l/m^3 \bigm| \eventA \bigr]
        &\leq \prob\bigl[\abs{\inner{z}{\proj_S \hat a_j}} \leq \upperr k_l/m^3 + \abs{\inner{z}{\proj_{S^\perp} \hat a_1}} \bigm| \eventA \bigr] \\
        &\leq \prob\bigl[\abs{\inner{z}{\proj_S \hat a_j}} \leq \upperr k_l/m^3 + \lowerr \sqrt{k}\tau M\theta\bigr]\\
        &\leq 2(\sqrt{2\pi} \enorm{\proj_S\hat a_1})^{-1} (\upperr k_l/m^3 + \lowerr \sqrt{k}\tau M\theta)\\
        &\leq \sqrt{2/\pi} \tau M (\upperr k_l/m^3 + \lowerr \sqrt{k}\tau M\theta),
    \end{split}
\end{equation*}
where the last two steps follow from bounding the density of a Gaussian distribution from above and the fact that $\{\hat a_j, \hat a_{r+1},\dotsc,\hat a_{r+k}\}$ also satisfies the robust Kruskal rank condition so that $\norm{\proj_S\hat a_j}_2 \geq (\tau M)^{-1}$.

We use the following bounds for the rest of the terms in \eqref{equ:terms}:
\begin{align*}
\pr[\eventA] &\geq (\lowerr \theta/4)^k \quad \text{(\cref{lem: uniform distribution lower bound 2})},\\
\pr[\enorm{\proj_S z} \leq \lowerr, \eventA] 
    &= \pr[\enorm{\proj_S z} \leq \lowerr] \pr[\eventA] \leq \pr[\eventA]/2 \quad \text{(set $\lowerr = \sqrt{d}/2$)},\\
\pr[\enorm{z} \geq \upperr, \eventA] 
&= \pr[\enorm{z} \geq \upperr \giventhat \eventA] \pr[\eventA] 
= (1-\pr[\enorm{z} \leq \upperr \giventhat \eventA]) \pr[\eventA] \\
&\leq (1-\pr[\enorm{z} \leq \upperr]) \pr[\eventA] \qquad \text{(Gaussian correlation inequality), and} \\
&\leq \pr[\eventA]/4 \qquad \text{(Markov's inequality, set $\upperr = 2\sqrt{d}$)}.
\end{align*}
Combining the previous estimates we get
$\pr[\event_{1,y} \cap \event_{2,y}] 
\geq \pr[\eventA]\bigl(1 - 1/2 - r \sqrt{2/\pi} \tau M ( \upperr k_l/m^3 + \lowerr \sqrt{k}\tau M\theta) - 1/4 \bigr)$.
The claim follows.
\end{proof}

At this point, we are ready to analyze the probability of $\mathcal{E}_3$.
\begin{lemma}
    \label{lem: tensor probability bound 3}
    In the setting of \cref{lem: tensor probability bound 2}, let
    \(
    p_2 = p_1 - (\theta\sqrt{d}/8)^k r^2\tau M(\sqrt{dk}\theta\tau M k_l^{-1}m^3+ \alpha)
    \).
    Then
    $
    \prob[\event_3\cap\event_{1,x}\cap\event_{2,x} \vert \event_{1,y},\event_{2,y}]\geq p_2.
    $
\end{lemma}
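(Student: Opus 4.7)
The plan is to exploit the independence of $x$ and $y$: conditioning on $\event_{1,y}\cap\event_{2,y}$ affects only $y$, so the marginal of $x$ remains uniform on $\sphere^{d-1}$. Writing $\event_3\cap\event_{1,x}\cap\event_{2,x}=(\event_{1,x}\cap\event_{2,x})\setminus\event_3^c$, one has
\[
\prob[\event_3\cap\event_{1,x}\cap\event_{2,x}\giventhat \event_{1,y},\event_{2,y}]\geq \prob[\event_{1,x}\cap\event_{2,x}]-\prob[\event_3^c\cap\event_{1,x}\giventhat \event_{1,y},\event_{2,y}],
\]
where the first term is at least $p_1$ by \cref{lem: tensor probability bound 2} applied to $x$ (identically distributed to $y$). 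Everything then reduces to upper-bounding the failure term by $(\theta\sqrt{d}/8)^k r^2\tau M(\sqrt{dk}\theta\tau M k_l^{-1}m^3+\alpha)$.

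For the failure term, I would apply a union bound over the at most $r^2$ ordered pairs $(i,j)\in[r]^2$ with $i\neq j$. For a fixed $y$ satisfying $\event_{2,y}$ and a pair $(i,j)$, the event $\event_3^c$ reads $\abs{\inner{x}{u_{ij}}}<\alpha$ with $u_{ij}:=\hat a_i/\inner{y}{\hat a_i}-\hat a_j/\inner{y}{\hat a_j}$. Together with $\event_{1,x}$ this is the intersection of $k+1$ type-\RN{1} bands, exactly the configuration handled in the proof of \cref{lem: tensor probability bound 2}. I would reuse that machinery: Gaussianize via $x=z/\enorm{z}$, restrict to $\sqrt{d}/2\leq\enorm{z}\leq 2\sqrt{d}$ (losing only exponentially small mass, absorbed just as before), and decompose $u_{ij}=\proj_{S^\perp}u_{ij}+\proj_S u_{ij}$. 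Under $\event_{1,x}$, one has $\abs{\inner{z}{\proj_{S^\perp}u_{ij}}}\leq (\sqrt{d}/2)\sqrt{k}\theta\enorm{\hat A_{>r}^\dagger u_{ij}}\leq \sqrt{dk}\theta\tau M m^3/k_l$, using $\enorm{u_{ij}}\leq 2m^3/k_l$ (from $\event_{2,y}$) together with the robust Kruskal rank condition. The residual event $\abs{\inner{z}{\proj_S u_{ij}}}\leq 2\sqrt{d}\alpha+\sqrt{dk}\theta\tau M m^3/k_l$ has direction $\proj_S u_{ij}$ orthogonal to $S^\perp$, so the conditioning on $\event_{1,x}$ decouples exactly as in \cref{lem: tensor probability bound 2} and the Gaussian probability is at most the width divided by $\sqrt{2\pi}\enorm{\proj_S u_{ij}}$.

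The main obstacle---and the place where the hypothesis $k\leq(r-2)/2$ is used---is the lower bound $\enorm{\proj_S u_{ij}}\geq\sqrt{2}/(\tau M)$. Since $k+2\leq r$, the matrix with columns $\hat a_i,\hat a_j,\hat a_{r+1},\dotsc,\hat a_{r+k}$ has $k+2$ columns, so by the robust Kruskal rank assumption its minimum singular value is at least $1/(\tau M)$. Expressing $\proj_S u_{ij}$ as that matrix applied to a coefficient vector whose first two entries are $1/\inner{y}{\hat a_i}$ and $-1/\inner{y}{\hat a_j}$ (each of magnitude at least $1$) and whose remaining entries are chosen so that the product lies in $S$, one obtains the claimed bound. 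Assembling the estimates, each pair contributes at most a constant multiple of $(\theta\sqrt{d})^k\tau M(\sqrt{dk}\theta\tau M k_l^{-1}m^3+\alpha)$ to the failure probability; summing over the $r^2$ pairs and subtracting from $p_1$ yields $p_2$.
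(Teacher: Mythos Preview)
Your proposal is correct and follows essentially the same route as the paper. Both arguments condition on $y$, rewrite each eigenvalue-gap failure as a type-\RN{1} band $\{|\langle x, C_i\hat a_i-C_j\hat a_j\rangle|<\alpha\}$ with $|C_i|,|C_j|\in[1,k_l^{-1}m^3]$, Gaussianize $x$, split along $S$ and $S^\perp$, and use the robust Kruskal rank of the $(k+2)$-column matrix $[\hat a_i,\hat a_j,\hat a_{r+1},\dots,\hat a_{r+k}]$ to get $\|\proj_S(C_i\hat a_i-C_j\hat a_j)\|\geq\sqrt2/(\tau M)$. The only organizational difference is that you first extract $\prob[\event_{1,x}\cap\event_{2,x}]\geq p_1$ via the independence of $x$ and $y$ and then subtract the failure term, whereas the paper reruns the entire bands computation from \cref{lem: tensor probability bound 2} with the additional $\mathcal B_{3,st}$ bands inside one Gaussian framework (its \cref{equ: k+r2 bands 1}), so that the common factor $\prob[\ev E]$ sits in front of every term before \cref{lem: uniform distribution lower bound 2} is invoked. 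This distinction is what controls the exact constant $(\theta\sqrt d/8)^k$ in the subtracted term: with your separation you would in principle need an \emph{upper} bound of that size on $\prob[\event_{1,x}]$, while the factored form lets a single lower bound on $\prob[\ev E]$ carry the whole nonnegative bracket. Apart from this bookkeeping point, the two proofs are the same.
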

\begin{proof}
    We start with our idea to bound the probability of the ``eigenvalue gap" $\Bigabs{\frac{\inner{x}{\hat a_s}}{\inner{y}{\hat a_s}} - \frac{\inner{x}{\hat a_t}}{\inner{y}{\hat a_t}}}\geq \alpha$ for $s,t\in[r], s\neq t$.
    Since we condition on $\abs{\inner{y}{\hat a_i}}$ not being too small for all $i\in[r]$, when further conditioned on $y$, we have:
    \begin{equation*}
        \begin{split}
            \prob \biggl[\Bigabs{\frac{\inner{x}{\hat a_s}}{\inner{y}{\hat a_s}} - \frac{\inner{x}{\hat a_t}}{\inner{y}{\hat a_t}}}\geq \alpha \biggm| \mathcal{E}_{1,y},\mathcal{E}_{2,y} \biggr] =\expectation\biggl[\prob\biggl[\Bigabs{\frac{\inner{x}{\hat a_s}}{\inner{y}{\hat a_s}} - \frac{\inner{x}{\hat a_t}}{\inner{y}{\hat a_t}}}\geq \alpha \biggm| y\biggr] \biggm| \mathcal{E}_{1,y},\mathcal{E}_{2,y}\biggr]. \\
        \end{split}
    \end{equation*}
    Therefore it is enough to show a uniform lower bound for $\prob[\abs{\inner{x}{C_s \hat a_s - C_t \hat a_t}}\geq \alpha]$, where $\abs{C_s},\abs{C_t}$ are in $[1, k_l^{-1}m^3]$. 
    We notice that the set $\{\abs{ \inner{x}{C_s \hat a_s - C_t \hat a_t}}\geq \alpha\}$ generates a type \RN{2} band, denoted by $\mathcal{B}_{3,st}$. 
    Therefore the target event is the intersection of $k$ type \RN{1} bands $\mathcal{B}_{1,i}$, $r$ type \RN{2} bands $\mathcal{B}_{2,j}$ and $\binom{r}{2}$ type \RN{2} bands $\mathcal{B}_{3,st}$.     
    More precisely,
    \[
    \prob[\event_3, \event_{1,x}, \event_{2,x} \mid \event_{1,y},\event_{2,y}] 
    \geq \inf_{\abs{C_s},\abs{C_t} \in [1, k_l^{-1}m^3]} 
    \prob[\cap_{i\in[k]}\mathcal{B}_{1,i} , 
    \cap_{j\in[r]}{\mathcal{B}_{2,j}} , 
    \cap_{s,t\in[r],s\neq t} \mathcal{B}_{3,st} ].
    \]
    We reuse ideas from the proof of \cref{lem: tensor probability bound 2}. Write $x = u/\norm{u}_2$ with $u$ being standard Gaussian. Consider the following events for $u$: 
    $\mathcal{B}'_{1,i} := \{\abs{\inner{u}{\hat a_{r+i}}}\leq \sqrt{d}\theta/2\}$, 
    $\mathcal{B}'_{2,j} := \{\abs{\inner{u}{\hat a_j}} \geq 2\sqrt{d} k_l/m^3\}$, and 
    $\mathcal{B}'_{3,st} := \{\abs{\inner{u}{C_s \hat a_s - C_t \hat a_t}} \geq 2\sqrt{d}\alpha \}$. Set $\ev{E} = \cap_{i\in[k]} \ev{B}'_{1,i}$.
    With the concentration of $\norm{u}_2$ in $[\sqrt{d}/2,2\sqrt{d}]$, the target probability becomes:
    \begin{align}
        \label{equ: k+r2 bands 1}
        \prob[\cap_{i\in[k]}\mathcal{B}_{1,i} , 
        &\cap_{j\in[r]}{\mathcal{B}_{2,j}} , 
        \cap_{s,t\in[r],s\neq t} \mathcal{B}_{3,st} 
        ] 
        \geq \prob \left[ \ev{E} \setminus \Bigl( \right.
        (\{ \enorm{\proj_S u} \leq \sqrt{d}/2 \} \cap \ev{E})\nonumber \\
        &\bigcup \cup_{j \in [r]} ((\ev{B}'_{2,j})^c \cap \ev{E}) \bigcup (\{\enorm{u} \geq 2\sqrt{d}\} \cap \ev{E})
        \left.\bigcup  \cup_{s\neq t\in [r]}  ((\mathcal{B}'_{3,st})^c\cap \ev{E})\Bigr)\right] \nonumber\\
        &\geq p_1 - \sum_{s,t\in[r],s\neq t}\prob[\ev{E} , (\mathcal{B}'_{3,st})^c]. 
    \end{align}
    Now we consider the summand, which is the intersection of 
    $k+1$ type \RN{1} bands. 
    Take $s=1,t=2$ (the rest is similar) and write $v = C_1\hat a_1 - C_2\hat a_2 = \proj_Sv + \proj_{S^\perp}v$. 
    Then:
    \begin{equation}
        \label{equ: k+r2 bands 2}
        \begin{split}
            \prob[\ev{E}, (\mathcal{B}'_{3,12})^c] 
            &= \prob \bigl[\abs{\inner{u}{v}}\leq 2\sqrt{d}\alpha \bigm| \ev{E} \bigr] \prob[\ev{E}] \\
            &\leq \prob\bigl[\abs{\inner{u}{\proj_Sv}}\leq 2\sqrt{d}\alpha + \abs{\inner{u}{\proj_{S^\perp}v}} \bigm| \ev{E}\bigr] \prob[\ev{E}].
        \end{split}
    \end{equation}
    When conditioning on $\ev{E}$, $\inner{u}{\proj_{S^\perp}v}$ is bounded by:
    \begin{equation}
        \label{equ: k+r2 bands 3}
        \begin{split}
            \abs{\inner{u}{\proj_{S^\perp}v}} 
            &= \abs{u^\top \hat A_{>r}\hat A_{>r}^\dagger (C_1\hat a_1 - C_2\hat a_2)} 
            \leq \sqrt{dk}\theta \norm{\hat A_{>r}^\dagger(C_1\hat a_1 - C_2\hat a_2)}_2/2 \\ 
            &\leq \sqrt{dk}\theta\tau M k_l^{-1}m^3.
        \end{split}
    \end{equation}
    With \cref{equ: k+r2 bands 3}, we can drop the conditioning in \cref{equ: k+r2 bands 2}:
    \begin{equation}
        \label{equ: k+r2 bands 4}
        \begin{split}
            \prob[\ev{E}\cap (\mathcal{B}'_{3,12})^c] 
            &\leq \prob \bigl[\abs{\inner{u}{\proj_Sv}}\leq \alpha +  \sqrt{dk}\theta\tau M k_l^{-1}m^3 \bigr] \prob[\ev{E}] \\
            &\leq 2(\alpha +  \sqrt{dk}\theta\tau M k_l^{-1}m^3)/(\sqrt{2\pi}\norm{\proj_Sv}_2) \prob[\ev{E}] \\
            &\leq 2\tau M(\alpha +  \sqrt{dk}\theta\tau M k_l^{-1}m^3) \prob[\ev{E}].
        \end{split}
    \end{equation}
    The last inequality holds because the set $\{\hat a_1,\hat a_2,\hat a_{r+1},\dotsc,\hat a_{r+k}\}$ satisfies the robust Kruskal rank condition, and thus 
    \[
        \norm{\proj_Sv}_2= \norm{C_1\hat a_1 - C_2\hat a_2 - \hat A_{>r}\hat A_{>r}^\dagger v}_2\geq (\tau M)^{-1}\sqrt{C_1^2 + C_2^2 + \norm{\hat A_{>r}^\dagger v}_2^2 } \geq \sqrt{2}(\tau M)^{-1}.
    \]
    The combination of \cref{lem: uniform distribution lower bound 2,equ: k+r2 bands 1,equ: k+r2 bands 4} gives the desired probability.
\end{proof}
Finally, we are in a place to give the probability that all the events are true for $x,y$:
\begin{lemma}
    \label{lem: tensor probability bound 4}
    In the setting of \cref{lem: tensor probability bound 2}, 
    $\prob[\event_{1,x},\event_{1,y},\event_{2,x},\event_{2,y},\event_3] \geq p_1 p_2$.
    In particular, the choices $k_l = \sqrt{2\pi}\tau^{-1}M^{-1}m^3r^{-1}d^{-1/2}/64$, $\alpha = \tau^{-1}M^{-1}r^{-2}/16$ and 
    $\theta(r\sqrt{dk}\tau^2M^2 + 64r^3\tau^3M^3d\sqrt{k}/\sqrt{2\pi}) \leq 1/16$ 
    imply
    $\prob[\event_{1,x},\event_{1,y},\event_{2,x},\event_{2,y},\event_3] 
    \geq \bigl(\theta\sqrt{d}\bigr/8)^{2k}/256$.
\end{lemma}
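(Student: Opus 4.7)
The first inequality is immediate from the chain rule. Writing
\[
\prob[\event_{1,x},\event_{1,y},\event_{2,x},\event_{2,y},\event_3]
= \prob[\event_{1,x},\event_{2,x},\event_3 \mid \event_{1,y},\event_{2,y}] \cdot \prob[\event_{1,y},\event_{2,y}],
\]
\cref{lem: tensor probability bound 2} bounds the second factor by $p_1$ and \cref{lem: tensor probability bound 3} bounds the first factor by $p_2$, giving $p_1 p_2$.

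For the second part, the plan is to plug in the stated choices of $k_l$, $\alpha$, and the constraint on $\theta$, and show $p_1 \geq c_1 (\theta\sqrt{d}/8)^k$ and $p_2 \geq c_2 (\theta\sqrt{d}/8)^k$ with $c_1 c_2 \geq 1/256$. First I would evaluate the correction term inside $p_1$: a direct substitution yields $4 r\sqrt{d/(2\pi)}\,\tau M \cdot k_l/m^3 = 1/16$, and the assumption $\theta\,r\sqrt{dk}\,\tau^2 M^2 \leq 1/16$ gives $r\sqrt{dk/(2\pi)}\,\tau^2 M^2 \theta \leq 1/(16\sqrt{2\pi})$. Therefore
\[
p_1 \geq (\theta\sqrt{d}/8)^k \Bigl( \tfrac{1}{4} - \tfrac{1}{16} - \tfrac{1}{16\sqrt{2\pi}} \Bigr) =: c_1 (\theta\sqrt{d}/8)^k,
\]
with $c_1 > 0.16$ (using $\sqrt{2\pi} > 2.5$).

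Next, for $p_2$ I would bound the subtracted term. The choice $\alpha = \tau^{-1}M^{-1} r^{-2}/16$ gives $r^2 \tau M \alpha = 1/16$. Substituting $k_l^{-1} m^3 = 64 r\sqrt{d}\,\tau M/\sqrt{2\pi}$ into $r^2 \tau M \sqrt{dk}\,\theta \tau M k_l^{-1}m^3$ yields $64 r^3 \tau^3 M^3 d\sqrt{k}\,\theta/\sqrt{2\pi}$, which by the second half of the $\theta$ constraint is at most $1/16$. Hence the subtracted factor in $p_2$ is at most $1/8$, so
\[
p_2 \geq p_1 - (\theta\sqrt{d}/8)^k/8 \geq (c_1 - 1/8)(\theta\sqrt{d}/8)^k =: c_2 (\theta\sqrt{d}/8)^k,
\]
with $c_2 > 0.03$. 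Multiplying gives $p_1 p_2 \geq c_1 c_2 (\theta\sqrt{d}/8)^{2k} > (0.16)(0.03)(\theta\sqrt{d}/8)^{2k} > (\theta\sqrt{d}/8)^{2k}/256$, as claimed.

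\textbf{Main obstacle.} Nothing is conceptually deep here; the only delicate point is tracking constants with enough slack. In particular, the naive bound $c_1 \geq 1/8$ (using $1/(16\sqrt{2\pi}) \leq 1/16$) would make $p_2$ vanish after subtracting $(\theta\sqrt{d}/8)^k/8$, so one must retain the slightly tighter $c_1 \approx 0.16$ coming from the $\sqrt{2\pi}$ denominator to guarantee that $p_2$ is strictly positive and that the product exceeds $1/256$.
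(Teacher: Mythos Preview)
Your proposal is correct and follows the paper's approach for the first part (chain rule plus \cref{lem: tensor probability bound 2,lem: tensor probability bound 3}). For the second part, the paper takes a slightly cleaner route: it observes $p_2 \leq p_1$, so $p_1 p_2 \geq p_2^2$, and then verifies directly that $p_2 \geq (\theta\sqrt{d}/8)^k/16$. Indeed, writing $A = \theta r\sqrt{dk}\,\tau^2 M^2$ and $B = 64\theta r^3\tau^3 M^3 d\sqrt{k}/\sqrt{2\pi}$, one gets $p_2 = (\theta\sqrt{d}/8)^k\bigl(1/8 - A/\sqrt{2\pi} - B\bigr)$, and the \emph{sum} constraint $A+B \leq 1/16$ immediately yields $p_2 \geq (\theta\sqrt{d}/8)^k(1/8 - 1/16)$, whence $p_2^2 \geq (\theta\sqrt{d}/8)^{2k}/256$. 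This sidesteps the delicate constant-tracking you flagged as the main obstacle: by grouping the $A$ and $B$ contributions via the sum bound rather than bounding them individually by $1/16$ each, the $\sqrt{2\pi}$ slack is not needed.
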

\begin{proof}
The first part follows by combining \cref{lem: tensor probability bound 2,lem: tensor probability bound 3}.
For the second part, since $p_2 \leq p_1$, the claim follows by using our choices in $\prob[\event_{1,x},\event_{1,y},\event_{2,x},\event_{2,y},\event_3] \geq p_2^2$.
\details{
\[
(c_1\theta\sqrt{d})^k[1-r\sqrt{d}\tau M(4\hfrac{k_l}{m^3}+\sqrt{k}\tau M\theta) - r^2\tau M(\alpha +  \sqrt{dk}\theta\tau M k_l^{-1}m^3)] - 2e^{-c_2d}
   >0.    
\]
In other words, we need $r\sqrt{d}\tau M(4\hfrac{k_l}{m^3}+\sqrt{k}\tau M\theta) + r^2\tau M(\alpha +  \sqrt{dk}\theta\tau M k_l^{-1}m^3) < 1$.
Take $k_l = \tau^{-1}M^{-1}m^3r^{-1}d^{-1/2}/16$, $\alpha = \tau^{-1}M^{-1}r^{-2}/4$,then we only need: $(r\sqrt{dk}\tau^2M^2 + 16r^3\tau^3M^3d\sqrt{k})\theta< 1/2$.
}
\end{proof}

At this point we finished the analysis of the randomness in the first partial tensor decomposition, to recover the first $r$ components. 
In the next lemma we give the probability that random vectors $x',y'$ satisfy the assumptions of \cref{lem: deflation}. 
The events will be denoted by 
$\event'_{2,x} = \{\forall i\in[k], \abs{\inner{x'}{\hat a_{r+i}}} \geq k_l'/m^3\}$, 
$\event'_{2,y} = \{\forall i\in[k], \abs{\inner{y'}{\hat a_{r+i}}} \geq k_l'/m^3\}$ and 
$\event'_3 = \{\forall i\neq j, i,j\in[k], \abs{\inner{x'}{\hat a_{r+i}}/\inner{y'}{\hat a_{r+i}} - \inner{x'}{\hat a_{r+j}}/\inner{y'}{\hat a_{r+j}}} \geq \alpha' >0\}$.
\begin{lemma}
    \label{lem: tensor probability bound 5}
    Let $x',y'$ be iid.\ uniformly random in $\sphere^{d-1}$. For $\hat a_{r+1},\dotsc, \hat a_{r+k}$, and $k_l',\alpha'>0$, we have $\prob [\event'_{2,x},\event'_{2,y},\event'_3] \geq (1 -  k^2\sqrt{ed}\tau M\alpha' - \sqrt{ed}kk_l'/m^3)(1-\sqrt{ed}kk_l'/m^3)$. In particular, the choices $k'_l = m^3k^{-1}d^{-1/2}/(4\sqrt{e})$, $\alpha' = \tau^{-1}M^{-1}k^{-2}d^{-1/2}/(4\sqrt{e})$ imply  $\prob [\event'_{2,x},\event'_{2,y},\event'_3] \geq 3/8.$
\end{lemma}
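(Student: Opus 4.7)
The plan is to combine a standard small-ball estimate for linear functions of uniform random unit vectors with union bounds and a decoupling based on the independence of $x'$ and $y'$.

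First, I will use the following small-ball estimate: for $x'$ uniform on $\sphere^{d-1}$ and any nonzero $v \in \Real^d$,
\(\prob[\abs{\inner{x'}{v}} \leq t] \leq \sqrt{ed}\, t/\norm{v}_2\).
This follows from the known density of $\inner{x'}{v/\norm{v}_2}$ (proportional to $(1-s^2)^{(d-3)/2}$) with a normalization constant bounded by $\sqrt{ed/(2\pi)}$ via Stirling, and then integrating over the interval $[-t/\norm{v}_2, t/\norm{v}_2]$.

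Second, I will bound $\prob[(\event'_{2,y})^c]$ by a union bound over $i \in [k]$: each term contributes at most $\sqrt{ed}\, k_l'/m^3$, so $\prob[\event'_{2,y}] \geq 1 - \sqrt{ed}\, k k_l'/m^3$. Because $x'$ and $y'$ are independent, the same estimate bounds $\prob[(\event'_{2,x})^c \giventhat \event'_{2,y}]$ identically. For $\event'_3$, fix $y'$ satisfying $\event'_{2,y}$; the pairwise deviation for indices $i\neq j$ is the linear function $\inner{x'}{v_{ij}}$ where $v_{ij} = \hat a_{r+i}/\inner{y'}{\hat a_{r+i}} - \hat a_{r+j}/\inner{y'}{\hat a_{r+j}}$. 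The key step is lower bounding $\norm{v_{ij}}_2$: extend $\{\hat a_{r+i}, \hat a_{r+j}\}$ to any $r$-subset of $\{\hat a_1, \ldots, \hat a_{r+k}\}$ (possible since $r+k\geq r$). The robust Kruskal rank hypothesis gives $\sigma_r$ of the $r$-subset matrix at least $1/(\tau M)$, and iterated Cauchy interlacing for column submatrices yields $\sigma_2([\hat a_{r+i}, \hat a_{r+j}]) \geq 1/(\tau M)$. Since each coefficient $1/\inner{y'}{\hat a_{r+\cdot}}$ has absolute value at least $1$, we conclude $\norm{v_{ij}}_2 \geq \sqrt{2}/(\tau M) \geq 1/(\tau M)$. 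Applying the small-ball estimate gives $\prob[\abs{\inner{x'}{v_{ij}}} \leq \alpha' \giventhat y'] \leq \sqrt{ed}\, \tau M \alpha'$, and a union bound over at most $k^2$ pairs contributes $k^2 \sqrt{ed}\, \tau M \alpha'$.

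Combining via $\prob[\event'_{2,x}, \event'_{2,y}, \event'_3] = \prob[\event'_{2,y}]\, \prob[\event'_{2,x}, \event'_3 \giventhat \event'_{2,y}]$ and the two estimates above yields the stated product bound. For the second part, direct substitution of $k_l' = m^3 k^{-1} d^{-1/2}/(4\sqrt{e})$ and $\alpha' = \tau^{-1} M^{-1} k^{-2} d^{-1/2}/(4\sqrt{e})$ makes each of $\sqrt{ed}\, k k_l'/m^3$ and $k^2 \sqrt{ed}\, \tau M \alpha'$ equal to $1/4$, so the lower bound becomes $(1 - 1/4)(1 - 1/4 - 1/4) = (3/4)(1/2) = 3/8$. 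The only delicate step is the lower bound on $\norm{v_{ij}}_2$, which requires correctly invoking interlacing together with the robust Kruskal rank hypothesis; the density-based small-ball bound and the union-bound bookkeeping are routine.
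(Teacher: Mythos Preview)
Your proposal is correct and follows essentially the same approach as the paper: both factor $\prob[\event'_{2,x},\event'_{2,y},\event'_3] = \prob[\event'_{2,y}]\,\prob[\event'_{2,x}\cap\event'_3 \mid \event'_{2,y}]$, bound $\prob[(\event'_{2,x})^c]$ and $\prob[(\event'_{2,y})^c]$ by a union bound via the small-ball estimate (the paper cites \cref{lem: uniform distribution lower bound} rather than deriving it from the density), and handle $(\event'_3)^c$ by conditioning on $y'$, lower bounding $\norm{C_i'\hat a_{r+i}-C_j'\hat a_{r+j}}_2$ via the robust Kruskal rank condition $\rkrank{\tau M}{\hat A}\geq r$ (your interlacing step), and applying the small-ball estimate plus a union bound over pairs. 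The numerical specialization to $3/8$ is identical.
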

\begin{proof}
    The first part reuses ideas from the proofs of \cref{lem: tensor probability bound 3,lem: tensor probability bound 2}.
    We first separate the intersection of events:
            $\prob[\event'_{2,x}\cap\event'_{2,y}\cap\event'_3] 
            = \prob[ \event'_{2,x} \cap \event'_3\giventhat \event'_{2,y}] P[\event'_{2,y}] 
            \geq (\prob[\event'_3 \mid \event'_{2,y}] - \prob[(\event'_{2,x})^c])\prob[\event'_{2,y}]$.
    By \cref{lem: uniform distribution lower bound}, $\prob[\event'_{2,x}]$ and $\prob[\event'_{2,y}]$ are at least $1- \sqrt{ed}kk_l'/m^3$. 
    Also
    \begin{equation*}
        \begin{split}
            \prob[(\event'_3)^c \mid \event'_{2,y}] &= \expectation\biggl[\prob\biggl[\min_{i\neq j, i,j\in[k]}\Bigabs{\frac{\inner{x'}{\hat a_{r+i}}}{\inner{y'}{\hat a_{r+i}}} - \frac{\inner{x'}{\hat a_{r+j}}}{\inner{y'}{\hat a_{r+j}}}} \leq \alpha' \biggm| y'\biggr]\biggm| \event'_{2,y}\biggr].
        \end{split}
    \end{equation*}
    Consider a uniform upper bound for $\prob[\min_{i\neq j, i,j\in[k]}\abs{\inner{x'}{C_i'\hat a_{r+i} - C_j'\hat a_{r+j}}} \leq\alpha']$, where $\abs{C_i'},\abs{C_j'}$ are lower bounded by $1$. 
    Therefore, again by \cref{lem: uniform distribution lower bound}, we have
        $\prob[(\event'_3)^c \vert \event'_{2,y}] \leq  k(k-1)\sqrt{ed}\tau M\alpha'/(2\sqrt{2}) \leq k^2\sqrt{ed}\tau M\alpha'$.
    Combining everything gives the desired result. The second part follows directly from our choices of $k_l'$ and $\alpha'$.
\end{proof}

\subsection{Putting everything together}
\label{sec: proof of main theorem}
In this subsection we prove \cref{thm:tensor decomposition main theorem}.
\begin{proof}[Proof of \cref{thm:tensor decomposition main theorem}]
Without loss of generality, assume $\pi$ is the identity, and assume for a moment that $\eps_{in},\theta$ are small enough so that: 
(1) the assumptions of \cref{lem: norm theorem,lem: norm theorem 2} are satisfied; and 
(2) $\eps_{\ref*{lem: first decomposition}}$ and $\eps_{\ref*{lem: deflation}}$ are smaller than 1 so that we can replace $\eps_{\ref*{lem: first decomposition}}^2$ and $\eps_{\ref*{lem: deflation}}^2$ by $\eps_{\ref*{lem: first decomposition}}$ and $\eps_{\ref*{lem: deflation}}$ in the expression of $\eps_{\ref*{lem: norm theorem}}$ and $\eps_{\ref*{lem: norm theorem 2}}$.
We trace the error propagation backwards and show how we can reach $\eps$ accuracy for the algorithm to terminate while achieving non-negligible success probability per iteration. 
The reconstruction error is bounded with \cref{lem: first decomposition,lem: norm theorem,lem: deflation,lem: norm theorem 2}:
\begin{equation}
    \label{equ: reconstruction error 1}
    \begin{split}
        \norm{T' - \tilde T}_F &\leq  \norm{\tilde T  - T}_F + \sum_{i\in[r+k]} \norm{a_i^{\otimes3} - \xi_i\tilde a_i^{\otimes3}}_F \\
        &\leq \eps_{in}+\sum_{i\in[r+k]} \Abs{\norm{a_i}_2^3 - s_i\xi_i}\norm{\tilde a_i^{\otimes3}}_F + \norm{\hat a_i^{\otimes3} - s_i^3\tilde a_i^{\otimes3}}_F\norm{a_i}_2^3 \\
        &\leq \eps_{in}+ 3rM^3\eps_{\ref*{lem: first decomposition}} + r\eps_{\ref*{lem: norm theorem}} + 3kM^3\eps_{\ref*{lem: deflation}} + k\eps_{\ref*{lem: norm theorem 2}}.
    \end{split}
\end{equation}
Collecting the results from \cref{lem: first decomposition,lem: norm theorem,lem: deflation,lem: norm theorem 2}, we have:
\begin{equation}
    \label{equ: error scale}
    \begin{array}{ll}
        \eps_{\ref*{lem: first decomposition}} =  O\bigl(\tau^4M^{10}k r^{5/2}k_l^{-2}\alpha^{-1}(\eps_{in} +\theta)\bigr) & 
        \eps_{\ref*{lem: deflation}} =  O(\tau^4M^7k^{5/2}r{k'_l}^{-2}{\alpha'}^{-1}\eps_{\ref*{lem: norm theorem}})\\
        \eps_{\ref*{lem: norm theorem}} = O(M^3m^3rk_l^{-1}\eps_{\ref*{lem: first decomposition}}) & 
        \eps_{\ref*{lem: norm theorem 2}}
        = O(M^3m^3k{k'_l}^{-1}\eps_{\ref*{lem: deflation}}).
    \end{array}
\end{equation}
With our choices of $k_l,\alpha,k_l',\alpha'$ in \cref{lem: tensor probability bound 4,lem: tensor probability bound 5}, \cref{equ: error scale} can be further written as:
\begin{equation*}
    \begin{array}{ll}
        \eps_{\ref*{lem: first decomposition}} = O\bigl(\tau^7M^{13}m^{-6}k r^{13/2}d(\eps_{in} + \theta)\bigr) &
        \eps_{\ref*{lem: deflation}} = O\bigl(\tau^{13}M^{25}m^{-12}k^{11/2}r^{19/2}d^{3}(\eps_{in} + \theta)\bigr)\\
        \eps_{\ref*{lem: norm theorem}} = O\bigl(\tau^8M^{17}m^{-6}k r^{17/2}d^{3/2}(\eps_{in} + \theta)\bigr) &
        \eps_{\ref*{lem: norm theorem 2}} = O\bigl(\tau^{13}M^{28}m^{-12}k^{13/2}r^{19/2}d^{7/2}(\eps_{in} + \theta)\bigr),
    \end{array}
\end{equation*}
which implies the reconstruction error is bounded by
\[\norm{T' - \tilde T}_F = O \bigl(\tau^{13}M^{28}m^{-12}k^{15/2}r^{19/2}d^{7/2}(\eps_{in} + \theta)\bigr).\]
This gives a polynomial $q(d,r,k,\tau,M,m^{-1}) = \Theta(\tau^{13}M^{28}m^{-12}k^{15/2}r^{19/2}d^{7/2})$, increasing in every argument, such that if we request that $\eps_{in} \leq \hfrac{\eps}{q(d,r,k,\tau,M,m^{-1})}$ and we set
\(
\theta = \hfrac{\eps}{q(d,r,k,\tau,M,m^{-1})},
\)
then $\norm{T' - \tilde T}_F \leq \eps$ 
(the first termination condition).
With this choice: 
(1) the assumptions of \cref{lem: tensor probability bound 4} are satisfied;
(2) for each iteration, with positive probability the events in \cref{lem: tensor probability bound 4,lem: tensor probability bound 5} happen; and 
(3) we can take $\eps_{\ref*{lem: first decomposition}} = \Theta(\tau^{-6}M^{-15}m^6r^{-3}d^{-5/2}\eps)$, $\eps_{\ref*{lem: deflation}} = \Theta(M^{-3}k^{-4}d^{-1/2}\eps)$ and they satisfy the assumptions of \cref{lem: norm theorem,lem: norm theorem 2}, respectively. 

Now we argue that the second termination condition, 
\(\max_{i\in[r+k]} \abs{\xi_i}^{1/3} \leq 2M\),
holds when the events in \cref{lem: tensor probability bound 4,lem: tensor probability bound 5} happen. 
Notice that at this point $\abs{\xi_i}$ is close to $\norm{a_i}_2^3$. 
Without loss of generality, $\max_{i\in[r+k]} \abs{\xi_i}^{1/3} = \abs{\xi_1}^{1/3}$.
Since $\forall x,y>0$, $\abs{y^{1/3} - x^{1/3}}\leq y^{-2/3}\abs{y-x}$, we have, $\forall i\in[r+k]$,
$    \Abs{\norm{a_i}_2 - \abs{\xi_i}^{1/3}} \leq \norm{a_i}_2^{-2}\Abs{\norm{a_i}_2^3 - \abs{\xi_i}}$,
which implies
$        \abs{\xi_1}^{1/3} \leq \norm{a_1}_2 + \Abs{\norm{a_1}_2 -\abs{\xi_1}^{1/3}} \leq \norm{a_1}_2 + \norm{a_1}^{-2}_2\varepsilon\leq M + m \leq 2M$,
where the second inequality comes from $\eps_{\ref*{lem: norm theorem}}\leq \eps$ and the third inequality comes from $\eps \leq \eps_{out} \leq m^3$.
Therefore, the algorithm terminates with a $2M$-bounded decomposition with reconstruction error at most $\eps$. 


Set
\( 
    \poly_{\ref*{thm:main tensor}}(d,\tau,M) = 2\poly_{\ref*{col: bhaskara corollary}}(2d,\tau,M,2M,d) \geq 2\poly_{\ref*{col: bhaskara corollary}}(r+k,\tau,M,2M,d)
\) and set
\(
    \poly'_{\ref*{thm:main tensor}}(d,\tau,M,m^{-1}) = q(d,d,d,\tau,M,m^{-1}) \poly_{\ref*{thm:main tensor}} \geq q(d,r,k,\tau,M,m^{-1}) \poly_{\ref*{thm:main tensor}}
\).\footnote{Recall that $k\leq r \leq d$ by assumption.}
When the algorithm terminates, we have 
$\norm{T - T'}_F 
\leq \eps + \eps_{in} 
\leq \eps + \frac{\eps}{q} 
\leq \frac{\eps_{out}}{\poly_{\ref*{thm:main tensor}}} + \frac{\eps_{out}}{q\poly_{\ref*{thm:main tensor}}} 
\leq \frac{\eps_{out}}{\poly_{\ref*{col: bhaskara corollary}}(r+k,\tau,M,2M,d)}$.
Thus, we can apply \cref{col: bhaskara corollary} and obtain component-wise $\eps_{out}$ accuracy.

For the running time, in each iteration, steps \ref{line:decomposition} and \ref{line:decomposition2} run in time $\poly(d, \eps^{-1}, \tau, M, m^{-1})$.
Least squares steps \ref{line:leastsquares} and \ref{line:leastsquares2} and the rest take $\poly(d)$ time.
By \cref{lem: tensor probability bound 4,lem: tensor probability bound 5}, the success probability per iteration is at least
$ 3\bigl(\theta\sqrt{d}/8\bigr)^{2k}/2^{11}$,
which implies that the expected number of iterations is at most $2^{11}(\theta\sqrt{d}/8)^{-2k}/3$ and the expected running time is at most $\poly(d^k,\eps^{-k},\tau^k,M^k, m^{-k})$. 
Since $\eps = \eps_{out}/\poly_{\ref*{thm:main tensor}}$, the expected running time is also at most $\poly(d^k,1/\eps_{out}^k,\tau^k,M^k, m^{-k})$.
\end{proof}


\appendix

\section{Estimating cumulants}
    \label{sec: sample cumulants}
    In this section we provide technical details about the unbiased estimators of cumulants, called $k$-statistics. 
    They are the unbiased estimator for cumulants with the minimum variance, and are long studied in the statistics community. 
    We provide the formula for the 3rd $k$-statistic given in \cite[Chapter 4]{mccullagh2018tensor} here:
    \begin{fact}
        \label{fact: sample cumulant}
        Given iid.\ samples $x_1,\ldots,x_N$ of random vector $X$, the $k$-statistic for the 3rd cumulant of $X$ is:
        $
            k_3(r,s,t) = \frac{1}{N}\sum_{i,j,k \in[N]}\phi^{(ijk)}(x_i)_r(x_j)_s(x_k)_t
        $,
        where $r,s,t$ are the position indices in the tensor, and $\phi^{(ijk)}$ is the coefficient given by: it is invariant under permutation of indices, and for distinct $i,j,k \in[N]$:
        \begin{equation}
            \label{cumulant coefficients}
            \phi^{(iii)} =\frac{1}{N},\quad \phi^{(iij)} = -\frac{1}{N-1},\quad \phi^{(ijk)} = \frac{2}{(N-1)(N-2)}.
        \end{equation}
    \end{fact}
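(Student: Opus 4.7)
The plan is to verify that $k_3$ is an unbiased estimator of the third cumulant, i.e., $\expectation[k_3(r,s,t)] = K_3(X)_{rst}$. First, I would expand the target explicitly. Writing $\mu = \expectation[X]$, we have $K_3(X)_{rst} = \expectation[(X_r - \mu_r)(X_s - \mu_s)(X_t - \mu_t)]$, and the trinomial expansion gives
\[
K_3(X)_{rst} = \expectation[X_r X_s X_t] - \mu_t \expectation[X_r X_s] - \mu_s \expectation[X_r X_t] - \mu_r \expectation[X_s X_t] + 2\mu_r \mu_s \mu_t.
\]
The goal is then to reproduce this combination from the right-hand side of the formula in \cref{fact: sample cumulant}.

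Next, I would compute $\expectation[k_3(r,s,t)]$ by partitioning the summation over $(i,j,k) \in [N]^3$ according to the coincidence pattern of the indices and using independence of the iid samples $x_1,\dotsc,x_N$. There are three patterns (with three sub-cases in the middle one, tracked by which pair of positions shares a common sample index): (a) $i = j = k$, accounting for $N$ ordered triples, each contributing $\expectation[X_r X_s X_t]$; (b) exactly two of $i,j,k$ equal, split as $\{i=j\neq k\}$, $\{i=k\neq j\}$, $\{j=k\neq i\}$, each containing $N(N-1)$ ordered triples and contributing expectations of the form $\expectation[X_r X_s]\mu_t$, $\expectation[X_r X_t]\mu_s$, $\expectation[X_s X_t]\mu_r$ respectively; (c) all three indices distinct, accounting for $N(N-1)(N-2)$ ordered triples, each contributing $\mu_r\mu_s\mu_t$. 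Because $\phi^{(ijk)}$ is symmetric in its upper indices, every sub-case of (b) receives the same coefficient $-1/(N-1)$, so the three two-coincidence contributions land on the correct three pairs of positions.

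Plugging in the specified values $\phi^{(iii)} = 1/N$, $\phi^{(iij)} = -1/(N-1)$, $\phi^{(ijk)} = 2/((N-1)(N-2))$ together with the triple counts collapses the aggregated coefficients of $\expectation[X_r X_s X_t]$, $\expectation[X_r X_s]\mu_t$, $\expectation[X_r X_t]\mu_s$, $\expectation[X_s X_t]\mu_r$, and $\mu_r\mu_s\mu_t$ to exactly $1,-1,-1,-1,2$, matching the expansion of $K_3(X)_{rst}$ above. The only delicate step is the bookkeeping in step (b): one has to be careful that the three sub-patterns pair naturally with the three distinct position-pairs $\{r,s\}, \{r,t\}, \{s,t\}$, and that the counts $N(N-1)$ combined with $\phi^{(iij)} = -1/(N-1)$ produce coefficient $-1$ for each. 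There is no genuine obstacle, since the coefficients $\phi$ are designed precisely to implement this inclusion--exclusion cancellation; the minimum-variance property implicit in the term ``$k$-statistic'' would demand a separate Rao--Blackwell argument using sufficiency of the order statistic, but the paper only needs unbiasedness (for the variance bound in \cref{lem: finite sample analysis}), so the proof of the Fact reduces to the verification above.
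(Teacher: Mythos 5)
The paper offers no proof of this Fact---it is quoted from \cite[Chapter 4]{mccullagh2018tensor}---so you are supplying the verification the paper omits, and your strategy (compute $\expectation[k_3(r,s,t)]$ by partitioning $[N]^3$ into coincidence patterns and matching against the expansion of $K_3(X)_{rst}$) is exactly the right one. The problem is that the decisive step is asserted rather than computed, and if you carry it out the arithmetic does not close. With the outer factor $1/N$ in front of the sum, the diagonal pattern $i=j=k$ contributes $\frac{1}{N}\cdot N\cdot \phi^{(iii)}\cdot\expectation[X_rX_sX_t] = \frac{1}{N}\expectation[X_rX_sX_t]$, i.e.\ the aggregated coefficient of $\expectation[X_rX_sX_t]$ is $1/N$, not $1$. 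The other two patterns do work out: $\frac{1}{N}\cdot N(N-1)\cdot\bigl(-\frac{1}{N-1}\bigr) = -1$ for each of the three two-coincidence sub-cases, and $\frac{1}{N}\cdot N(N-1)(N-2)\cdot\frac{2}{(N-1)(N-2)} = 2$ for the all-distinct case. So the formula exactly as printed yields $\expectation[k_3(r,s,t)] = \frac{1}{N}\expectation[X_rX_sX_t] - \expectation[X_rX_s]\mu_t - \expectation[X_rX_t]\mu_s - \expectation[X_sX_t]\mu_r + 2\mu_r\mu_s\mu_t$, which is not the third cumulant; in the univariate case it does not reduce to the classical $\frac{N}{(N-1)(N-2)}\sum_i(x_i-\bar x)^3$ either.

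What the honest computation reveals is an internal inconsistency in the stated coefficients: the off-diagonal values $-\frac{1}{N-1}$ and $\frac{2}{(N-1)(N-2)}$ presuppose the outer $1/N$, while $\phi^{(iii)}=\frac{1}{N}$ presupposes its absence. The statement is correct after one of two compatible fixes: keep the outer $1/N$ and set $\phi^{(iii)}=1$, or drop the outer $1/N$ and use $\phi^{(iii)}=\frac{1}{N}$, $\phi^{(iij)}=-\frac{1}{N(N-1)}$, $\phi^{(ijk)}=\frac{2}{N(N-1)(N-2)}$ (the latter being McCullagh's normalization). Your proof should either adopt one of these corrected forms explicitly or flag the discrepancy; as written, the claim that the coefficients ``collapse to exactly $1,-1,-1,-1,2$'' is false for the first of them, and this is precisely the step a verification of the Fact exists to check. (Your closing remark is right that only unbiasedness, not minimum variance, is needed downstream in \cref{lem: finite sample analysis}.)
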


    To obtain the entry-wise concentration bound for $k_3$, we begin by bounding the variance of each entry in $k_3$:
    \begin{lemma}
        \label{lem: variance of k3}
        Let $X$ follow a distribution as in \cref{equ: mixture}. The 3rd $k$-statistics $k_3$ of $X$ satisfies:
        \(
            \var \bigl( k_3(r,s,t) \bigr) = O(\hfrac{\max_{t\in [d]}\expectation[X_t^6]}{N})
        \).
    \end{lemma}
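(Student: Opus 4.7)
The plan is to decompose $k_3(r,s,t)$ according to the equality pattern of the summation indices $(i,j,k)$ and bound the variance of each resulting piece separately. I split the triple sum into the cases $i=j=k$, exactly two indices coinciding (three sub-cases by which pair coincides), and all three distinct, writing
\begin{equation*}
k_3(r,s,t) = c_1 S_1 + \sum_{\ell=1}^3 c_{2,\ell} S_{2,\ell} + c_3 S_3,
\end{equation*}
where $S_1=\sum_i(x_i)_r(x_i)_s(x_i)_t$, $S_3=\sum_{i\neq j\neq k\neq i}(x_i)_r(x_j)_s(x_k)_t$, each $S_{2,\ell}$ is a double sum over pairs $i\neq j$ of the appropriate product of three entries, and the coefficients satisfy $\abs{c_1}=\Theta(1/N^2)$, $\abs{c_{2,\ell}}=\Theta(1/N^2)$, $\abs{c_3}=\Theta(1/N^3)$. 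By Cauchy--Schwarz, $\var(k_3(r,s,t))\leq 5\bigl(\var(c_1S_1)+\sum_\ell\var(c_{2,\ell}S_{2,\ell})+\var(c_3S_3)\bigr)$, so it suffices to bound each of the five summands by $O(\max_u\expectation[X_u^6]/N)$.

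The diagonal piece is easy: $c_1 S_1$ is a scalar times an iid sample mean of $(X)_r(X)_s(X)_t$, so $\var(c_1 S_1) = N c_1^2 \var\bigl((X)_r(X)_s(X)_t\bigr) \leq N^{-3}\max_u\expectation[X_u^6]$, where I use H\"older's inequality to control $\expectation[X_r^2 X_s^2 X_t^2]$ by $\max_u\expectation[X_u^6]$. This is much smaller than required.

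The main obstacle is bounding the variance of the U-statistic-type pieces $c_3 S_3$ and $c_{2,\ell}S_{2,\ell}$. I would treat $c_3 S_3$ in detail, the $c_{2,\ell}S_{2,\ell}$ case being entirely analogous and in fact easier. Expanding $\var(c_3 S_3)$ produces a double sum over pairs of index-triples $\bigl((i,j,k),(i',j',k')\bigr)$, which I group by the number $p\in\{0,1,2,3\}$ of shared indices. Pairs with $p=0$ contribute zero covariance by independence of the underlying samples. For $p\geq 1$, the number of such index-pairs is $\Theta(N^{6-p})$, and each covariance is bounded in absolute value by an expectation of the form $\expectation\bigl[\abs{X_{u_1}}^{a_1}\cdots\abs{X_{u_m}}^{a_m}\bigr]$ with $\sum_\ell a_\ell \leq 6$ and each $a_\ell \leq 6$, which in turn is at most $\max_u\expectation[X_u^6]$ by H\"older. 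Since $c_3^2=\Theta(N^{-6})$, the dominant contribution comes from $p=1$ and gives $\Theta(N^{-6})\cdot\Theta(N^5)\cdot\max_u\expectation[X_u^6] = O(\max_u\expectation[X_u^6]/N)$; the $p=2,3$ terms are of strictly lower order.

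Combining the bounds on all five pieces yields the claim. The only subtle combinatorial step is the overlap-pattern counting for the U-statistic-like sums; once those counts are in hand, every covariance bound is a routine application of H\"older's inequality on sixth moments.
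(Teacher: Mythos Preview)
Your proposal is correct and follows essentially the same strategy as the paper (which in turn cites Belkin--Rademacher--Voss for the analogous fourth-cumulant result): expand the variance into a double sum over index triples, discard pairs with disjoint index sets by independence, count the remaining pairs by overlap size, and bound each surviving expectation via H\"older on sixth moments. The only difference is organizational---the paper keeps the full triple sum together and handles all coefficients uniformly via $\lvert\phi^{(\alpha)}\rvert=O(N^{1-\mathrm{dist}(\alpha)})$, whereas you split by equality pattern first and absorb a harmless factor of~$5$ from Cauchy--Schwarz.
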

    \begin{proof}
        An essentially identical result for the 4th cumulant is show in \cite[Lemma 4]{DBLP:conf/colt/BelkinRV13}. The argument here is the same. We provide a proof in the supplementary materials.
        \end{proof}
        Using Chebyshev's inequality yields the follow sample bound immediately:
        \begin{lemma}
            \label{lem: cumulant sample complexity}
            Given $\epsilon,\delta\in(0,1)$, the entry-wise error between $k_3$ and $K_3(X)$ is at most $\epsilon$ with probability at least $1-\delta$ when using 
            \( 
                N \geq \Omega\left( \epsilon^{-2}\delta^{-1} \max_{t\in [d]}\expectation[X_t^6] \right)
            \)
            samples.
        \end{lemma}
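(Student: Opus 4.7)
The plan is to apply Chebyshev's inequality entry-wise. By construction (\cref{fact: sample cumulant}), the $k$-statistic $k_3$ is an unbiased estimator of the third cumulant, so $\expectation[k_3(r,s,t)] = K_3(X)_{rst}$ for every index triple $(r,s,t) \in [d]^3$. Thus bounding the entry-wise error reduces to controlling a single scalar random variable with known mean through its variance.

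From \cref{lem: variance of k3}, we have $\var(k_3(r,s,t)) \leq C \max_{t\in[d]} \expectation[X_t^6]/N$ for some absolute constant $C > 0$, uniformly over $(r,s,t)$. Chebyshev's inequality then gives
\[
\prob\bigl[ \abs{k_3(r,s,t) - K_3(X)_{rst}} \geq \epsilon \bigr] \leq \frac{\var(k_3(r,s,t))}{\epsilon^2} \leq \frac{C \max_{t\in[d]} \expectation[X_t^6]}{N \epsilon^2}.
\]
Requiring the right-hand side to be at most $\delta$ and solving for $N$ yields the claimed sample complexity $N \geq \Omega(\epsilon^{-2}\delta^{-1} \max_{t\in[d]}\expectation[X_t^6])$.

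There is essentially no obstacle: the entire argument is a one-line application of Chebyshev's inequality, and all the substantive work has been packaged into \cref{lem: variance of k3}. One minor subtlety worth noting is that the statement is \emph{per entry}, so no union bound over the $d^3$ tensor coordinates is needed here; the factor $d^3$ (and another $d^3$ from the accuracy rescaling) appears only later, in the proof of \cref{lem: finite sample analysis}, where the per-entry guarantee is converted into a bound on the Frobenius norm of the full tensor.
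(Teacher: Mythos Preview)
Your proposal is correct and matches the paper's own approach exactly: the paper simply states that the lemma follows immediately from Chebyshev's inequality applied to the variance bound of \cref{lem: variance of k3}. Your observation that the guarantee is per entry (so no union bound is taken here, only later in \cref{lem: finite sample analysis}) is also on point.
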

        
\section{Technical lemmas}

    \subsection{Perturbed SVD bounds}
        We state Wedin's theorem, a ``$\sin(\theta)$ theorem'' for perturbed singular vectors as well as Weyl's inequality for SVD. 
        The following results are from \cite{stewart1998perturbation}.
        \begin{theorem}[Weyl's inequality]
            \label{thm: weylsvd}
            Let $A, E\in \Real^{d_1\times d_2}$ with $d_1\geq d_2$. Denote the singular values in non-increasing order of $A$ and $A+E$ by $\sigma_i$ and  $\Tilde{\sigma}_i$, respectively. 
            Then 
            \(
                \abs{\sigma_i-\Tilde{\sigma}_i}\leq \norm{E}_2
            \).
        \end{theorem}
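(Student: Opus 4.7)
The plan is to prove this classical inequality via the Courant--Fischer min-max characterization of singular values. Recall that for any matrix $B \in \Real^{d_1 \times d_2}$ with $d_1 \geq d_2$, the $i$-th largest singular value admits the variational representation
\[
\sigma_i(B) = \max_{\substack{S \subseteq \Real^{d_2} \\ \dim(S) = i}} \min_{\substack{x \in S \\ \norm{x}_2 = 1}} \norm{Bx}_2,
\]
which follows from the SVD $B = U \Sigma V^\top$ by a standard argument (the maximizer is the span of the top $i$ right singular vectors).

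The first step is to note that for any unit vector $x$, the triangle inequality for the Euclidean norm gives
\[
\bigl\lvert \norm{(A+E)x}_2 - \norm{Ax}_2 \bigr\rvert \leq \norm{Ex}_2 \leq \norm{E}_2.
\]
Applying this to both sides of the min-max characterization, I would fix an arbitrary $i$-dimensional subspace $S$ and observe that $\min_{x \in S, \norm{x}_2 = 1} \norm{(A+E)x}_2 \leq \min_{x \in S, \norm{x}_2 = 1} \norm{Ax}_2 + \norm{E}_2$ (since the bound holds pointwise before taking the minimum). Taking the maximum over $i$-dimensional subspaces yields $\tilde \sigma_i \leq \sigma_i + \norm{E}_2$.

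The second step is symmetric: writing $A = (A+E) + (-E)$ and repeating the argument with the roles of $A$ and $A+E$ swapped (noting $\norm{-E}_2 = \norm{E}_2$) gives $\sigma_i \leq \tilde \sigma_i + \norm{E}_2$. Combining the two bounds yields $\abs{\sigma_i - \tilde \sigma_i} \leq \norm{E}_2$, completing the proof. There is no real obstacle here since this is a textbook result; the only point requiring a small amount of care is keeping track of the direction of inequalities when passing from a pointwise bound through a $\min$ and then a $\max$, but this is entirely routine. Since the paper already cites \cite{stewart1998perturbation}, one could alternatively just refer to the corresponding theorem there without reproducing the proof.
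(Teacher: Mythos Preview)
Your proof via the Courant--Fischer min-max characterization is correct and is the standard textbook argument. Note, however, that the paper does not actually prove this theorem: it is stated in the appendix as a known result imported from \cite{stewart1998perturbation}, with no proof given. So there is nothing to compare against; your last sentence already anticipates exactly what the paper does.
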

        \begin{theorem}[Wedin]
            \label{thm: wedin}
            With the notation from \cref{thm: weylsvd}, let a singular value decomposition of $A$ be:
            $    [U_1, U_2, U_3]^\top A[V_1, V_2] = \begin{bmatrix} \Sigma_1 & 0 \\ 0 & \Sigma_2 \\ 0 & 0 \end{bmatrix}$,
            where the singular values can be in arbitrary order. Let the perturbed version be:
            $    [\Tilde{U}_1, \Tilde{U}_2, \Tilde{U}_3]^\top (A+E)[\Tilde{V}_1, \Tilde{V}_2] = \begin{bmatrix} \Tilde{\Sigma}_1 & 0 \\ 0 & \Tilde{\Sigma}_2 \\ 0 & 0\end{bmatrix}$.
            Let $\Phi$ be the matrix of canonical angles between the column spaces of $U_1$ and $\tilde U_1$, and $\Theta$ be that of $V_1$ and $\tilde V_1$, respectively. 
            Let $\delta = \min\{ \min_i\Tilde{\Sigma}_{1,ii}, \min_{i,j} \abs{\Tilde{\Sigma}_{1,ii} - \Sigma_{2,jj}}\}$. 
            Then
            \(
                \sqrt{\norm{\sin\Phi}^2_2 + \norm{\sin\Theta}^2_2}\leq \hfrac{\sqrt{2}\norm{E}_2}{\delta}.
            \)
        \end{theorem}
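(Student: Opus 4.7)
The plan is to follow the classical Sylvester-equation approach to Wedin's theorem. First I would translate subspace angles into matrix-product norms via the standard identity $\snorm{\sin\Theta(\mathcal{X},\mathcal{Y})}_2 = \snorm{X_\perp^\top Y}_2$ for orthonormal bases, so that bounding $\snorm{\sin\Phi}_2$ reduces to bounding $\snorm{U_2^\top \tilde U_1}_2$ together with $\snorm{U_3^\top \tilde U_1}_2$, and $\snorm{\sin\Theta}_2$ reduces to bounding $\snorm{V_2^\top \tilde V_1}_2$. The relevant relations are the pair of SVD identities
\[
A V_2 = U_2 \Sigma_2, \qquad A^\top U_2 = V_2 \Sigma_2,
\]
together with the perturbed SVD identities $(A+E)\tilde V_1 = \tilde U_1 \tilde\Sigma_1$ and $(A+E)^\top \tilde U_1 = \tilde V_1 \tilde\Sigma_1$.

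Next I would derive the coupled residual equations. Writing $P := U_2^\top \tilde U_1$ and $Q := V_2^\top \tilde V_1$, I would left-multiply the perturbed relations by $U_2^\top$ and $V_2^\top$ respectively, then substitute $U_2^\top A = \Sigma_2 V_2^\top$ and $V_2^\top A^\top = \Sigma_2 U_2^\top$ from the unperturbed SVD. Rearranging produces the Sylvester-type pair
\[
P\tilde\Sigma_1 - \Sigma_2 Q = U_2^\top E \tilde V_1, \qquad Q\tilde\Sigma_1 - \Sigma_2 P = V_2^\top E^\top \tilde U_1.
\]
I would stack these into a single block equation of the form $\mathcal{L}\begin{pmatrix}P\\Q\end{pmatrix} = \begin{pmatrix}U_2^\top E\tilde V_1\\V_2^\top E^\top \tilde U_1\end{pmatrix}$, where $\mathcal{L}$ acts by right-multiplication by $\tilde\Sigma_1$ minus left-multiplication by a $2\times 2$ block matrix with $\Sigma_2$ off the diagonal. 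For the orthogonal direction that has no $\Sigma_2$ counterpart, the equation $U_3^\top \tilde U_1 \tilde\Sigma_1 = -U_3^\top E \tilde V_1$ is solved directly.

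The core estimate is to invert $\mathcal{L}$ with operator norm at most $1/\delta$. Since $\tilde\Sigma_1$ and $\Sigma_2$ are diagonal with nonnegative entries, an entrywise analysis in the standard basis reduces this to controlling scalar denominators of the form $\tilde\Sigma_{1,ii}^2 - \Sigma_{2,jj}^2$, which are bounded below by $\delta \cdot (\tilde\Sigma_{1,ii} + \Sigma_{2,jj}) \geq \delta \cdot \tilde\Sigma_{1,ii}$, delivering the $1/\delta$ factor. The analogous bound with denominator $\tilde\Sigma_{1,ii}$ handles the $U_3^\top \tilde U_1$ piece. Combining via $\snorm{U_2^\top E\tilde V_1}_2^2 + \snorm{V_2^\top E^\top \tilde U_1}_2^2 \leq 2\snorm{E}_2^2$ and merging the $U_2$ and $U_3$ contributions into $\snorm{\sin\Phi}_2$ gives the desired $\sqrt{2}\snorm{E}_2/\delta$ bound. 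The main obstacle is getting the constant $\sqrt{2}$ rather than $2$: this requires treating the two Sylvester equations simultaneously as a block system and exploiting cancellation between the $P$ and $Q$ contributions, instead of bounding each one separately and then summing; the null-space component must also be folded in without inflating the constant.
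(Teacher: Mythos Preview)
The paper does not prove this statement at all: \cref{thm: wedin} is listed under ``Technical lemmas'' with the preface ``The following results are from \cite{stewart1998perturbation},'' and no proof is given. There is therefore nothing in the paper to compare your argument against.

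That said, your outline is the standard Sylvester-equation proof one finds in Stewart's exposition, so in that sense you are recovering the proof behind the citation. Two remarks on the constant. First, the cleanest way to get exactly $\sqrt{2}\snorm{E}_2/\delta$ in the spectral norm is not to squeeze it out of the coupled block system directly, but to prove the individual bounds $\snorm{\sin\Phi}_2 \leq \max(\snorm{R}_2,\snorm{S}_2)/\delta$ and $\snorm{\sin\Theta}_2 \leq \max(\snorm{R}_2,\snorm{S}_2)/\delta$ with $R = -E\tilde V_1$, $S = -E^\top \tilde U_1$, and then use $\sqrt{a^2+b^2}\leq \sqrt{2}\max(a,b)$ together with $\snorm{R}_2,\snorm{S}_2 \leq \snorm{E}_2$. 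Second, your handling of the $U_3$ block is fine, but note that $\snorm{\sin\Phi}_2 = \snorm{[U_2,U_3]^\top \tilde U_1}_2$, not the sum or max of the two pieces, so when you ``fold in'' that component you should work with the full orthogonal complement $[U_2,U_3]$ at once rather than bounding the blocks separately and recombining; otherwise you will indeed pick up an extra factor.
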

    \subsection{Probability tail bounds}
    \begin{lemma}[\cite{dasgupta2003elementary,hsu2013learning}]
        \label{lem: uniform distribution lower bound}
        Suppose $\delta\in(0,1)$, $M\in \Real^{d\times d}$, $Q$ is a finite subset of $\Real^d$ and 
        $X$ is a uniformly random vector in $\sphere^{d-1}$.
        Then
        \( 
            \prob\left[\min_{q\in Q} \abs{\inner{X}{Mq}} \geq \frac{\delta\min_{q\in Q} \norm{Mq}_2}{\sqrt{ed}\abs{Q}} \right]\geq 1-\delta
        \).
    \end{lemma}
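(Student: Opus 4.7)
The plan is to reduce the statement to a one-dimensional anti-concentration bound for $\inner{X}{u}$ with $u$ a fixed unit vector, and then apply a union bound over $q \in Q$. Specifically, I would show that for every fixed $q$, the probability that $\abs{\inner{X}{Mq}}$ falls below the right-hand threshold is at most $\delta/\abs{Q}$, which sums to $\delta$ over the $\abs{Q}$ choices of $q$.

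First I would fix $q \in Q$, write $v_q = Mq$, and, assuming $v_q \neq 0$, set $u_q = v_q/\norm{v_q}_2$ (if some $v_q = 0$ the claim is trivial because $\min_{q'\in Q}\norm{Mq'}_2 = 0$). By rotational invariance of the uniform measure on $\sphere^{d-1}$, $\inner{X}{u_q}$ has the same distribution as the first coordinate of a uniform point on $\sphere^{d-1}$, whose density on $[-1,1]$ is proportional to $(1-s^2)^{(d-3)/2}$ with maximum value $c_d := \Gamma(d/2)/(\sqrt{\pi}\,\Gamma((d-1)/2))$ attained at $s = 0$. Bounding the density by $c_d$ and integrating gives the one-dimensional estimate
\[
\prob\bigl[\abs{\inner{X}{u_q}} \leq t\bigr] \leq 2 c_d\, t \qquad (t > 0).
\]
Standard gamma-ratio bounds, e.g.\ $\Gamma(d/2)/\Gamma((d-1)/2) \leq \sqrt{d/2}$, then yield $2 c_d \leq \sqrt{2d/\pi} \leq \sqrt{ed}$, since $2/\pi < e$. (As an alternative derivation of this one-dimensional bound, one can use $\inner{X}{u_q}^2 \sim \mathrm{Beta}(1/2,(d-1)/2)$ and estimate its left tail directly.)

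Second, I would apply this bound with $t = \delta/(\sqrt{ed}\abs{Q})$ to obtain $\prob[\abs{\inner{X}{u_q}} \leq \delta/(\sqrt{ed}\abs{Q})] \leq \delta/\abs{Q}$. Multiplying the quantity inside the probability by $\norm{v_q}_2$ and using $\norm{v_q}_2 \geq \min_{q'\in Q}\norm{Mq'}_2$ yields
\[
\prob\Bigl[\abs{\inner{X}{Mq}} \leq \tfrac{\delta\,\min_{q'\in Q}\norm{Mq'}_2}{\sqrt{ed}\abs{Q}}\Bigr] \leq \tfrac{\delta}{\abs{Q}},
\]
and a union bound over the $\abs{Q}$ choices of $q \in Q$ gives the claim.

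The argument is almost entirely routine; the only real content is the one-dimensional anti-concentration bound with the explicit constant $\sqrt{ed}$. I expect that to be the main point to verify carefully, but it is a classical estimate also recorded in \cite{hsu2013learning,dasgupta2003elementary}, so no essentially new input is needed beyond elementary gamma-function bounds.
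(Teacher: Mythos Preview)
The paper does not prove this lemma; it is stated with a citation to \cite{dasgupta2003elementary,hsu2013learning} and used as a black box. Your argument---rotational invariance to reduce to a single coordinate, a density bound for the first coordinate of a uniform point on $\sphere^{d-1}$, then a union bound over $Q$---is exactly the standard proof underlying those references, and it is correct. One small caveat: your density bound $f(s)\leq c_d$ uses that $(1-s^2)^{(d-3)/2}$ is maximized at $s=0$, which requires $d\geq 3$; for $d=2$ the marginal density is unbounded, but the inequality $\prob[\abs{\inner{X}{u}}\leq t]=\tfrac{2}{\pi}\arcsin t\leq \sqrt{2e}\,t$ still holds by direct inspection, and $d=1$ is degenerate. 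This is a one-line patch and does not affect the substance of your argument.
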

    \begin{lemma}
        \label{lem: uniform distribution lower bound 2}
        Let $X\in\Real^d$ be a standard Gaussian random vector, $a_1, \dotsc, a_k \in \sphere^{d-1}$, and $t \in [0,1]$.
        Then
        \(
            \prob \bigl[(\forall i) \abs{\inner{X}{a_i}} \leq t \bigl] \geq (t/4)^k
        \).
    \end{lemma}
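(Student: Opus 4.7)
\begin{proofidea}
The plan is to apply the Gaussian correlation inequality to reduce the intersection probability to a product of one-dimensional probabilities, and then lower-bound each one-dimensional probability by a simple density argument.

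First, I would observe that each event $\{|\langle X, a_i\rangle|\leq t\}$ is a centrally symmetric convex set (a slab of width $2t$ through the origin, orthogonal to $a_i$). The Gaussian correlation inequality (Royen's theorem) states that for any two centrally symmetric convex sets $K_1, K_2$ and any centered Gaussian measure $\gamma$, one has $\gamma(K_1 \cap K_2) \geq \gamma(K_1)\gamma(K_2)$. Iterating this across the $k$ slabs gives
\[
\Pr\bigl[(\forall i) \abs{\inner{X}{a_i}} \leq t\bigr]
\;\geq\; \prod_{i=1}^k \Pr\bigl[\abs{\inner{X}{a_i}} \leq t\bigr].
\]

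Next, since $a_i \in \sphere^{d-1}$, the marginal $\inner{X}{a_i}$ is a standard Gaussian $Z$. I would then lower-bound $\Pr[|Z|\leq t]$ for $t \in [0,1]$ by using that the density $\phi(x) = e^{-x^2/2}/\sqrt{2\pi}$ is at least $\phi(1) = e^{-1/2}/\sqrt{2\pi}$ on $[-1,1]$, so
\[
\Pr\bigl[\abs{Z}\leq t\bigr] \;\geq\; 2t\cdot \frac{e^{-1/2}}{\sqrt{2\pi}} \;\geq\; \frac{t}{4},
\]
since $2 e^{-1/2}/\sqrt{2\pi} \approx 0.484 \geq 1/4$. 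Multiplying these $k$ bounds together yields $(t/4)^k$, as claimed.

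I do not expect any serious obstacle: the only non-elementary ingredient is the Gaussian correlation inequality itself, which is now a standard tool. A cheaper alternative that avoids Royen's theorem would be to invoke Šidák's lemma (which is the special case of Gaussian correlation for symmetric slabs and predates Royen), and that suffices here since all the events are slabs. Either citation works.
\end{proofidea}
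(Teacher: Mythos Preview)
Your proof is correct and essentially identical to the paper's: apply the Gaussian correlation inequality to factor the probability, then use that the standard Gaussian density on $[-1,1]$ is at least $(2\pi e)^{-1/2}\geq 1/8$ to get $\Pr[\abs{Z}\leq t]\geq t/4$. Your remark that \v{S}id\'ak's lemma already suffices (since all events are symmetric slabs) is a nice observation the paper does not make.
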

    \begin{proof}
        The claim follows immediately from the Gaussian correlation inequality and the fact that the one-dimensional standard Gaussian density in $[-1,1]$ is at least $(2\pi e)^{-1/2} \geq 1/8$.
    \end{proof}

\section*{Acknowledgments}
We would like to thank Nina Amenta, Jes\'us De Loera, Shuyang Ling, Naoki Saito and James Sharpnack for helpful discussions. 

\bibliographystyle{abbrv}
\bibliography{ref}

\end{document}